\lstdefinestyle{mystyle}{
    commentstyle=\color{OliveGreen},
    keywordstyle=\color{BurntOrange},
    numberstyle=\tiny\color{black!60},
    stringstyle=\color{MidnightBlue},
    basicstyle=\ttfamily,
    breakatwhitespace=false,
    breaklines=true,
    captionpos=b,
    keepspaces=true,
    numbers=left,
    numbersep=5pt,
    showspaces=false,
    showstringspaces=false,
    showtabs=false,
    tabsize=2
}
\tikzset{
    -Latex,auto,node distance =1 cm and 1 cm,semithick,
    state/.style ={circle, draw, minimum width = 0.7 cm},
    detstate/.style ={rectangle, draw, minimum width = 0.7 cm, minimum height = 0.7 cm},
    point/.style = {circle, draw, inner sep=0.04cm,fill,node contents={}},
    bidirected/.style={Latex-Latex,dashed},
    el/.style = {inner sep=2pt, align=left, sloped}
}
\def\adl@drawiv#1#2#3{%
        \hskip.5\tabcolsep
        \xleaders#3{#2.5\@tempdimb #1{1}#2.5\@tempdimb}%
                #2\z@ plus1fil minus1fil\relax
        \hskip.5\tabcolsep}
\newcommand{\cdashlinelr}[1]{%
  \noalign{\vskip\aboverulesep
           \global\let\@dashdrawstore\adl@draw
           \global\let\adl@draw\adl@drawiv}
  \cdashline{#1}
  \noalign{\global\let\adl@draw\@dashdrawstore
           \vskip\belowrulesep}}
\DeclareRobustCommand{\mb}[1]{\ensuremath{\boldsymbol{\mathbf{#1}}}}
\DeclareMathOperator*{\argmin}{arg\,min}
\renewcommand{\mid}{~\vert~}
\newcommand{\mbu}{\mb{u}}
\newcommand{\mbv}{\mb{v}}
\newcommand{\mbw}{\mb{w}}
\newcommand{\mbx}{\mb{x}}
\newcommand{\mby}{\mb{y}}
\newcommand{\mbz}{\mb{z}}
\newcommand{\mbI}{\mb{I}}
\newcommand{\mbS}{\mb{S}}
\newcommand{\mbU}{\mb{U}}
\newcommand{\mbV}{\mb{V}}
\newcommand{\mbX}{\mb{X}}
\newcommand{\mbY}{\mb{Y}}
\newcommand{\mbdelta}{\mb{\delta}}
\newcommand{\cF}{\mathcal{F}}
\newcommand{\cN}{\mathcal{N}}
\newcommand{\E}{\mathbb{E}}
\newcommand{\g}{\mid}
\newtheorem{thmdef}{Definition}
\newtheorem{lemma}{Lemma}
\newtheorem{corr}{Corollary}
\newtheorem{thm}{Theorem}
\newtheorem*{lemma*}{Lemma}
\crefname{lemma}{lemma}{lemmas}
\crefname{prop}{proposition}{propositions}
\newcommand{\indep}{\rotatebox[origin=c]{90}{$\models$}}
\newcommand{\ind}{\mathbf{1}}
\newcommand{\ptr}{{p_{tr}}}
\newcommand{\pte}{{p_{te}}}
\newcommand{\ftheta}{{f_\theta}}
\newacronym{KL}{kl}{Kullback-Leibler}
\newacronym{ELBO}{elbo}{\emph{evidence lower bound}}
\newacronym{POPELBO}{pop-elbo}{\emph{population evidence lower bound}}
\newacronym{SVI}{svi}{stochastic variational inference}
\newacronym{BUMPVI}{bump-vi}{bumping variational inference}
\newacronym{GMM}{gmm}{Gaussian mixture model}
\newacronym{LDA}{lda}{latent Dirichlet allocation}
\newacronym{SUTVA}{sutva}{stable unit treatment value assumption}
\newacronym{KSD}{ksd}{{kernelized Stein discrepancy}}
\newacronym{KCC-SD}{kcc-sd}{kernelized complete conditional Stein discrepancy}
\newacronym{OPVI}{opvi}{operator variational inference}
\newacronym{SVGD}{svgd}{Stein variational gradient descent}
\newacronym{erm}{erm}{Empirical Risk minimization}
\newacronym{nurd}{NuRD}{Nuisance-Randomized Distillation}
\newacronym{jtt}{jtt}{Just Train Twice}
\newacronym{lff}{lff}{Learning from Failure}
\newacronym{nli}{nli}{natural language inference}
\newacronym{lr}{lr}{learning rate}
\newacronym{wd}{wd}{weight decay}
\newacronym{pr}{pr}{patch randomization}
\newacronym{nr}{nr}{n-gram randomization}
\newacronym{ood}{ood}{out-of distribution}
\newacronym{sd}{sd}{spectral decoupling}
\newacronym{ntk}{ntk}{neural tangent kernel}
\newacronym{cnc}{cnc}{Correct-n-Contrast}
\newacronym{dfr}{dfr}{Deep Feature Reweighting}
\newcommand{\sms}{shortcut-mitigating methods}
\newcommand{\sigdamp}{$\sigma$-damp}
\newcommand{\stitch}{$\sigma$-stitch}
\newacronym{vdm}{marg-mech}{margin trading mechanism}
\newacronym{dgp}{dgp}{data generating process}
\newacronym{cc}{marg-ctrl}{margin control}
\newacronym{sm}{sm}{shortcut-mitigating method}
\newacronym{gd}{gd}{gradient descent}
\newacronym{grad-opt}{grad-opt}{gradient-based optimization}
\newcommand{\derm}{default-\textsc{erm}}
\newcommand{\Derm}{Default-\textsc{erm}}
\newacronym{mech}{marg-mech}{the margin trading mechanism}
\newcommand{\keypoint}[1]{\textbf{#1}}
\title{
\textbf{Don’t blame Dataset Shift! \\ Shortcut Learning due to Gradients and Cross Entropy}
  \vspace{20pt}
}
\author{
 Aahlad Puli\textsuperscript{1}\thanks{\textsuperscript{1}Corresponding email: \texttt{aahlad@nyu.edu}.}
  \hspace{15pt} Lily Zhang \textsuperscript{2} \hspace{15pt} Yoav Wald \textsuperscript{2}\hspace{15pt} 
  Rajesh Ranganath\textsuperscript{1,2,3}
     \\\\
     \hspace{-10pt} 
     \textsuperscript{1}Department of Computer Science, New York University  \\
     \hspace{-10pt} 
     \textsuperscript{2}Center for Data Science, New York University \\
     \hspace{-10pt} 
     \textsuperscript{3}Department of Population Health, Langone Health, New York University
}
\date{}
\begin{document}

\maketitle

\begin{abstract}
Common explanations for shortcut learning assume that the shortcut improves prediction under the training distribution but not in the test distribution.
Thus, models trained via the typical gradient-based optimization of cross-entropy, which we call \derm{}, utilize the shortcut.
However, even when the stable feature determines the label in the training distribution and the shortcut does not provide any additional information, like in perception tasks, \derm{} still exhibits shortcut learning.
Why are such solutions preferred when the loss for \derm{} can be driven to zero using the stable feature alone? 
By studying a linear perception task, we show that \derm{}'s preference for maximizing the margin 
leads to models that depend more on the shortcut than the stable feature, even without overparameterization.
This insight suggests that \derm{}’s implicit inductive bias towards max-margin is unsuitable for perception tasks.
Instead, we develop an inductive bias toward uniform margins and show that this bias guarantees dependence only on the perfect stable feature in the linear perception task.
We develop loss functions that encourage uniform-margin solutions, called \gls{cc}.
\Gls{cc} mitigates shortcut learning on a variety of vision and language tasks, showing that better inductive biases can remove the need for expensive two-stage \sms{} in perception tasks.
\end{abstract}

\section{Introduction}\label{sec:intro}
Shortcut learning is a phenomenon where a model learns to base its predictions on an unstable correlation, or \textit{shortcut}, that does not hold across data distributions collected at different times and/or places \citep{geirhos2020shortcut}.
A model that learns shortcuts can perform worse than random guessing in settings where the label's relationship with the shortcut feature changes \citep{pmlr-v139-koh21a,puli2022outofdistribution}.
Such drops in performance do not occur if the model depends on features whose relationship with the label does not change across settings; these are \emph{stable} features.

Shortcut learning is well studied in cases where models that use both shortcut and stable features achieve lower loss than models that only use the stable feature \citep{arjovsky2019invariant,puli2022outofdistribution,geirhos2020shortcut}.
These works consider cases where the Bayes-optimal classifier --- the training conditional distribution of the label given the covariates --- depends on both stable and shortcut features.
In such cases, shortcut learning  occurs as the Bayes-optimal predictor is the target of standard supervised learning algorithms such as the one that minimizes the log-loss via \gls{gd}, which we call \derm{}.

However, in many machine learning tasks, the stable feature perfectly predicts the label, i.e. a \textit{perfect stable feature}.
For example, in task of predicting hair color from images of celebrity faces in the CelebA dataset \citep{sagawa2019distributionally}, the color of the hair in the image determines the label. This task is a perception task.
In such classification tasks, the label is independent of the shortcut feature given the stable feature, and  the Bayes-optimal predictor under the training distribution only depends on the stable feature.
\Derm{} can learn this Bayes-optimal classifier which, by depending solely on the stable feature, also generalizes outside the training distribution.
But in practice, \derm{} run on finite data yields models that depend on the shortcut and thus perform worse than chance outside the training distribution~\citep{sagawa2019distributionally,liu2021just,pmlr-v162-zhang22z}.
The question is, why does \derm{} prefer models that exploit the shortcut even when a model can achieve zero loss using the stable feature alone?

To understand preferences toward shortcuts, we study \derm{} on a linear perception task with a stable feature that determines the label and a shortcut feature that does not.
The perfect linear stable feature means that data is linearly separable. This separability means that \derm{-trained} linear models classify in the same way as the minimum $\ell_2$-norm solution that has all margins greater than $1$; the latter is commonly called max-margin classification \citep{soudry2018implicit}.
We prove that \derm{}'s implicit inductive bias toward the max-margin solution is harmful in that \derm{-trained} linear models depend more on the shortcut than the stable feature.
{In fact, such dependence on the shortcut occurs even in the setting with fewer parameters in the linear model than data points, i.e. without overparameterization.}
These observations suggest that a max-margin inductive bias is unsuitable for perception tasks.

Next, we study inductive biases more suitable for perception tasks with perfect stable features.
We first observe that predicting with the perfect stable feature alone achieves uniform margins on all samples.
Formally, if the stable feature $s(\mbx)$ determines the label $\mby$ via a function $d$, $\mby= d\circ s(\mbx)$, one can achieve any positive $b$ as the margin on all samples simultaneously by predicting with $b \cdot d\circ s(\mbx)$.
We show that in the same setting
without overparameterization
where max-margin classification leads to shortcut learning,
models that classify with uniform margins depend only on the stable feature.

Building on these observations, we identify alternative loss functions that are inductively biased toward uniform margins, which we call \glsreset{cc}\gls{cc}.
We empirically demonstrate that
\gls{cc} mitigates shortcut learning on multiple vision and language tasks without the use of annotations of the shortcut feature in training.
Further, \Gls{cc} performs on par or better than the more expensive two-stage \sms{} \citep{liu2021just,pmlr-v162-zhang22z}.
We then introduce a more challenging setting where both training and validation shortcut annotations are unavailable, called the nuisance-free setting.
In the nuisance-free setting, \gls{cc} \emph{always outperforms} \derm{} and the two-stage \sms{}.
These empirical results suggest that simply incorporating inductive biases more suitable for perception tasks is sufficient to mitigate shortcuts.

\section{Shortcut learning in perception tasks due to maximizing margins}
\label{sec:method}

\paragraph{Setup.} We use $\mby, \mbz, \mbx$ to denote the label, the shortcut feature, and the covariates respectively.
We let the training and test distributions ($\ptr, \pte$) be members of a family of distributions indexed by $\rho$, $\cF=\{p_\rho(\mby, \mbz, \mbx)\}_{\rho}$, such that the shortcut-label relationship $p_\rho(\mbz, \mby)$ changes over the family.
Many common tasks in the spurious correlations literature have stable features $s(\mbx)$ that are perfect, meaning that the label is a deterministic function $d$ of the stable feature: $\mby=d\circ s(\mbx)$.
For example, in the Waterbirds task the bird's body determines the label and in the CelebA task, hair color determines the label \citep{sagawa2019distributionally}.
As $s(\mbx)$ determines the label, it holds that $\mby \indep_{p_\rho} (\mbx, \mbz) \g s(\mbx)$.
Then, the optimal predictor on the training distribution is optimal on all distributions in the family $\cF$, regardless of the shortcut because
$\,\, \ptr(\mby \g \mbx) = 
\ptr(\mby \g s(\mbx)) = 
\pte(\mby \g s(\mbx))=
\pte(\mby \g \mbx).$ 

The most common procedure to train predictive models to approximate $\ptr( \mby\g \mbx )$ is gradient-based optimization of cross-entropy (also called log-loss); we call this \derm{}.
\Derm{} targets the Bayes-optimal predictor of the training distribution which, in tasks with perfect stable features, also performs optimally under the test distribution.
However, despite targeting the predictor that does not depend on the shortcut, models built with \derm{} still rely on shortcut features that are often less predictive of the label and are unstable, i.e. vary across distributions \citep{geirhos2020shortcut,puli2022outofdistribution}.
We study \derm{'s} preference for shortcuts in a \gls{dgp} where both the shortcut and the perfect stable feature are linear functions of the covariates.

\subsection{Shortcut learning in linear perception tasks}

Let $\textrm{Rad}$ be the uniform distribution over $\{1,-1\}$, $\cN$ be the normal distribution, $d$ be the dimension of $\mbx$, and $\rho\in (0,1),B>1$ be scalar constants.
The \gls{dgp} for $p_\rho(\mby, \mbz, \mbx)$ is:
\begin{align}
\label{eq:sim-example}
    \mby \sim \textrm{Rad}, 
    \quad
    \mbz 
    \sim \begin{cases}
            p_\rho(\mbz = y \g \mby = y) = \rho \\
            p_\rho(\mbz = -y \g \mby = y) = (1 - \rho) \\
            \end{cases},
\quad \mbdelta  \sim \cN(0, \mathbf{I}^{d-2}),
    \quad 
    \mbx 
    = \left[B*\mbz,\mby,\mbdelta \right].
\end{align}

This \gls{dgp} is set up to mirror the empirical evidence in the literature showing that shortcut features are typically learned first \citep{sagawa2019distributionally}.
The first dimension of $\mbx$, i.e. $\mbx_1$, is a shortcut that is correlated with $\mby$ according to $\rho$.
The factor $B$ in $\mbx_1$ scales up the gradients for parameters that interact with $\mbx_1$ in predictions.
For large enough $B$, model dependence on the shortcut feature during \derm{} goes up faster than the stable feature \citep{idrissi2021simple}.

\begin{wrapfigure}[28]{R}{0.55\textwidth}
\small
\vspace{-22pt}
\centering
\hspace{-8pt}
   \begin{subfigure}[b]{0.57\textwidth}
\includegraphics[width=.99\textwidth]{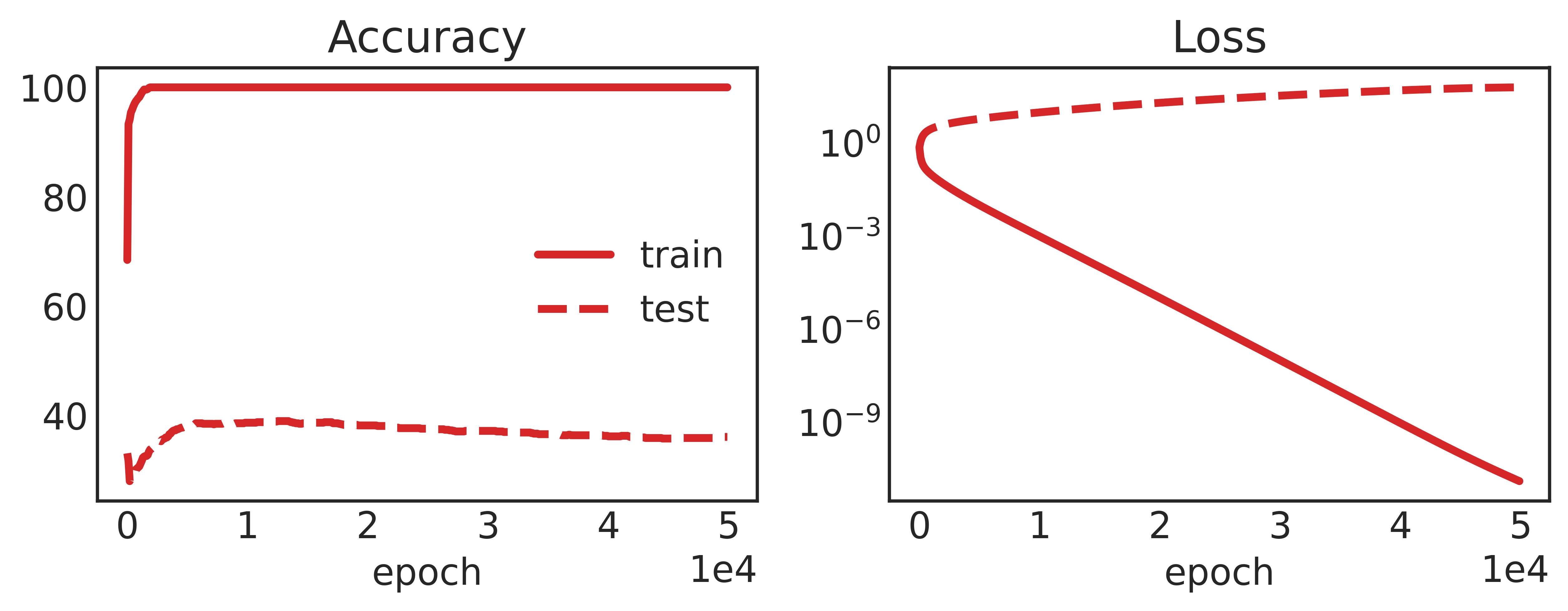}
\vspace{-5pt}
         \caption{
Average accuracy and loss curves.
}
         \label{fig:lin-sim-example}
     \end{subfigure}
\begin{subfigure}[b]{0.57\textwidth}
\vspace{5pt}
\includegraphics[width=\textwidth]{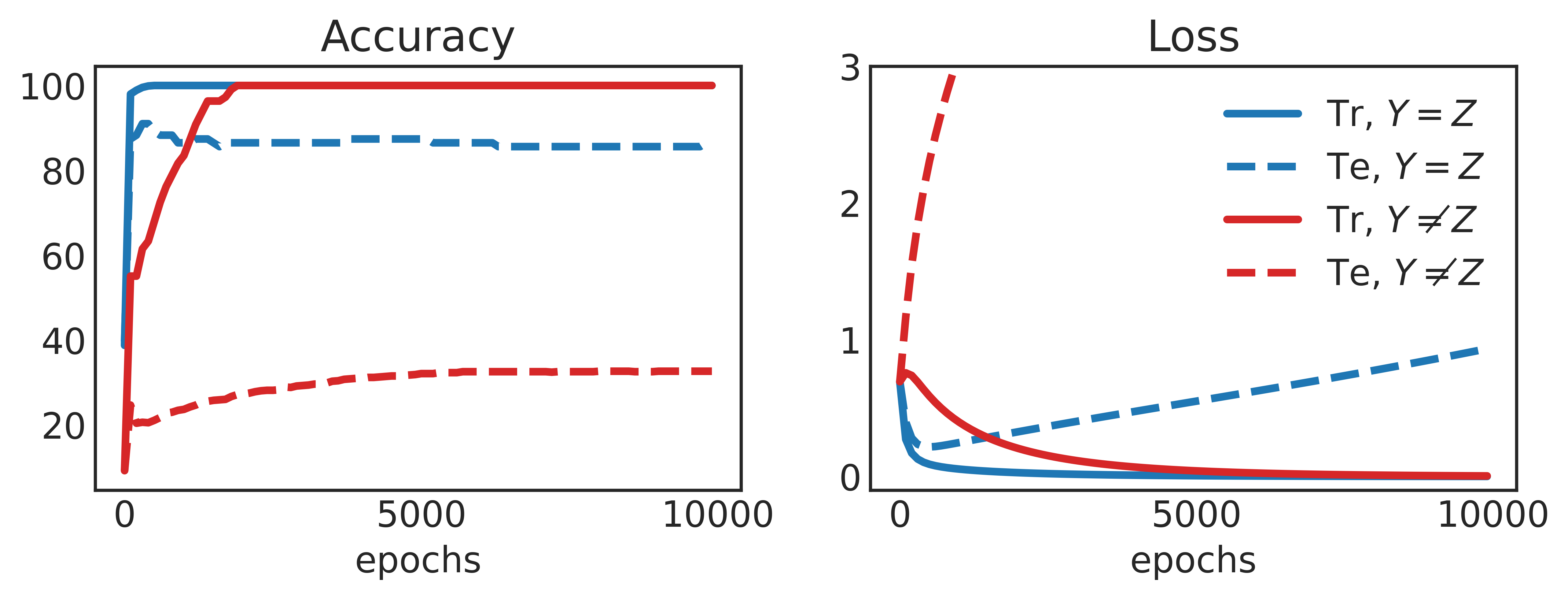}
\vspace{-10pt}
     \caption{Accuracy and loss on shortcut and leftover groups.}
\label{fig:lin-sim-example-pergroup}
 \end{subfigure}
\vspace{-15pt}
\caption{
\small
Accuracy and loss curves for training a linear model with \derm{} on $1000$ training samples from $p_{0.9}$, with $B=10, d=300$ (see \cref{eq:sim-example}), and testing on $p_{0.1}$.
\textbf{(a)} The model achieves $100\%$ train accuracy but $<40\%$ test accuracy. \textbf{(b)} The learned model achieves high test accuracy ($\approx 90\%$) on the shortcut group and low test accuracy on the leftover group ($\approx 30\%$). Models that depend more on the stable feature than on the shortcut, achieve at least $50\%$ accuracy on both the shortcut and leftover groups.
Hence the learned model exploits the shortcut to classify the shortcut group and overfits to the leftover group. 
}
	\label{fig:lin-erm-failure}
 \vspace{-35pt}
\end{wrapfigure}

The training distribution is $\ptr=p_{0.9}$ and the test distribution is one where the shortcut's relationship with the label is flipped $\pte=p_{0.1}$.
Models achieve worse than random test accuracy ($50\%$) if they exploit the training shortcut relationship and the predicted class flips when the shortcut feature flips.
We train with \derm{} which uses log-loss:  on a data point $(\mbx, \mby)$ the log-loss is
\begin{align*}
\ell_{log}(\mby\ftheta(\mbx)) = \log \left[1 + \exp(-\mby \ftheta(\mbx))\right].
\end{align*}
With $d=300$ and $B=10$, we train a linear model on $1000$ samples from the training distribution $p_{\rho=0.9}$, and evaluate on $1000$ samples from $p_{\rho=0.1}$.

\paragraph{Observations.}
\Cref{fig:lin-sim-example} shows that when trained with \derm{}, the linear model does not do better than chance ($<50\%$) on the test data even after $50,000$ epochs.
So, even in the presence of the perfect feature $\mbx_2$, the model relies on other features like the shortcut $\mbx_1$. 
Since the final training loss is very small, on the order of $10^{-9}$, this result is not due to optimization being stuck in a local minima with high loss.
\keypoint{These observations indicate that, in the linear setting, gradient-based optimization with log-loss prefers models that depend more on the shortcut than the perfect stable feature.}

To better understand this preference we focus on the errors in specific groups in the data.
Consider the classifier that only uses the shortcut $\mbz$ and makes the Bayes-optimal prediction w.r.t $\ptr$: $\mathrm{arg}\max_{y}{\ptr(\mby=y \g \mbz)}$.
We call instances that are classified correctly by this model the \textit{shortcut} group, and the rest the \textit{leftover} group.
We use these terms for instances in the training set as well as the test set.
In this experiment $\mby$ is positively correlated with $\mbz$, hence the shortcut group consists of all instances with $\mby^i=\mbz^i$ and the leftover group of those with $\mby^i\neq\mbz^i$.

\Cref{fig:lin-sim-example-pergroup} gives accuracy and loss curves on the shortcut and leftover groups for the first $10000$ epochs.
The test accuracy for the shortcut group hits $90\%$ while the leftover group test accuracy is $<40\%$, meaning that the model exploits the shortcuts.
Even though a model that relies solely on the shortcut misclassifies the leftover group, we see that the training loss of the learned model on this group approaches $0$. 
The model drives down training loss in the leftover group by depending on  noise, which results in larger test loss in the leftover group than the shortcut group.
\keypoint{Thus, \cref{fig:lin-sim-example-pergroup} demonstrates that the \derm{-trained} model classifies the training shortcut group by using the shortcut feature while overfitting to the training leftover group.}

Shortcut dependence like in \cref{fig:lin-erm-failure} occurs even with $\ell_2$-regularization and when training neural networks; see \cref{appsec:l2-reg} and \cref{cc-on-nn} respectively.
Next, we analyze the failure mode in \cref{fig:lin-erm-failure}, showing that the shortcut dependence is due to \derm{}'s implicit bias to learn the max-margin classifier.
Next, we study the failure mode in \cref{fig:lin-erm-failure} theoretically, showing that the shortcut dependence is due to \derm{}'s inductive bias toward learning the max-margin classifier.

\paragraph{Max-margin classifiers depend more on the the shortcut than the stable feature.}

We consider training a linear model $f_\theta(\mbx) = \mbw^\top \mbx$ where $\mbw = [\mbw_z, \mbw_y, \mbw_e]$ with \derm{}.
Data from \cref{eq:sim-example} is always linearly separable due to the perfect stable feature, but many hyperplanes that separate the two classes exist.
When a linear model is trained with \derm{} on linearly separable data, it achieves zero training loss and converges to the direction of a minimum $\ell_2$-norm solution that achieves a margin of at least $1$ on all samples \citep{soudry2018implicit,wang2021implicit,wang2022does}; this is called the max-margin solution. 
We now show that for a small enough leftover group, large enough scaling factor $B$ and dimension $d$ of the covariates, max-margin solutions depend more on the shortcut feature than the stable feature:
\newcommand{\scalingpropformalmain}{
Let $\mbw^*$ be the max-margin predictor
 on $n$ training samples from \cref{eq:sim-example} 
 with a leftover group of size $k$.
There exist constants $C_1, C_2, N_0 > 0$ such that
\begin{align}
\forall \, n  > N_0,  \qquad\quad
 \forall \,\, \text{ integers } k \in \left(0,\frac{n}{10}\right) , \qquad\quad 
 \forall \,\, d \geq C_1 k\log (3n),
    \qquad\quad 
\forall \,\, B > C_2 \sqrt{\frac{d}{k}},
\end{align}
with probability at least $1-\nicefrac{1}{3n}\,\,$ over draws of the training data, it holds that {$\,\, {{B \mbw_z^*} > {\mbw_y^*}}$}.}
\begin{thm}\label{thm:bad-case}
\scalingpropformalmain{}
\end{thm}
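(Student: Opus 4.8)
I work directly with the quadratic program defining the max-margin solution $\mbw^*=[\mbw_z^*,\mbw_y^*,\mbw_e^*]$ --- the minimizer of $\|\mbw\|^2$ subject to $\mby^i\,\mbw^\top\mbx^i\ge1$ for all $i$ --- and exploit the block structure of \cref{eq:sim-example}. Put $u:=B\mbw_z^*$, $v:=\mbw_y^*$, and $\mbg^i:=\mby^i\mbdelta^i$; since $\mbdelta^i$ is independent of $\mby^i$ and $\cN(0,\mathbf{I}^{d-2})$ is symmetric, the $\{\mbg^i\}$ are i.i.d.\ $\cN(0,\mathbf{I}^{d-2})$, and conditioning on the leftover group having size $k$ does not change this. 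In these coordinates $\|\mbw\|^2=u^2/B^2+v^2+\|\mbw_e\|^2$, the margin of $\mbw$ at a shortcut-group point $i\in S$ ($|S|=n-k$) equals $u+v+\langle\mbw_e,\mbg^i\rangle$, and at a leftover-group point $i\in L$ ($|L|=k$) it equals $-u+v+\langle\mbw_e,\mbg^i\rangle$. So $\mbw^*$ minimizes $u^2/B^2+v^2+\|\mbw_e\|^2$ over all $(u,v,\mbw_e)$ with every margin $\ge1$, and I must show $u>v$; no appeal to the KKT/representer form of $\mbw^*$ will be needed.

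\textbf{Step 1: $\|\mbw^*\|^2$ is small.} I exhibit one cheap feasible point, which memorizes the leftover group along noise directions. With $G_L:=(\langle\mbg^i,\mbg^j\rangle)_{i,j\in L}$, consider the point $\mbw_y=0$, $B\mbw_z=u_0:=1+o(1)$, and $\mbw_e:=\sum_{i\in L}\beta_i\mbg^i$ where $\boldsymbol\beta:=a\,G_L^{-1}\mathbf 1$, so that $\langle\mbw_e,\mbg^i\rangle=a$ for every $i\in L$; picking $a:=2+o(1)$ makes every leftover constraint $-u_0+a\ge1$ hold, and every shortcut constraint $u_0+\langle\mbw_e,\mbg^j\rangle\ge1$ hold provided $\max_{j\in S}|\langle\mbw_e,\mbg^j\rangle|$ is $o(1)$. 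Standard Gaussian concentration (using $d\ge C_1k\log(3n)$ with $C_1$ large) gives, with failure probability $\le\tfrac1{9n}$, $\|G_L-(d-2)\mathbf{I}_k\|_{\mathrm{op}}\le d/100$, hence $\|\mbw_e\|^2=a^2\mathbf 1^\top G_L^{-1}\mathbf 1\le(1+o(1))\,a^2k/d$; and since each $\mbg^j$, $j\in S$, is independent of $\{\mbg^i\}_{i\in L}$ and thus of $\mbw_e$, a union bound over $S$ yields $\max_{j\in S}|\langle\mbw_e,\mbg^j\rangle|\le O(\|\mbw_e\|\sqrt{\log n})=o(1)$, as required, again with failure probability $\le\tfrac1{9n}$. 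Since $\mbw^*$ is the minimum-norm feasible point, using also $B>C_2\sqrt{d/k}$,
\[
\|\mbw^*\|^2 \;\le\; \frac{u_0^2}{B^2}+\|\mbw_e\|^2 \;\le\; \frac1{B^2}+\frac{C_0\,k}{d} \;=:\;\epsilon_0
\]
for an absolute constant $C_0$ (morally $C_0=4$), and $\epsilon_0\le (C_0+C_2^{-2})k/d$ is small, of order $1/(C_1\log 3n)$.

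\textbf{Step 2: a contradiction from $u\le v$.} Suppose $u\le v$. From $\|\mbw^*\|^2\ge v^2$ and Step 1, $v\le\sqrt{\epsilon_0}$, so $u+v\le2v\le2\sqrt{\epsilon_0}$ (this holds even if $v<0$). Then every shortcut constraint forces $\langle\mbw_e^*,\mbg^i\rangle\ge1-(u+v)\ge1-2\sqrt{\epsilon_0}=:c_0>0$ for all $i\in S$. Summing over $i\in S$ and using Cauchy--Schwarz,
\[
\|\mbw_e^*\|\;\left\|\sum_{i\in S}\mbg^i\right\| \;\ge\; \left\langle\mbw_e^*,\ \sum_{i\in S}\mbg^i\right\rangle \;\ge\; (n-k)\,c_0 .
\]
On an event of probability $\ge1-\tfrac1{9n}$ (chi-squared concentration, as $\|\sum_{i\in S}\mbg^i\|^2\sim(n-k)\chi^2_{d-2}$ in law) one has $\|\sum_{i\in S}\mbg^i\|^2\le1.1(n-k)d$, hence $\|\mbw_e^*\|^2\ge(n-k)c_0^2/(1.1\,d)$. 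Combining with $\|\mbw^*\|^2\ge\|\mbw_e^*\|^2$, Step 1, and $d/B^2<k/C_2^2$,
\[
(n-k)\,c_0^2 \;\le\; 1.1\,d\,\epsilon_0 \;\le\; 1.1\,(C_0+C_2^{-2})\,k .
\]
Since $k<n/10$, we have $n-k>9k$, so $c_0^2<\tfrac{1.1}{9}(C_0+C_2^{-2})$, which is strictly $<1$ once $C_2$ is bounded below by a constant (e.g.\ $C_2\ge1$ and $C_0=4$ give $c_0^2<0.62$); but $c_0=1-2\sqrt{\epsilon_0}\to1$ as $C_1\to\infty$. Picking $C_1$ large enough that $c_0\ge0.9$ yields a contradiction, so $u>v$, i.e.\ $B\mbw_z^*>\mbw_y^*$. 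The three probabilistic events hold simultaneously with probability $\ge1-\tfrac1{3n}$ by a union bound, and $N_0$ is chosen so that $n$ is large enough for the concentration statements.

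\textbf{Main obstacle.} The delicate part is constant-tracking so the argument tolerates $k$ as large as $n/10$: Step 1 produces $\|\mbw^*\|^2\approx 4k/d$, while Step 2 needs $\|\mbw_e^*\|^2\gtrsim(n-k)/d$, so everything reduces to the inequality ``$4k\lesssim n-k$'' --- true with slack for $k<n/10$ --- provided the Gaussian-concentration losses (the $1.1$, the $G_L$ distortion, the cross terms $\langle\mbw_e,\mbg^j\rangle$) and the $1/B^2$ term do not erode that slack, which is exactly what forces $C_1$ large and $C_2$ bounded below. A secondary subtlety, used implicitly above, is the independence structure: the memorizing vector $\mbw_e$ in Step 1 depends only on the leftover Gaussians, so its interference with the shortcut Gaussians is controlled by conditioning.
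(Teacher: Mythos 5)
Your argument is correct (modulo the standard concentration details you defer to), but it reaches the conclusion by a genuinely different route than the paper. The paper splits the max-margin problem into two constrained QPs — one restricted to $\mbw_y \geq B\mbw_z$, one to $\mbw_y < B\mbw_z$ — and compares them by \emph{Lagrangian duality}: a feasible dual candidate (Lemmas 6--7) lower-bounds the norm of any ``stable-leaning'' solution by roughly $\tfrac{20k}{3(1+C)^2 d}$, while an explicit primal candidate (Lemma 8) upper-bounds the ``shortcut-leaning'' optimum by roughly $\tfrac{6k}{d}\cdot(\cdot)^2$, and the proof reduces to a numerical comparison of these two $k/d$-scale quantities. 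You avoid duality entirely: your Step 1 plays the same role as the paper's Lemma 8 (your interpolating choice $\mbbeta = aG_L^{-1}\mathbf 1$ versus the paper's equal-weight $\gamma\sum_{i}\mby_i\mbdelta_i$ are interchangeable, both yielding $\|\mbw_e\|^2\approx 4k/d$), but your Step 2 replaces the dual lower bound with a direct contradiction — if $B\mbw_z^*\le\mbw_y^*$ then $\mbw_y^*\le\sqrt{\epsilon_0}$, so all $n-k$ shortcut-group constraints must be met by the noise coordinates, and summing them with Cauchy--Schwarz forces $\|\mbw_e^*\|^2\gtrsim (n-k)/d$, exceeding the Step 1 budget since $n-k>9k$. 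This is more elementary (no dual derivation, no tuning of a dual candidate $(\lambda,\nu,M)$) and makes the mechanism more transparent: the $n-k$ shortcut constraints are what any classifier ignoring the shortcut must pay for. What the paper's route buys is a reusable, standalone lower bound on the norm of \emph{every} classifier in the stable half-space (Lemma 7), independent of the upper bound; your lower bound is entangled with the assumption $\|\mbw^*\|^2\le\epsilon_0$, which is fine for this theorem but less modular. Your constant accounting ($c_0^2\ge 0.81$ versus $\tfrac{1.1}{9}(C_0+C_2^{-2})\approx 0.67$ with $C_0\approx 4$, $C_2\ge 1$) leaves real slack at $k<n/{10}$, matching the paper's threshold.
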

\vspace{5pt}
The size of the leftover group $k$ concentrates around $(1-\rho)n$ because each sample falls in the leftover group with probability $(1-\rho)$.
Thus, for $\rho>0.9$, that is for a strong enough shortcut, the condition in \cref{thm:bad-case} that $k <  \nicefrac{n}{10}$ will hold with probability close to $1$; see \cref{appsec:intuition} for more details.

The proof is in \cref{appsec:scalingprop}.
The first bit of intuition is that using the shortcut can have lower norm because of the scaling factor $B$. 
Using the shortcut only, however, misclassifies the leftover group. The next bit of intuition is that using noise from the leftover group increases margins in one group at a rate that scales with the dimension $d$, while the cost in the margin for the other group only grows as $\sqrt{d}$. This trade-off in margins means the leftover group can be correctly classified using noise without incorrectly classifying the shortcut group. 
The theorem then leverages convex duality to show that this type of classifier that uses the shortcut and noise has smaller $\ell_2$-norm than any linear classifier that uses the stable feature more.

The way the margin trade-off in the proof works is by constructing a linear classifier whose weights on the noise features are a scaled sum of the product of the label and the noise vector in the leftover group: for a scalar $\gamma$, the weights 
${\mbw_{e}= \gamma \textstyle\sum_{i\in S_{\text{leftover}}} \mby_i \mbdelta_i}$.
The margin change on the $j$th training sample from using these weights is $\mby_j \mbw_e^\top \mbdelta_j$.
For samples in the shortcut group, the margin change looks like a sum of mean zero independent and identically distributed variables; the standard deviation of this sum grows as $\sqrt{d}$. 
For samples in the leftover group, the margin change is the sum of mean one random variables; this sum grows as $d$ and its standard deviation grows as $\sqrt{d}$. The difference in mean relative to the standard deviation is what provides the trade-off in margins.

We now discuss three implications of the theorem.

\keypoint{First, \cref{thm:bad-case} implies that the leftover group sees worse than random accuracy ($0.5$).}
To see this, note that for samples in the leftover group the margin $\mby (\mbw^*)^\top\mbx = \mbw_y^* - B \mbw_z^* + (\mbw_e^*)^\top \mby \mbdelta$ is a Gaussian random variable centered at a negative number $\mbw_y^* - B \mbw_z^*$.
Then, with $\Phi_e$ as the CDF of the zero-mean Gaussian random variable $(\mbw_e^*)^\top \mbdelta$, accuracy in the test leftover group is 
\[p(\mby (\mbw^*)^\top\mbx \geq 0 \g \mby \neq \mbz) = p[(\mbw_e^*)^\top \mbdelta > -(\mbw_y^* - B \mbw_z^*)] =  1 - \Phi_e(-(\mbw_y^* - B \mbw_z^*)) \leq 0.5.\]

Second, the leftover group in the training data is overfit in that the contribution of noise in prediction ($|(\mbw_e^*)^\top \mbdelta |$) is greater than the contribution from the stable and shortcut features.
Formally, in the training leftover group, $\mbw^*_y - B\mbw^*_z<0$.
Then, due to max-margin property, 
\[\mbw_y^* - B \mbw_z^* + (\mbw_e^*)^\top \mby_i \mbdelta_i> 1 \implies (\mbw_e^*)^\top \mby_i \mbdelta_i\geq 1 - (\mbw_y^* - B \mbw_z^*) > |\mbw_y^* - B \mbw_z^*|.\]

Third, many works point to overparameterization as one of the causes behind shortcut learning \citep{sagawa2019distributionally,nagarajan2020understanding,wald2022malign}, but in the setup in \cref{fig:lin-erm-failure}, the linear model has fewer parameters than samples in the training data.
In such cases with non-overparameterized linear models, the choice of \derm{} is typically not questioned, especially when a feature exists that linearly separates the data.
\keypoint{\Cref{corr:noverparam} formally shows shortcut learning for non-overparameterized linear models.
In words, \derm{} --- that is vanilla logistic regression trained with gradient-based optimization --- can yield models that rely more on the shortcut feature \textit{even without overparameterization}.}
\begin{corr}\label{corr:noverparam}
For all $n > N_0$ --- where the constant $N_0$ is from \cref{thm:bad-case} --- with scalar $\tau\in (0,1)$ such that the dimension of $\mbx$ is $d=\tau n < n$,   
for all integers $k < n \times  \min \left\{\frac{1}{10}, \frac{\tau}{C_1 \log 3n}\right\},$ 
a linear model trained via  \derm{} yields a predictor $\mbw^*$ such that
 $ {B \mbw_z^*} > {\mbw_y^*}$.
\end{corr}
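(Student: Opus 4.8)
The plan is to obtain the corollary as a direct specialization of \cref{thm:bad-case}, after first recording why the max-margin analysis there applies verbatim to the predictor actually returned by \derm{}.

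First I would note that the covariates from \cref{eq:sim-example} are linearly separable: the coordinate $\mbx_2=\mby$ already yields a separating hyperplane (with any positive margin after rescaling). Gradient-based optimization of the logistic loss on separable data converges in direction to the $\ell_2$ max-margin separator \citep{soudry2018implicit}, so the \derm{}-trained weights $\mbw^*$ are, up to a positive scalar, exactly the max-margin predictor analyzed in \cref{thm:bad-case}. Since the event $\{B\mbw_z^* > \mbw_y^*\}$ is invariant under multiplying $\mbw^*$ by a positive constant, it suffices to establish it for the max-margin predictor itself, which is what \cref{thm:bad-case} does.

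Next I would simply verify that the choice $d=\tau n$ together with the stated range of $k$ meets every hypothesis of \cref{thm:bad-case}. The condition $n>N_0$ is assumed. From $k < n\min\{\tfrac{1}{10},\tfrac{\tau}{C_1\log 3n}\}$ we get $k\le n/10$, hence $k\in(0,n/10)$; and we get $C_1 k\log(3n) < \tau n = d$, i.e. $d\ge C_1 k\log(3n)$. The remaining hypothesis $B>C_2\sqrt{d/k}$ is the same scaling requirement inherited from \cref{thm:bad-case} (here it reads $B>C_2\sqrt{\tau n/k}$), carried along as a standing assumption. Invoking \cref{thm:bad-case} then gives that, with probability at least $1-\tfrac{1}{3n}$ over the training draw, $B\mbw_z^* > \mbw_y^*$. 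Because $d=\tau n<n$, the linear model has strictly fewer parameters than training points — exactly the non-overparameterized regime claimed.

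There is essentially no hard step here; this is a bookkeeping corollary. The only points needing care are (i) explicitly invoking the implicit-bias convergence result so the statement about the max-margin predictor transfers to the output of \derm{}, using scale-invariance of the inequality, and (ii) the strict-versus-nonstrict inequality chasing when converting the bound on $k$ into $d\ge C_1 k\log(3n)$. For a fully self-contained statement one should also surface the condition on $B$, which \cref{thm:bad-case} requires and the corollary leaves implicit.
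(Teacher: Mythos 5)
Your proposal is correct and follows essentially the same route as the paper's proof: verify that $d=\tau n$ together with the stated bound on $k$ satisfies the hypotheses $k<n/10$ and $d\geq C_1 k\log(3n)$ of \cref{thm:bad-case}, then invoke the implicit bias of gradient descent toward the max-margin direction to transfer the conclusion to the \derm{}-trained predictor. Your added remarks on scale-invariance of the inequality and on the implicit condition on $B$ are correct refinements of the same bookkeeping argument, not a different approach.
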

If \derm{} produces models that suffer from shortcut learning even without overparameterization, its implicit inductive bias toward max-margin classification is inappropriate for perception tasks in the presence of shortcuts.
Next, we study inductive biases more suited to perception tasks.

\section{Toward inductive biases for perception tasks with shortcuts}
\label{sec:vdm_validate}
The previous section 
formalized how \derm{} solutions, due to the max-margin inductive bias,  rely on the shortcut and noise to minimize loss on training data even in the presence of a different zero-population-risk solution. 
Are there inductive biases more suitable for perception tasks?

Given a perfect stable feature $s(\mbx)$ for a perception task, in that 
for a function $d$ when $\mby = d \circ s(\mbx)$, one can achieve margin $b \in (0,\infty)$ uniformly on all samples by predicting with the stable $b \cdot d\circ s(\mbx)$.
In contrast, max-margin classifiers allow for disparate margins as long as the smallest margin crosses $1$, meaning that it does not impose uniform margins.
The cost of allowing disparate margins is the preference for shortcuts even without overparamterization (\cref{corr:noverparam}). 
In the same setting however, any uniform-margin classifier for the linear perception task (\cref{eq:sim-example}) relies only on the stable feature:
\newcommand{\bumpylossthm}{
Consider $n$ samples of training data from \gls{dgp} in \cref{eq:sim-example} with $d<n$.
Consider a linear classifier $f_\theta(\mbx)=\mbw^\top \mbx$ such that for all samples in the training data $\mby_i \mbw^\top \mbx_i = b$ for any $b\in (0,\infty)$. 
With probability 1  over draws of samples, $\mbw=[0, b, 0^{d-2}].$
}

\begin{thm}\label{thm:bumpylossopt}
\bumpylossthm{}
\end{thm}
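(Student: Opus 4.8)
The plan is to read the uniform-margin condition as an overdetermined linear system and argue it has a unique solution. Writing $\mbx_i = [B\mbz_i,\mby_i,\mbdelta_i]$ and $\mbw=[\mbw_z,\mbw_y,\mbw_e]$, and using $\mby_i^2=1$, the constraint $\mby_i\mbw^\top\mbx_i = b$ becomes $B(\mby_i\mbz_i)\,\mbw_z + \mbw_y + (\mby_i\mbdelta_i)^\top\mbw_e = b$. Hence every classifier with uniform margin $b$ solves $A\mbw = b\,\ind_n$, where $A\in\bbR^{n\times d}$ has $i$-th row $[\,B\mby_i\mbz_i,\ 1,\ (\mby_i\mbdelta_i)^\top\,]$. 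Since $\mbw^{0}:=[0,b,0^{d-2}]$ always satisfies the condition (it yields margin $b\mby_i^2=b$ on every sample), the statement reduces to showing that, with probability one over the draw of the samples, $A$ has full column rank $d$; then $A\mbw=b\ind_n$ has $\mbw^0$ as its unique solution, which is the claim.

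Because $d<n$, to establish column rank $d$ I would exhibit $d$ linearly independent rows of $A$. Pick one index $i_+$ with $\mby_{i_+}\mbz_{i_+}=1$ (shortcut group) and one $i_-$ with $\mby_{i_-}\mbz_{i_-}=-1$ (leftover group) --- both groups are nonempty with probability $1-\rho^n-(1-\rho)^n$, which is essentially $1$ in the regime of interest and is what the statement implicitly assumes --- together with any further $d-2$ indices $j_1,\dots,j_{d-2}$, and let $M\in\bbR^{d\times d}$ be the corresponding submatrix of $A$. Conditionally on the discrete variables $\{\mby_i,\mbz_i\}$, the entries of $M$ depend polynomially on the Gaussian noise vectors $\{\mbdelta_i\}$, which have a Lebesgue density, so $\det M$ is a polynomial in those coordinates and it suffices to check it is not identically zero. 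Setting $\mbdelta_{i_+}=\mbdelta_{i_-}=0$ and $\mbdelta_{j_k}=\mby_{j_k}e_k$ (so $\mby_{j_k}\mbdelta_{j_k}=e_k$, the $k$-th standard basis vector of $\bbR^{d-2}$) makes $M$ block lower-triangular once $i_+,i_-$ are taken as its first two rows: the $\mbw_e$-columns vanish on those two rows and form $I_{d-2}$ on the $j_k$ rows, while the top-left $2\times 2$ block is $\left(\begin{smallmatrix}B&1\\-B&1\end{smallmatrix}\right)$. Thus $\det M = 2B \neq 0$, so $\det M \neq 0$ almost surely, $A$ has column rank $d$ almost surely, and therefore $\mbw=\mbw^0$.

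I expect the computation itself to be routine; the one place needing care is the degenerate draws. If the shortcut or leftover group is empty, the first two columns of $A$ become collinear and the argument only forces $\mbw_e = 0$ (by a similar genericity argument), leaving $B\mbw_z+\mbw_y=b$ underdetermined --- so the clean conclusion genuinely needs the event that both groups are present, which is overwhelmingly likely and is what the statement intends. Beyond that, once the problem is cast as uniqueness for an overdetermined linear system, the only content is the rank computation, and the block-triangular choice of the $\mbdelta_i$'s dispatches it in one line.
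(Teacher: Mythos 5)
Your proof is correct, and it takes a meaningfully different (and cleaner) route than the paper's. Both arguments start from the same overdetermined system $X\mbw = b\ind$, but they diverge in how uniqueness is established. The paper isolates the noise block: it rewrites the constraints as $X_\delta\mbw_\delta = (b-w_y)\ind - w_z\, Y\cdot Z$ and argues by cases on the right-hand side --- if it has a nonzero entry, no solution exists almost surely (shown via a full-rank $(d-2)\times(d-2)$ Gaussian subsystem whose forced solution is then contradicted by a leftover row); if it is identically zero, the presence of both signs in $Y\cdot Z$ forces $w_z=0$, $w_y=b$, and full rank of $X_\delta$ forces $\mbw_\delta=0$. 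You instead prove that the entire $n\times d$ design matrix has full column rank almost surely, by exhibiting a $d\times d$ submatrix (one shortcut row, one leftover row, $d-2$ others) whose determinant is a polynomial in the Gaussian coordinates that is not identically zero, so uniqueness is immediate and $[0,b,0^{d-2}]$ is visibly the solution. This avoids the paper's somewhat delicate case split on a quantity that itself depends on the unknown solution, and compresses the genericity argument into one determinant evaluation. Both proofs require both groups to be nonempty: you flag this explicitly and correctly observe that the conclusion genuinely fails otherwise (only $\mbw_e=0$ is forced), whereas the paper assumes it implicitly when asserting that the entries of $Y\cdot Z$ are not all of the same sign; strictly that event has probability $1-\rho^n-(1-\rho)^n$ rather than $1$, a caveat shared by the paper's own statement rather than a defect of your argument.
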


\Cref{thm:bumpylossopt} shows that uniform-margin classifiers only depend on the stable feature, standing in contrast with max-margin classifiers which can depend on the shortcut feature (\cref{thm:bad-case}).
The proof is in \cref{appsec:bumpy-loss}.
\keypoint{Thus, inductive biases toward uniform margins are better suited for perception tasks.} 
Next, we identify several ways to encourage uniform margins.

\paragraph{\glsreset{cc}\Gls{cc}.}
To produce uniform margins with gradient-based optimization, we want the loss to be minimized at uniform-margin solutions and be gradient-optimizable.
We identify a variety of losses that satisfy these properties, and we call them \gls{cc} losses.
{\gls{cc} losses have the property that per-sample loss monotonically decreases for margins until a threshold then increases for margins beyond it.}
In turn, minimizing loss then encourages all margins to move to the threshold.

Mechanically, when models depend more on shortcuts than the stable feature during training, margins on samples in the shortcut group will be larger than those in the leftover group;
see the right panel in \cref{fig:lin-sim-example-pergroup} where the train loss in the shortcut group is lower than the leftover group indicating that the margins are smaller in the leftover group. This difference is margins is a consequence of the shortcut matching the label in one group and not the other, thus, encouraging the model to have similar margins across all samples pushes the model to depend less on the shortcut.
In contrast, vanilla log-loss can be driven to zero in a direction with disparate margins across the groups as long as the margins on all samples go to $\infty$.
We define  \gls{cc} losses for a model $\ftheta$ with the margin on a sample $(\mbx, \mby)$ defined as $\mby \ftheta(\mbx)$.

\begin{wrapfigure}[12]{R}{0.67\textwidth}
\vspace{-15pt}
\small
    \hspace{-10pt}
    \includegraphics[width=0.68\textwidth]{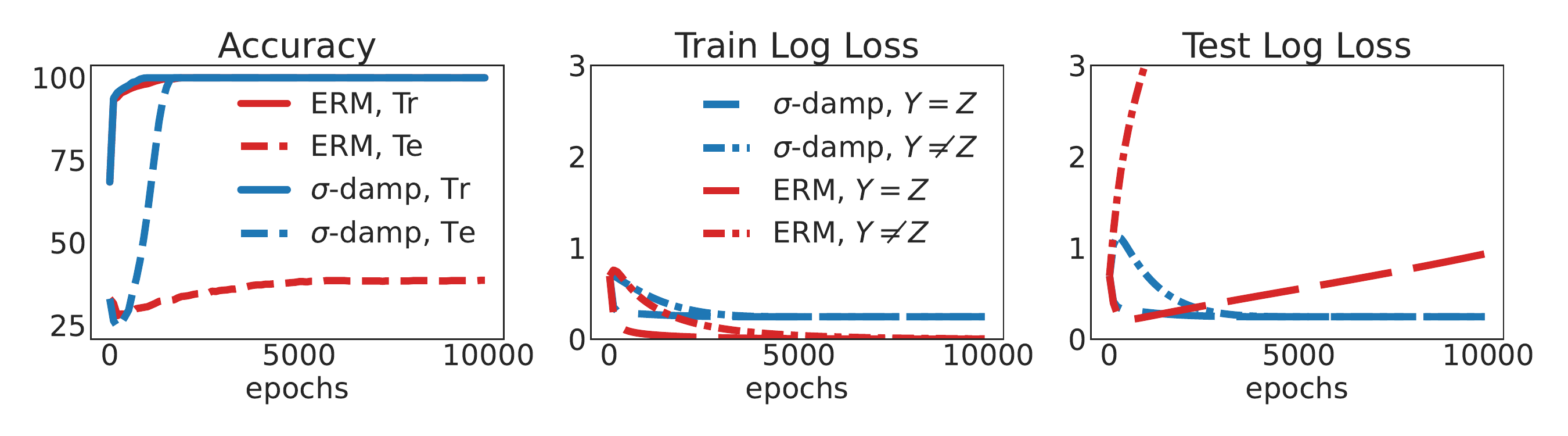}
    \vspace{-10pt}
    \caption{
    \small
Using $\sigma$-damped log-loss yields linear models that depend on the perfect stable feature to achieve near perfect test accuracy.
The middle panel shows that $\sigma$-damping maintains similar margins in the training shortcut and leftover groups unlike unconstrained log-loss, and the right panel shows \sigdamp{} achieves better leftover test-loss.
    }
    \label{fig:sigma-damp}
\vspace{-10pt}
\end{wrapfigure}

As the first \gls{cc} loss, we develop the  $\sigma$-damped log-loss: we evaluate log-loss on a margin multiplied by a monotonically decreasing function of the margin.
In turn, the input to the loss increases with the margin till a point and then decreases.
For a temperature $T$ and sigmoid function $\sigma$, the $\sigma$-damped loss modifies the model output $\ftheta$ and plugs it into log-loss:
\begin{align*}
\ell_{\text{\sigdamp{}}}(\mby,\ftheta) =
\ell_{\text{log}}\left(
        \mby 
        \left( 
        1 -\sigma\left(
        		\frac{\mby\ftheta}{T}
		        \right)
        \right)
        \ftheta
    \right)
\end{align*}

For large margin predictions  $\mby\ftheta > 0$, the term $ 
        1 -\sigma\left(\nicefrac{\mby\ftheta(\mbx)}{T}\right)$ damps down the input to log-loss.
The largest the input to $\ell_{\text{log}}$ can get is $0.278T$, found by setting the derivative to zero, thus lower bounding the loss.
As log-loss is a decreasing function of its input, the minimum of $\ell_{\text{\sigdamp{}}}$ occurs when the margin is $0.278T$ on all samples.
To demonstrate empirical advantage, we compare standard log-loss to $\sigma$-damped loss on \cref{eq:sim-example}; see \cref{fig:sigma-damp}.
The left panel of figure \cref{fig:sigma-damp} shows that test accuracy is better for \sigdamp.
The middle and right panels shows the effect of controlling margins in training, where losses on shortcut and leftover groups hover at the same value.

Second, we design the \stitch{} loss, which imitates log-loss when $\mby \ftheta(\mbx) < u$ and penalizes larger margins ($\mby\ftheta > u$) by negating the sign of $\mby \ftheta(\mbx)$:
\begin{align}\label{eq:stitch}
\begin{split}
\ell_{\text{\stitch}}  = \ell_{log}\left(\,\mathbf{1}[\mby\ftheta(\mbx) \leq u] \mby\ftheta(\mbx) \right.\,
  + \,\left.\mathbf{1}[\mby\ftheta(\mbx) > u] (2u-\mby\ftheta(\mbx))\,\right)
\end{split}
\end{align}

As the third \gls{cc} loss, we directly penalize large margins via a $\log$-penalty:
\begin{align}
	\ell_{\texttt{marg-log}} = \ell_{log}(\mby \ftheta(\mbx)) + \lambda \log\left(1 + |\ftheta(\mbx)|^2\right)
\end{align}

The fourth \gls{cc} loss controls margins by penalizing $|\ftheta(\mbx)|^2$:
\begin{align}
\ell_{\textsc{sd}} = \ell_{log}(\mby \ftheta(\mbx)) + \lambda |\ftheta(\mbx)|^2\label{eq:sd}
\end{align}
This last penalty was called \gls{sd} by \citet{pezeshki2021gradient}, who use it as a way to decouple learning dynamics in the \gls{ntk} regime.
Instead, from the lens of \gls{cc},  \gls{sd}  mitigates shortcuts in \cref{eq:sim-example} because it encourages uniform margins, even though \gls{sd} was originally derived from different principles, as we discuss in \cref{sec:related}.
In \cref{appsec:exps-synthetic-losses}, we plot all \gls{cc} losses and show that \gls{cc} improves over \derm{} on the linear perception task; see \cref{fig:lin-sigstitch,fig:lin-sd,fig:lin-sd-log}.
We also run \gls{cc} on a neural network and show that while \derm{} achieves test accuracy worse than random chance, \gls{cc} achieves $100\%$ test accuracy; see \cref{fig:sigstitch,fig:sigdamp,fig:sd-log,fig:sd} in \cref{cc-on-nn}.

\section{Vision and language experiments}\label{sec:experiments}

\newacronym{sd}{sd}{spectral decoupling}

We evaluate \gls{cc} on common datasets with shortcuts: Waterbirds, CelebA \citep{sagawa2019distributionally}, and Civilcomments \citep{pmlr-v139-koh21a}.
First, \gls{cc} always improves over \derm{.}
Then, we show that \gls{cc} performs similar to or better than two-stage \sms{} like \gls{jtt} \citep{liu2021just} and \gls{cnc} \citep{pmlr-v162-zhang22z} in traditional evaluation settings where group annotations are available in the validation data. 
Finally, we introduce a more challenging setting that only provides class labels in training and validation, called the \textbf{nuisance-free setting}.
In contrast to the traditional setting that always assumes validation group annotations, the nuisance-free setting does not provide group annotations in either training or in validation.
In the nuisance-free setting, \gls{cc} outperforms \gls{jtt} and \gls{cnc}, even though the latter are supposed to mitigate shortcuts without knowledge of the groups.

\paragraph{Datasets.}
We use the Waterbirds and CelebA datasets from \citet{sagawa2019distributionally} and the CivilComments dataset from \citet{borkan2019nuanced,pmlr-v139-koh21a}.
In Waterbirds, the task is to classify images of a waterbird or landbird, and the label is spuriously correlated with the image background consisting of land or water.
There are two types of birds and two types of background, leading to a total of 4 groups defined by values of $y,z$.
In CelebA \citep{liu2015deep,sagawa2019distributionally}, the task is to classify hair color of
celebrities as blond or not.
The gender of the celebrity is a shortcut for hair color.
There are two types of hair color and two genders in this dataset, leading to a total of 4 groups defined by values of $y,z$.
In CivilComments-WILDS \citep{borkan2019nuanced,pmlr-v139-koh21a}, the task is to classify whether an online
comment is toxic or non-toxic, and the label is spuriously correlated with mentions of certain demographic
identities.
There are $2$ labels and $8$ types of the shortcut features, leading to $16$ groups.

\begin{wraptable}[18]{R}{0.52\textwidth}
\small
\centering
\vspace{-10pt}
\begin{tabular}{cccc}
\toprule
 & CelebA & WB & Civil \\
 \midrule
 \acrshort{erm} &  $72.8\pm 9.4$ & $70.8\pm 2.4$ & $60.1\pm 0.4$  \\
\acrshort{cnc} &  ${81.1\pm 0.6}$ & $68.0\pm 1.8$ & ${68.8\pm 0.2}$  \\
\gls{jtt}
        &  $75.2\pm 4.6$ 
        & $71.7\pm 4.0$
        & ${69.9\pm 0.4}$  \\
\midrule
\texttt{marg-log}
     &  ${82.8 \pm 1.1}$ 
     & ${78.2 \pm 1.9}$ 
     & ${68.4\pm 1.8}$  \\
\sigdamp{} 
    &  ${79.4\pm 0.6}$ 
    & ${78.6\pm 1.1}$ 
    & ${69.6\pm 0.4}$  \\
\textsc{sd} 
    &  ${81.4\pm 2.5}$ 
    & ${80.5\pm 1.4}$ 
    & ${69.9\pm 1.1}$  \\
\stitch 
    &  ${81.1\pm 2.2}$ 
    & ${75.9\pm 3.4}$ 
    & ${67.8\pm 2.8}$  \\
\bottomrule
\end{tabular}
      \caption{\small
Mean and standard deviation of test worst-group accuracies over two seeds for \derm{}, \gls{jtt}, \gls{cnc}, \sigdamp{,} \stitch{}, \gls{sd}, and \texttt{marg-log}.
Every \gls{cc} method outperforms \derm{} on every dataset.
On Waterbirds, \gls{cc} outperforms \gls{jtt} and \gls{cnc}.
On CelebA, \gls{sd}, ${\texttt{marg-log}}$, and \stitch{} beat \gls{jtt} and achieve similar or better performance than \gls{cnc}.
On CivilComments, \sigdamp{} and \gls{sd} beat \gls{cnc} and achieve similar performance to \gls{jtt}.
}
      \label{tab:results}
\vspace{-10pt}
\end{wraptable}

\paragraph{Metrics, model selection, and hyperparameters.}
We report the worst-group test accuracy for each method.
The groups are defined based on the labels and shortcut features. The more a model depends on the shortcut, the worse the worst-group error.
Due to the label imbalance in all the datasets, we use variants of \sigdamp{}, \stitch{}, \textsc{marg-log}, and \gls{sd} with class-dependent hyperparameters; see \cref{appsec:training-details}.
For all methods, we use the standard Adam optimizer \citep{kingma2014adam} and let the learning rate and weight decay hyperparameters be tuned along with the method's hyperparameters.
We first report results for all methods using validation worst-group accuracy to select method and optimization hyperparameters and early stop.
For both \gls{jtt} and \gls{cnc}, 
this is the evaluation setting that is used in existing work \citep{liu2021just,idrissi2021simple,pmlr-v162-zhang22z}.
Finally, in the nuisance-free setting where no group annotations are available, we select hyperparameters using label-balanced average accuracy.
\Cref{appsec:exps} gives further details about the training, hyperparameters, and experimental results.

\subsection{\Gls{cc} mitigates shortcuts in the default setting}

Here, we experiment in the standard setting from \citet{liu2021just,idrissi2021simple,pmlr-v162-zhang22z} and use validation group annotations to tune hyperparameters and early-stopping.

\paragraph{\Gls{cc} improves over \derm{.}}
We compare \gls{cc} to \derm{} on CelebA, Waterbirds, and Civilcomments.
\Cref{tab:results} shows that every \gls{cc} method achieves higher test worst-group accuracy than \derm{} on all datasets.
\Derm{} achieves a mean test worst-group accuracy of $70.8\%, 72.8\%$ and $60.1\%$ on Waterbirds, CelebA, and Civilcomments respectively.
Compared to \derm{}, \gls{cc} methods provide a $5-10\%$ improvement on Waterbirds, $7-10\%$ improvement on CelebA, $7-10\%$ improvement on Civilcomments.
These improvements show the value of inductive biases more suitable for perception tasks.

\begin{figure}[t]
\small
\centering
\includegraphics[width=0.9\textwidth]{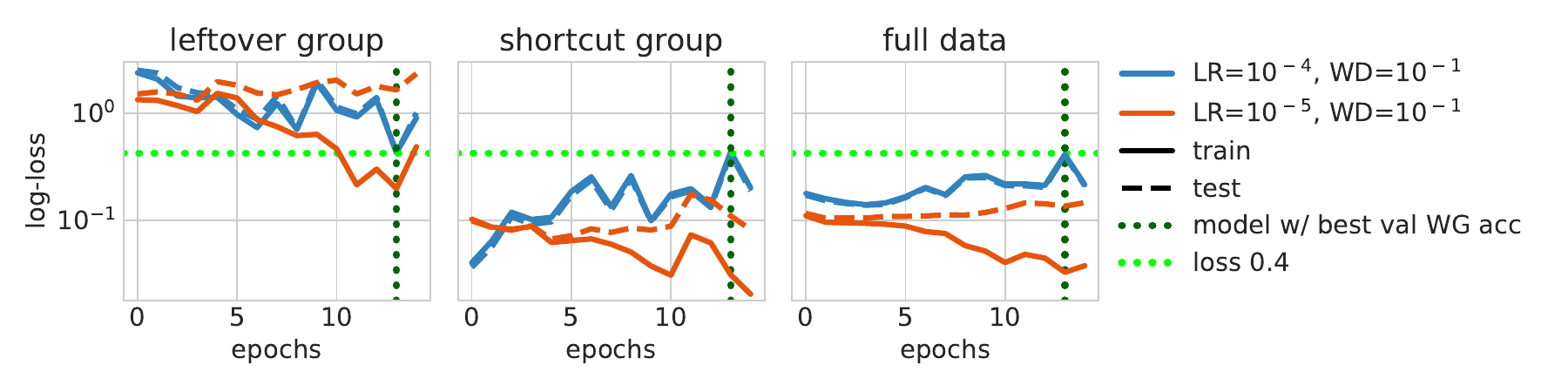}
\vspace{-5pt}
\caption{ 
\small
Loss curves of \derm{} on CelebA for two combinations of \gls{lr} and \gls{wd}.
The combination with the larger learning rate (blue) achieves $72.8\%$ test worst-group accuracy, beating the other combination by $20\%$.
The model that achieves the best validation (and test) worst-group accuracy is the one at epoch $13$ from the blue run.
This model achieves similar loss in both groups and the full data model suggesting that large \gls{lr} and \gls{wd} controls margins from exploding (higher training loss in all panels) and avoids systematically smaller margins in the leftover group compared to the shortcut group.
}
	\label{fig:Celeba-lr-wd}
 \vspace{-5pt}
\end{figure}

\paragraph{Large \gls{lr} and \gls{wd} may imitate \gls{cc} in \acrshort{erm}.}
\Derm{'s} performance varies greatly across different values of \gls{lr} and \gls{wd} on, for instance, CelebA: the test worst-group accuracy improves by more than $20$ points over different \gls{lr} and \gls{wd} combinations.
Why does tuning \gls{lr} and \gls{wd} yield such improvements?
We explain this phenomenon as a consequence of instability in optimization induced by large \gls{lr} and \gls{wd} which prevents the model from maximizing margins and in turn can control margins.
\Cref{fig:Celeba-lr-wd} provides evidence for this explanation by comparing \derm{'s} loss curves for two \gls{lr} and \gls{wd} combinations.

The blue loss curves in \cref{fig:Celeba-lr-wd} correspond to the run with the larger \gls{lr} and \gls{wd} combination.
The model that achieves the best validation (and test) worst-group accuracy over all combinations of hyperparameters for \derm{}, including those not in the plot, is the one at epoch $13$ on the blue curves.
This model achieves similar train and test losses ($\approx 0.4$) and thus similar margins in the shortcut group, the leftover group, and the whole dataset.
The red curves stand in contrast where the lower \gls{lr} results in the leftover group having higher training and test losses, and therefore smaller margins, compared to the shortcut group.
These observations together support the explanation that \derm{} with large \gls{lr} and \gls{wd} mitigates shortcuts when controlling margins like \gls{cc}.

\paragraph{\Gls{cc} performs as well or better than two-stage \sms{}.}

\glsreset{jtt}
\glsreset{cnc}
Two-stage shortcut mitigating methods like \gls{cnc} and \gls{jtt} aim to mitigate shortcuts by using a model trained with \derm{} to approximate group annotations.
They rely on the assumption that a model trained via  \derm{} either predicts with the shortcut feature (like background in Waterbirds) or that the model's representations separate into clusters based on the shortcut feature.
The methods then approximate group annotations using this \derm{-trained} model and use them to mitigate shortcut learning in a second predictive model.
\Gls{jtt} upweights the loss on the approximate leftover group and \gls{cnc} uses a contrastive loss to enforce the model's representations to be similar across samples that have the same label but different approximate group annotations.
\Cref{appsec:exps-twostage} gives details.

\Cref{tab:results} compares \gls{cc} to \gls{jtt} and \gls{cnc} on Waterbirds, Celeba, and CivilComments.
On CelebA, \gls{sd}, \texttt{marg-log}, and \stitch{} perform similar to \gls{cnc} while all \gls{cc} techniques outperform \gls{jtt}.
On Waterbirds, all \gls{cc} methods outperform \gls{jtt} and \gls{cnc}.
On CivilComments, \sigdamp{} and \gls{sd} perform similar to \gls{jtt} and outperform \gls{cnc}. 
\Gls{cnc}'s performance on Waterbirds differs from \citet{pmlr-v162-zhang22z} because their reported performance requires unique large \gls{wd} choices (like \gls{wd} set to $1$) to build a first-stage model that relies most on the shortcut feature without overfitting to the training data.

\paragraph{\Gls{cc} is faster than \gls{jtt} and \gls{cnc}.}
\Gls{cc} takes the same time as \derm{}, taking around $1,20$ and $60$ minutes per epoch for Waterbirds, CelebA, and CivilComments respectively on an RTX8000 GPU.
In contrast, on average over runs, \gls{jtt} takes around $6,80, 120$ minutes per epoch and \gls{cnc} takes around $8, 180, 360$ minutes per epoch.
Thus, \gls{cc} performs as well or better than \gls{jtt} and \gls{cnc} while being simpler to implement and computationally cheaper.

\subsection{\Gls{cc} mitigates shortcuts in the nuisance-free setting}

\begin{wraptable}[15]{R}{0.47\textwidth} 
\small
\centering
\vspace{-12pt}
\begin{tabular}{cccc}
\toprule
& CelebA & WB & Civil \\
\midrule
\acrshort{erm}
        & $57.5\pm 5.8$ 
        & $69.1\pm 2.1$ 
        & $60.7\pm 1.5$ \\
\gls{cnc} 
        & $67.8\pm 0.6$ 
        & $60.0\pm 8.0$
        & $61.4\pm 1.9$ \\
\gls{jtt}
        & $53.3\pm 3.3$ 
        & $71.7\pm 4.0$ 
        & $53.4\pm 2.1$ \\	
\midrule
\texttt{marg-log} 
        & $74.2\pm 1.4$ 
        & ${77.9\pm 0.3}$ 
        & ${66.8\pm 0.2}$ \\
\sigdamp{} 
        & ${70.8\pm 0.3}$ 
        & ${74.8 \pm 1.6}$ 
        & ${65.6\pm 0.2}$ \\
\gls{sd} 
        & ${70.3\pm 0.3}$ 
        & ${78.7\pm 1.4}$ 
        & ${67.8\pm 1.3}$ \\
\stitch{}
        & ${76.7\pm 0.6}$ 
        & $74.5\pm 1.2$ 
        & ${66.0\pm 1.0}$ \\
\bottomrule
\end{tabular}
\vspace{-3pt}
\caption{
\small
Average and standard deviation of test worst-group accuracy over two seeds of \gls{cc}, \derm{,} \gls{jtt}, and \gls{cnc} in {the nuisance-free setting}.
Hyperparameter selection and early stopping use label-balanced average accuracy.
All \gls{cc} methods outperform \derm{}, \gls{jtt}, and \gls{cnc} on all datasets.
}
\label{tab:pure}
\vspace{-15pt}
\end{wraptable}

Work like \citep{liu2021just,pmlr-v162-zhang22z} crucially require validation group annotations 
because these methods push the work of selecting models for mitigating shortcuts to validation.
Determining shortcuts itself is a laborious manual process, which means group annotations will often be unavailable.
Further, given a perfect stable feature that determines the label and a shortcut that does not, only models that rely on the stable feature more than the shortcut can achieve the highest validation accuracy.
Thus, we introduce a more challenging setting that only provides class labels in training and validation, called the \textbf{nuisance-free setting}.
In the nuisance-free setting, models are selected based on label-balanced average accuracy: the average of the accuracies over samples of each class.

\Cref{tab:pure} reports test worst-group (WG) accuracy in the nuisance-free setting.
\keypoint{On all the datasets, every \gls{cc} outperforms \derm{}, \gls{jtt}, and \gls{cnc}.} 
On average, the \gls{cc} methods close at least $61\%$ of the gap between \derm{} in the nuisance-free setting and the best performance in \cref{tab:results} on every dataset.
\keypoint{In contrast, \gls{cnc} and \gls{jtt} sometimes perform worse than \derm{}.
}

\section{Related work}\label{sec:related}

A large body of work tackles shortcut learning under different assumptions \citep{arjovsky2019invariant,wald2021calibration,krueger2020out,creager2021environment,veitch2021counterfactual,puli2022outofdistribution,heinze2017conditional,belinkov2017synthetic}.
A different line of work focuses on learning in neural networks in idealized settings \citep{yang2019fine,ronen2019convergence,jo2017measuring,baker2018deep,saxe2013exact,gidel2019implicit,advani2020high}.

\citet{shah2020pitfalls} study simplicity bias \citep{valle2018deep} and show that neural networks provably learn the linear function over a non-linear one, in the first epoch of training.
In a similar vein, \citet{hermann2020shapes} show that neural networks can prefer a linearly-decodable feature over a non-linear but more predictive feature, and
\citet{scimeca2021shortcut} make similar observations and use loss landscapes to empirically study which features are easier to learn.
Simplicity bias alone only describes neural biases early in training and does not explain why more predictive stable features are not learned later.
Unlike simplicity bias which focuses on linear versus non-linear features, max-margin bias is the reason \derm{} prefers one linear feature, the shortcut, over another, the stable feature, like in the synthetic experiment in \cref{sec:method}.

While \citet{pezeshki2021gradient} allow for perfect features, they hypothesize that shortcut learning occurs because when one feature is learned first, other features are gradient-starved and are not learned as well. They focus on a special setting where feature representations for different samples have inner product equal to a small constant to show that models can depend more on the imperfect feature than the perfect feature.
In this special setting, they show that penalizing the magnitudes of what we call the margin mitigates shortcuts; this method is called \glsreset{sd}\gls{sd}.
However, as we show in \cref{appsec:spectral-decoupling}, the assumption in Lemma 1 \citep{pezeshki2021gradient} is violated when using a linear model to classify in the simple linear \gls{dgp} in \cref{eq:sim-example}.
However, \gls{sd} on a linear model mitigates shortcuts in the \gls{dgp} in \cref{eq:sim-example}; see \ref{appsec:spectral-decoupling}.
Thus, the theory in \citet{pezeshki2021gradient} fails to not explain why \gls{sd} works for \cref{eq:sim-example}, but the uniform-margin property explains why all the \gls{cc} losses, including \gls{sd}, mitigate shortcuts.

\citet{nagarajan2020understanding} consider tasks with perfect stable features and formalize geometric properties of the data that make max-margin classifiers give non-zero weight to the shortcut feature ($\mbw_z>0$).
In their set up, the linear models are overparameterized and it is unclear when $\mbw_z>0$ leads to worse-than-random accuracy in the leftover group because they do not separate the model's dependence on the stable feature from the dependence on noise.
See \cref{fig:wz-pos-case} for an example where $\mbw_z>0$ but test accuracy is $100\%$.
In contrast to \citet{nagarajan2020understanding}, \cref{thm:bad-case} gives a family of \glspl{dgp} where the leftover group accuracy is worse than random, even without overparameterization.
\citet{ahuja2021invariance} also consider linear classification with \derm{} with a perfect stable feature and conclude that \derm{} learns only the stable feature because they assume no additional dimensions of noise in the covariates.
We develop the necessary nuance here by including noise in the problem and showing \derm{} depends on the shortcut feature even without overparameterization.

\citet{sagawa2020investigation} and \citet{wald2022malign} both consider overparameterized settings where the shortcut feature is informative of the label even after conditioning on the stable feature.
In both cases, the Bayes-optimal predictor also depends on the shortcut feature, which means their settings do not allow for an explanation of shortcut dependence in examples like \cref{fig:lin-erm-failure}.
In contrast, we show shortcut dependence occurs even in the presence of a perfect stable feature and without overparameterization.
\citet{li2019towards,pezeshki2022multi} focus on relative feature complexity and discuss the effects of large  \gls{lr} on which features are learned first during training, but do not allow for perfect features.
\citet{idrissi2021simple} empirically find that tuning \gls{lr} and \gls{wd} gets \derm{} to perform similar to two-stage \sms{} like \gls{jtt} \citep{liu2021just}.
We view the findings of \citep{idrissi2021simple} through the lens of \gls{cc} and explain how large \gls{lr} and \gls{wd} approximate \gls{cc} to mitigate shortcuts; see \cref{sec:experiments}.

\Gls{cc} is related to but different from methods proposed in
\citet{liu2017sphereface,cao2019learning,kini2021label}.
These works normalize representations or the last linear layers and linearly transform the logits to learn models with better margins under label imbalance.
Next, 
methods like \gls{lff} \citep{nam2020learning}, \gls{jtt} \citep{liu2021just}, and \gls{cnc} \citep{pmlr-v162-zhang22z} build two-stage procedures to avoid shortcut learning without group annotations in training. They assume that \derm{} produces models that depend more on the shortcut and select hyperparamters of the two stage process using validation group annotations. In the nuisance-free setting where there are no validation group annotations, the performance of these methods can degrade below that of \derm{}. In contrast, better characterizing the source of shortcut learning in perceptual problems leads to \gls{cc} methods that are not as reliant on validation group annotations (see nuisance-free results in \Cref{sec:experiments}).
\keypoint{Without any group annotations, encouraging uniform margins via \gls{cc} mitigates shortcuts better than \gls{jtt} and \gls{cnc}.}

\citet{soudry2018implicit} characterize the inductive bias of gradient descent to converge in direction to max-margin solutions when using exponentially tailed loses; \citet{wang2021implicit,wang2022does} then prove similar biases toward max-margin solutions for Adam and RMSProp.
\citet{ji2020gradient} show that for general losses that decrease in $\mby \ftheta(\mbx)$, gradient descent has an inductive bias to follow the $\ell_2$-regularization path.
All these inductive biases prefer shortcuts if using them leads to lower loss within an $\ell_2$-norm-budget.
\Gls{cc} provides a different inductive bias toward producing the same margin on all samples, which means gradient descent veers models away from imperfect shortcuts that lead to disparity in network outputs.
Such inductive biases are suitable for tasks where a feature determines the label ($h(\mbx) = \mby$).

\section{Discussion}
We study why \derm{} --- gradient-based optimization of log-loss --- yields models that depend on the shortcut even when the population minimum of log-loss is achieved by models that depend only on the stable feature.
By studying a linear task with perfect stable features, we show that \derm{'s}  preference toward shortcuts sprouts from an inductive bias toward maximizing margins.
Instead, inductive biases toward uniform margins improve dependence on the stable feature and can be implemented via \gls{cc}.
\Gls{cc} improves over \derm{} on a variety of perception tasks in vision and language without group annotations in training, and is competitive with more expensive two-stage \sms{}.
In the nuisance-free setting, where even validation group annotations are unavailable, \gls{cc} outperforms all the baselines.
{The performance that \Gls{cc} yields demonstrates that changing inductive biases can remove the need for expensive \sms{} in perception tasks.}
 
Without overparameterization, uniform-margin classifiers are unique and learn stable features only, while max-margin classifiers can depend more on shortcuts.
With overparameterization, max-margin classifiers are still unique but uniform-margin solutions are not which necessitates choosing between solutions. 
The experiments in \cref{sec:experiments} suggest that choosing between uniform-margin classifiers with penalties like $\ell_2$ improves over max-margin classifiers with $\ell_2$: all experiments use overparameterized models trained with weight decay and \gls{cc} outperforms \derm{.}
Further, our experiments suggest that uniform-margin classifiers are insensitive to the \gls{wd} and \gls{lr} choices, unlike max-margin classifiers; \cref{appsec:additional} shows that \gls{cc} achieves high performance for all \gls{lr} and \gls{wd} choices but \acrshort{erm} requires tuning.

\Cref{thm:bad-case} also explains how balancing may or may not improve dependence on the stable features. 
For example, a weighting-based approach produces the same max-margin solution as \derm{} \citep{sagawa2020investigation,rosset2003margin}, but subsampling leads to a different solution that could depend less on the shortcut.
For the latter however, models are more prone to overfitting on the smaller subsampled dataset.
Similar observations were made in \cite{sagawa2020investigation} but this work extends the insight to tasks with perfect stable features.
Comparing \acrshort{erm} and \gls{cc} on subsampled data would be fruitful.

Any exponentially tailed loss when minimized via gradient descent converges to the max-margin solution in direction \citep{soudry2018implicit}.
Thus, \cref{thm:bad-case} characterizes shortcut learning for any exponentially-tailed loss.
However, losses with decreasing polynomial tails --- for example, $\ell(a) = \frac{1}{1 + a^K}$ for some $K>0$ --- do not converge to the max-margin classifier.
One future direction is to show shortcut-dependence results like \cref{thm:bad-case} for polynomial-tailed losses, which in turn would mean that all common classification losses with a decreasing tail impose inductive biases unsuitable for perception tasks.

In the tasks we consider with perfect stable features, Bayes-optimal predictors rely only on the stable feature.
A weaker independence condition implies the same property of Bayes-optimal predictors even when $\mby$ is not determined by $s(\mbx)$: $\mby \indep (\mbx, \mbz) \g s(\mbx)$.
For example, in the CivilComments dataset a few instances have ambiguous labels \citep{xenos2022toxicity} meaning that there may not be a perfect stable feature.
Studying uniform margins and other inductive biases under this independence would be fruitful.

\bibliographystyle{unsrtnat}
\bibliography{ms}

\appendix

\onecolumn
\allowdisplaybreaks
\appendix

\section{Appendix: Proof of \Cref{thm:bad-case}, \Cref{corr:noverparam}, and \Cref{thm:bumpylossopt}}\label{appsec:scalingprop}

\subsection{Helper Lemmas}

\subsubsection{Bounding norms and inner products of isotropic random vectors.}
The main lemmas of this section are \cref{lem:inner-products} and \cref{lem:norm-conc}.
We will then use these two to  bound norms of sums of random vectors and inner products between the sum and a single random vector in \cref{lemma:concentration}.
We first list some facts from \citep{vershynin2018high} that we will use to bound the probability with which norms and inner products of Gaussian random vectors deviate far from their mean.\\

\begin{thmdef}
	(Sub-Gaussian norm) 
	For an r.v. $\mbx$, the sub-Gaussian norm, or $\psi_2$-norm, is
	\[\|\mbx\|_{\psi_2} = \inf\{t > 0, \E[\exp(\nicefrac{\mbx^2}{t^2})]\leq 2\}.\]
An r.v. is called sub-Gaussian if its $\psi_2$-norm is finite and for some fixed constant $c$
		\[
			p(|\mbx| > t) \leq 2 
		\exp(\nicefrac{-ct^2}{\|\mbx\|_{\psi_2}}
		).\]
\end{thmdef}
A Gaussian r.v. $\mbx\sim \cN(0, \sigma^2)$ has an $\psi_2$-norm of $G\sigma$ for a constant $G=\sqrt{\frac{8}{3}}$.\footnote{ $G=\sqrt{\frac{8}{3}}$.
This follows from:
\begin{align*}
\E_{\mbx\sim \cN(0,\sigma^2)}[\exp(\nicefrac{\mbx^2}{t^2})] & = 
\int_{-\infty}^\infty \frac{1}{\sigma\sqrt{2\pi}}\exp(-\nicefrac{x^2}{2\sigma^2})\exp(\nicefrac{x^2}{t^2}) dx = 
\int_{-\infty}^\infty \frac{1}{\sigma\sqrt{2\pi}}\exp\left(-x^2\frac{(t^2 - 2\sigma^2)}{2\sigma^2t^2}\right) dx
& 
\\
	& =	\frac{1}{\sigma\sqrt{2\pi}}\sqrt{\frac{\pi}{\frac{(t^2 - 2\sigma^2)}{2\sigma^2t^2}}} = \frac{1}{\sigma\sqrt{\pi}}\sqrt{\frac{\pi \sigma^2 t^2}{(t^2 - 2\sigma^2)}}
	 = \sqrt{\frac{t^2}{(t^2 - 2\sigma^2)}}
\\
\sqrt{\frac{t^2}{(t^2 - 2\sigma^2)}} \leq 2 & \implies  t^2  \leq 4(t^2 - 2\sigma^2) \implies  8 \sigma^2 \leq 3t^2 \implies \inf\{t : 8 \sigma^2 \leq 3t^2\} = \sqrt{\frac{8}{3}}\sigma.
\end{align*}	
\vspace{-25pt}
}\\

\begin{thmdef}
	(Sub-exponential norm) 
	For an r.v.  $\mbx$, the sub-exponential norm, or $\psi_1$-norm, is
	\[\|\mbx\|_{\psi_1} = \inf\{t > 0, \E[\exp(\nicefrac{|\mbx|}{t})]\leq 2\}.\]
A sub-exponential r.v. is one that has finite $\psi_1$-norm.\\
\end{thmdef}

\begin{lemma}\label{lem:products} (Lemma 2.7.7 from \citep{vershynin2018high})
Products of sub-Gaussian random variables $\mbx, \mby$ is a sub-exponential random variable with it's $\psi_1$-norm bounded by the product of the $\psi_2$-norm
\[\|\mbx \mby\|_{\psi_1} \leq \|\mbx\|_{\psi_2} \|\mby\|_{\psi_2}\]
\end{lemma}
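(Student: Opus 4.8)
The plan is to prove the bound by first reducing to the normalized case using homogeneity of the two norms, and then combining Young's inequality with the Cauchy--Schwarz inequality. Both $\|\cdot\|_{\psi_2}$ and $\|\cdot\|_{\psi_1}$ are positively homogeneous: replacing the random variable by $|c|$ times itself rescales the defining set $\{t>0:\E[\exp(\cdot)]\le 2\}$ by the factor $|c|$, hence rescales the infimum by $|c|$. If either $\|\mbx\|_{\psi_2}$ or $\|\mby\|_{\psi_2}$ vanishes, the corresponding variable is $0$ almost surely and the claim is trivial, so I assume both are positive and write $\mbx=\|\mbx\|_{\psi_2}\tilde\mbx$ and $\mby=\|\mby\|_{\psi_2}\tilde\mby$ with $\|\tilde\mbx\|_{\psi_2}=\|\tilde\mby\|_{\psi_2}=1$. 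By homogeneity it then suffices to show $\|\tilde\mbx\tilde\mby\|_{\psi_1}\le 1$, i.e. that $\E[\exp(|\tilde\mbx\tilde\mby|)]\le 2$.

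To bound this expectation I would apply Young's inequality $ab\le\tfrac12 a^2+\tfrac12 b^2$ with $a=|\tilde\mbx|$ and $b=|\tilde\mby|$, which gives the pointwise bound $|\tilde\mbx\tilde\mby|\le\tfrac12\tilde\mbx^2+\tfrac12\tilde\mby^2$. Exponentiating and then applying the Cauchy--Schwarz inequality,
\[
\E\!\left[\exp(|\tilde\mbx\tilde\mby|)\right]
\le \E\!\left[\exp\!\big(\tfrac12\tilde\mbx^2\big)\exp\!\big(\tfrac12\tilde\mby^2\big)\right]
\le \sqrt{\E\!\left[\exp(\tilde\mbx^2)\right]}\;\sqrt{\E\!\left[\exp(\tilde\mby^2)\right]}.
\]
Each factor under a square root is at most $\sqrt 2$: since $\|\tilde\mbx\|_{\psi_2}=1$ and $t\mapsto\E[\exp(\tilde\mbx^2/t^2)]$ is non-increasing, we have $\E[\exp(\tilde\mbx^2/t^2)]\le 2$ for every $t>1$, and letting $t\downarrow 1$ with the monotone convergence theorem gives $\E[\exp(\tilde\mbx^2)]\le 2$ (and likewise for $\tilde\mby$). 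Hence $\E[\exp(|\tilde\mbx\tilde\mby|)]\le\sqrt2\cdot\sqrt2=2$, so $\|\tilde\mbx\tilde\mby\|_{\psi_1}\le 1$; multiplying back through by $\|\mbx\|_{\psi_2}\|\mby\|_{\psi_2}$ yields $\|\mbx\mby\|_{\psi_1}\le\|\mbx\|_{\psi_2}\|\mby\|_{\psi_2}$. In particular this bound is finite, which is precisely the assertion that $\mbx\mby$ is sub-exponential.

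The only delicate point, and the step I would be most careful about, is the passage from the infimum defining $\|\cdot\|_{\psi_2}$ to the inequality $\E[\exp(\tilde\mbx^2)]\le 2$ at the infimizing value $t=1$; this is exactly where the monotone convergence theorem along $t_n\downarrow 1$ is used, relying on the fact that $\exp(\tilde\mbx^2/t^2)$ increases as $t$ decreases. Everything else is the standard two-line Young/Cauchy--Schwarz computation and requires no probabilistic input beyond the definitions of the $\psi_1$- and $\psi_2$-norms already recorded above.
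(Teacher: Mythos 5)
Your proof is correct, and it is essentially the standard argument behind the result the paper simply cites (Lemma 2.7.7 of Vershynin) without reproving: normalize by homogeneity, bound $|\tilde\mbx\tilde\mby|\le\tfrac12\tilde\mbx^2+\tfrac12\tilde\mby^2$ by Young's inequality, and reduce to $\E[\exp(\tilde\mbx^2)]\le 2$; your use of Cauchy--Schwarz where the textbook applies Young's inequality a second time to the exponentials is an immaterial variant, and your monotone-convergence argument correctly settles the only delicate point, namely that the defining bound holds at the infimizing value $t=1$.
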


\Cref{lem:products} implies that the product of two mean-zero standard normal  vectors is a sub-exponential random variable with $\psi_1$-norm less than $G^2$.\\

\begin{lemma}\label{lem:Bernstein} (Bernstein inequality, Theorem 2.8.2 \citep{vershynin2018high})
For i.i.d sub-exponential random variables $\mbx_1, \cdots, \mbx_d$,  for a fixed constant $c=\frac{1}{(2e)^2}$ and $K=\|\mbx_1\|_{\psi_1}$
\[p\left(\left|\sum_{i=1}^d \mbx_i\right| > t\right)\leq 2 \exp\left(-c
		\min\left\{
				\frac{t^2}{K^2d}, \frac{t}{K}\right
				\}
				\right)\]

\end{lemma}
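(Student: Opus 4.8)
The plan is to prove the bound by the classical exponential-moment (Chernoff) method, which is how Theorem~2.8.2 of \citep{vershynin2018high} is established; I reproduce the argument so that the explicit constant $c=\frac{1}{(2e)^2}$ can be tracked. Throughout I take the $\mbx_i$ to be centered --- the case that arises in every application of this lemma in the paper, e.g.\ inner products of independent standard Gaussian vectors after subtracting their mean; in general one first replaces $\mbx_i$ by $\mbx_i-\E\mbx_i$, which costs only a constant factor in $K$. The first step is a one-variable moment generating function estimate: starting from the definition $\E\exp(\nicefrac{|\mbx_1|}{K})\le 2$, a Taylor expansion of $s\mapsto e^s$ together with $\E\mbx_1=0$ shows that on an explicit interval $|\lambda|\le\nicefrac{1}{(eK)}$ one has $\E\exp(\lambda\mbx_1)\le\exp\!\big(C\lambda^2 K^2\big)$ for an explicit absolute constant $C$; the radius $\nicefrac{1}{(eK)}$ of this interval is where the factor $e$ in the final constant originates.

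Next I would multiply these bounds using independence, $\E\exp\!\big(\lambda\sum_{i=1}^d\mbx_i\big)=\prod_{i=1}^d\E\exp(\lambda\mbx_i)\le\exp\!\big(C d\lambda^2 K^2\big)$ for $|\lambda|\le\nicefrac{1}{(eK)}$, and then apply Markov's inequality to $\exp\!\big(\lambda\sum_i\mbx_i\big)$:
\[
p\Big(\textstyle\sum_{i=1}^d\mbx_i>t\Big)\le\exp\!\big(-\lambda t+Cd\lambda^2 K^2\big),\qquad 0<\lambda\le\tfrac{1}{eK}.
\]
The remaining work is the optimization over $\lambda$. The unconstrained minimiser is $\lambda^\star=\nicefrac{t}{(2CdK^2)}$. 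If $\lambda^\star\le\nicefrac{1}{(eK)}$, i.e.\ when $t$ is below a constant multiple of $dK$, substituting $\lambda^\star$ gives the sub-Gaussian branch $\exp\!\big(-\nicefrac{t^2}{(4CdK^2)}\big)$; otherwise the constraint binds, so I set $\lambda=\nicefrac{1}{(eK)}$ and use the size of $t$ to dominate the quadratic term, obtaining the sub-exponential branch of order $\exp\!\big(-\Theta(\nicefrac{t}{K})\big)$. Taking the worse of the two constants across the two regimes shows the exponent is at most $-\frac{1}{(2e)^2}\min\!\big(\nicefrac{t^2}{K^2d},\nicefrac{t}{K}\big)$. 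Finally, running the identical argument with $-\mbx_1,\dots,-\mbx_d$ bounds $p(\sum_i\mbx_i<-t)$ by the same quantity, and a union bound over the two tails supplies the factor $2$.

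The main obstacle is purely quantitative: pinning down the precise constant $c=\frac{1}{(2e)^2}$, rather than an unspecified absolute constant, forces one to commit to a particular explicit form of the sub-exponential MGF estimate in the first step and to be careful at the crossover $t\asymp dK$ between the two branches so that both carry the same constant. A secondary point to handle cleanly is the reduction to centered variables when $\E\mbx_i\neq 0$ is not assumed; this only rescales $K$ by a constant and can be folded into $c$ (at the price of a slightly worse but still absolute value) if one wishes to keep the statement fully general.
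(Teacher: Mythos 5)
The paper never proves this lemma: it is imported verbatim as Theorem 2.8.2 of Vershynin (2018), so the only meaningful comparison is with that textbook proof, and your proposal is exactly that proof --- the MGF bound for a single centered sub-exponential variable on an interval $|\lambda|\le \mathrm{const}/K$, independence to tensorize, Chernoff, optimization of $\lambda$ with the two regimes giving the $t^2/(K^2d)$ and $t/K$ branches, and a union bound over the two tails for the factor $2$. The constant-tracking you flag as the main obstacle does work out: from $\E\exp(|\mbx_1|/K)\le 2$ one gets $\E|\mbx_1|^p\le 2K^p\,p!$, hence $\E\exp(\lambda\mbx_1)\le\exp\bigl(C\lambda^2K^2\bigr)$ with $C=2/(1-1/e)$ on $|\lambda|\le 1/(eK)$, and the two branches of the Chernoff optimization then yield exponents $-t^2/(4CK^2d)$ and $-t/(2eK)$, both of which are stronger than $-\frac{1}{(2e)^2}\min\{t^2/(K^2d),\,t/K\}$, so the stated constant follows a fortiori. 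One caveat: your closing remark that the uncentered case can be kept "fully general" by absorbing the centering into a worse absolute constant is not right --- for variables with nonzero mean the inequality as literally stated is simply false (take $\mbx_i\equiv 1$), since the sum concentrates around $d\,\E\mbx_1$ rather than $0$; centering changes the quantity being bounded, not just the constant. The paper's statement is itself sloppy in omitting the mean-zero hypothesis that Vershynin's theorem requires, but every application in the paper (products of independent standard Gaussian coordinates in Lemma 4) is mean-zero, and your main argument assumes centering, so this is a side remark rather than a gap in the proof of the result actually used.
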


Next, we apply these facts to bound the sizes of inner products between two unit-variance Gaussian vectors.\\

\begin{lemma}\label{lem:inner-products}(Bounds on inner products of Gaussian vectors)
Let $\mbu, \mbv$ be $d$-dimensional random vectors where each coordinate is an i.i.d standard normal r.v.
Then, for any scalar $\epsilon>0$ such that $\epsilon \leq G^2\sqrt{d}$,  for a fixed constant $c=\frac{1}{(2e)^2}$ 
\[p\left(\left|\mbu^\top \mbv\right| > \epsilon \sqrt{d}  \right)\leq 2 \exp\left(-c\frac{\epsilon^2}{G^4}\right).
\]
\end{lemma}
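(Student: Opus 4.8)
The plan is to write the inner product as a sum of independent sub-exponential random variables and apply the Bernstein inequality (\cref{lem:Bernstein}). Since the $2d$ coordinates of $\mbu$ and $\mbv$ are mutually independent, we have $\mbu^\top\mbv = \sum_{i=1}^d \mbu_i\mbv_i$ with the summands $\mbu_i\mbv_i$ i.i.d. Each coordinate is a standard normal, hence sub-Gaussian with $\psi_2$-norm equal to $G=\sqrt{8/3}$ (by the footnote computation). By \cref{lem:products}, each product $\mbu_i\mbv_i$ is therefore sub-exponential with $\|\mbu_i\mbv_i\|_{\psi_1}\le \|\mbu_i\|_{\psi_2}\|\mbv_i\|_{\psi_2} = G^2$.

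Next I would invoke \cref{lem:Bernstein} with $t=\epsilon\sqrt{d}$. Strictly, that lemma is stated with $K$ equal to the $\psi_1$-norm of the summands, whereas here we only have the upper bound $K\le G^2$; but the right-hand side $2\exp(-c\min\{t^2/(K^2 d),\, t/K\})$ is increasing in $K$, so substituting $G^2$ for $K$ only weakens the bound and the inequality still holds. This yields
\[
p\left(\left|\mbu^\top\mbv\right| > \epsilon\sqrt{d}\right) \le 2\exp\left(-c\,\min\left\{\frac{\epsilon^2 d}{G^4 d},\; \frac{\epsilon\sqrt{d}}{G^2}\right\}\right) = 2\exp\left(-c\,\min\left\{\frac{\epsilon^2}{G^4},\; \frac{\epsilon\sqrt{d}}{G^2}\right\}\right).
\]

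Finally, the hypothesis $\epsilon \le G^2\sqrt{d}$ is exactly what makes the first term the smaller one: multiplying through by $\epsilon/G^4 > 0$ turns $\epsilon\le G^2\sqrt{d}$ into $\epsilon^2/G^4 \le \epsilon\sqrt{d}/G^2$. Hence the minimum equals $\epsilon^2/G^4$, and the claimed bound $2\exp(-c\,\epsilon^2/G^4)$ follows. There is no substantive obstacle here; the only points that require care are the monotonicity-in-$K$ argument that legitimizes replacing the exact $\psi_1$-norm by the upper bound $G^2$ inside Bernstein's inequality, and checking that the stated range restriction on $\epsilon$ selects the $\epsilon^2/G^4$ branch of the minimum rather than the linear-in-$\epsilon$ branch.
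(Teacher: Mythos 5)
Your proposal is correct and follows essentially the same route as the paper: decompose $\mbu^\top\mbv$ into a sum of i.i.d.\ sub-exponential products, bound each $\psi_1$-norm by $G^2$ via \cref{lem:products}, apply Bernstein's inequality with $t=\epsilon\sqrt{d}$, and use $\epsilon \le G^2\sqrt{d}$ to select the quadratic branch of the minimum. Your explicit justification that the Bernstein bound is monotone in $K$ (so the upper bound $G^2$ can stand in for the exact $\psi_1$-norm) is a point the paper passes over more quickly, but the argument is the same.
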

\begin{proof}
First, the inner product is $\mbu^\top\mbv = \sum_{i}^d \mbu_i \mbv_i$; it is the sum of products of i.i.d. standard normal r.v. ($\sigma=1$).
Then, by \cref{lem:products}, each term in the sum is a sub-exponential r.v. with $\psi_1$-norm bounded as follows:
\begin{align}
\label{eq:sub-exp-bound}
	K = \|\mbu_i \mbv_i\|_{\psi_1} \leq \|\mbu_i\|_{\psi_2}\|\mbu_i\|_{\psi_2} = G  \times G =  G^2.
\end{align}
We can apply Bernstein inequality  \cref{lem:Bernstein} to sub-exponential r.v. to the inner product and then upper bound the probability by replacing $K$ with the larger $G^2$ in \cref{eq:sub-exp-bound}
\[
p
\left(
	|\mbu^\top\mbv| > t
\right)
	\leq 
	2 \exp\left(-c
		\min\left\{
				\frac{t^2}{K^2d}, \frac{t}{K}\right
				\}
				\right) 
	\leq 2\exp\left(-c
		\min\left\{
				\frac{t^2}{G^4 d}, \frac{t}{G^2}\right
				\}
				\right)
\]

Substituting $t = \epsilon\sqrt{d}$ in the above gives us:

\[p\left(|\mbu^\top\mbv| > \epsilon\sqrt{d} \right)
	\leq 2\exp\left(-c
		\min\left\{
				\frac{\epsilon^2 d}{G^4 d}, \frac{\epsilon\sqrt{d}}{G^2}\right
				\}
				\right)
\]
Using the fact that $\epsilon \leq G^2 \sqrt{d}$ to achieve the minimum concludes the proof:
\[
	\epsilon \leq G^2 \sqrt{d} 
	\implies 
		\epsilon^2 \leq \epsilon G^2 \sqrt{d} 
	\implies 
		\frac{\epsilon^2}{G^4} 
	\leq
		\frac{\epsilon \sqrt{d}}{G^2} 
	\implies 
	\min\left\{
				\frac{\epsilon^2d}{G^4 d}, \frac{\epsilon\sqrt{d}}{G^2}\right
				\} 
	= 
		\frac{\epsilon^2}{G^4}
\]
\end{proof}

\begin{lemma}\label{lem:norm-conc}
Let $\mbx$ be a Gaussian vector of size $d$ where each element is a standard normal, meaning that $\|\mbx_i\|_{\psi_2}=G$.
Then, for any $t>0$ and a fixed constant $c=\frac{1}{(2e)^2}$ , the norm of the vector concentrates around $\sqrt{d}$ according to
\[
p\left(\left|\|\mbx\| - \sqrt{d}\right| > t\right) \leq 2 
	\exp(\nicefrac{-ct^2}{G^4}).
\]
\end{lemma}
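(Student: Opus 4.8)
The plan is to transfer the problem to the squared norm $\|\mbx\|^2=\sum_{i=1}^d \mbx_i^2$, which is a sum of i.i.d.\ mean-zero sub-exponential variables, and then to apply the Bernstein inequality (\cref{lem:Bernstein}), mirroring the proof of \cref{lem:inner-products}. The elementary bridge between $\|\mbx\|$ and $\|\mbx\|^2$ is this: for $z\ge 0$ and $\delta>0$, since $z+1\ge 1$ and $|z^2-1|=|z-1|\,(z+1)$, the event $\{|z-1|>\delta\}$ is contained in $\{|z^2-1|>\max\{\delta,\delta^2\}\}$ (the bound $|z^2-1|>\delta$ uses $z+1\ge 1$; the bound $|z^2-1|>\delta^2$ uses $z+1>\delta$, which holds on both branches of $\{|z-1|>\delta\}$). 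Taking $z=\|\mbx\|/\sqrt d$ and $\delta=t/\sqrt d$ turns this into the event inclusion
\[
\Bigl\{\, \bigl|\|\mbx\| - \sqrt{d}\bigr| > t \,\Bigr\}\;\subseteq\;\Bigl\{\, \bigl|\|\mbx\|^2 - d\bigr| > \max\{\sqrt{d}\,t,\; t^2\} \,\Bigr\}.
\]

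Next I would bound the probability of the right-hand event. Writing $\|\mbx\|^2-d=\sum_{i=1}^d(\mbx_i^2-1)$, each summand is mean zero and, by \cref{lem:products} applied with both factors equal to $\mbx_i$ (together with a short direct estimate to absorb the centering), sub-exponential with $\psi_1$-norm $K\le G^2$. Applying \cref{lem:Bernstein} with threshold $M:=\max\{\sqrt d\,t,\,t^2\}$ gives
\[
p\Bigl( \bigl|\|\mbx\|^2 - d\bigr| > M \Bigr)\;\le\;2\exp\!\left( -c\, \min\!\left\{ \tfrac{M^2}{K^2 d},\; \tfrac{M}{K} \right\} \right),
\]
and it remains to check that $\min\{M^2/(K^2d),\,M/K\}\ge t^2/G^4$ for every $t>0$. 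For the first term this is immediate from $M\ge\sqrt d\,t$; for the second, $MK\ge t^2$ follows by inspecting the two cases $M=t^2$ (use $K=G^2>1$) and $M=\sqrt d\,t$ with $t\le\sqrt d$ (use $\sqrt d\,t\ge t^2$). Combining the two displays yields the stated bound.

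The step needing the most care is obtaining a bound valid for \emph{all} $t>0$, not only for $t\le G^2\sqrt d$, which is where a naive argument using only $|z-1|\le|z^2-1|$ (i.e.\ threshold $\sqrt d\,t$) would stall — compare the explicit restriction $\epsilon\le G^2\sqrt d$ in \cref{lem:inner-products}. Keeping the $\max\{\delta,\delta^2\}$ form is precisely what lets the threshold fed to Bernstein be large enough in both branches of the sub-exponential tail. The only other fiddly point is the constant bookkeeping for the centered variable $\mbx_i^2-1$, which must be arranged so that its $\psi_1$-norm is at most $G^2$ and the final exponent comes out as $-c\,t^2/G^4$ with $c$ the Bernstein constant $1/(2e)^2$; everything else is routine.
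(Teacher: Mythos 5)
Your proof is correct, but it takes a genuinely different route from the paper's. The paper disposes of this lemma in one line by citing Equation 3.3 from the proof of Theorem 3.1.1 in \citep{vershynin2018high} and noting $\max_i\|\mbx_i\|_{\psi_2}=G$; you instead reconstruct that result from the paper's own ingredients, passing to $\|\mbx\|^2-d=\sum_i(\mbx_i^2-1)$ via the inclusion $\{|z-1|>\delta\}\subseteq\{|z^2-1|>\max\{\delta,\delta^2\}\}$ and then invoking \cref{lem:Bernstein}. Your handling of the $\max\{\sqrt{d}\,t,\,t^2\}$ threshold is exactly what makes the bound hold for all $t>0$ with no cap analogous to the $\epsilon\le G^2\sqrt{d}$ restriction in \cref{lem:inner-products}, and the final check that $\min\{M^2/(K^2d),\,M/K\}\ge t^2/G^4$ goes through once you upper-bound $K$ by $G^2$ inside Bernstein (as the paper does in the proof of \cref{lem:inner-products}). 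What your route buys is a self-contained derivation in which the constant $c=\nicefrac{1}{(2e)^2}$ is traceable to the paper's stated Bernstein inequality, whereas the cited Equation 3.3 carries an unspecified absolute constant. The one step you assert rather than prove is that the \emph{centered} variable $\mbx_i^2-1$ has $\psi_1$-norm at most $G^2$: \cref{lem:products} only bounds $\|\mbx_i^2\|_{\psi_1}$, and the generic centering lemma for sub-exponential variables introduces an extra absolute constant that would contaminate the exponent. For a standard normal the claim does hold --- a direct computation of $\E[\exp(|\mbx_i^2-1|/t)]$ at $t=G^2=\nicefrac{8}{3}$ gives a value below $2$ --- so you should include that short calculation to close the argument; with it, your proof is complete.
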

\begin{proof}
Equation 3.3 from the proof of theorem 3.1.1 in \citep{vershynin2018high} shows that
\[p(|\|\mbx\| - \sqrt{d}| > t) \leq 2 \exp(\nicefrac{-ct^2}{\left(\max_i \|\mbx_i\|_{\psi_2}\right)^4}).\]
As $\mbx$ has i.i.d standard normal entries, $\max_i \|\mbx_i\|_{\psi_2} = G$, concluding the proof. 
\end{proof}

\subsubsection{Concentration of norms of sums of random vectors and their inner products}

This is the main lemma that we will use in proving \cref{thm:bad-case}.\\

 \begin{lemma}\label{lemma:concentration}
 Consider a set of vectors $V = \{\mbdelta_i\}$ where $\mbdelta_i\in \mathbf{R}^d$ of size $T_V\geq 1$ where each element of each vector is drawn independently from the standard normal distribution $\cN(0,1)$.
 Then,  for a fixed constant $c=\frac{1}{(2e)^2}$ and any $\epsilon \in (0,G^2\sqrt{d})$ with probability $ \geq 1 - 2 \exp(-\epsilon^2\frac{c}{G^4}) $
\begin{align}
\label{eq:lemma5-norm-bound}
	\left\|\frac{1}{\sqrt{T_V}}\sum_{i\in V} \mbdelta_i\right\| \leq \sqrt{d} + \epsilon
\end{align}
and with probability $ \geq 1 - 4 T_V \exp(-\epsilon^2\frac{c}{G^4}) $
\begin{align}
\label{eq:leftover-inner-product}
\forall \mbdelta_j \in V \quad  \left\langle \mbdelta_j,\sum_{i\in V} \mbdelta_i\right\rangle \geq d - 3\epsilon\sqrt{T_V d} 
\end{align}

Further, consider any set $U$ of vectors $U=\{\mbdelta_i\}$ of size $T_U$, where 
each vector also has coordinates drawn i.i.d from the standard normal distribution $\cN(0,1)$.
Then, with probability $ \geq 1 - 2T_u \exp(-\epsilon^2\frac{c }{G^4}) $
\begin{align}\label{eq:shortcut-inner-product}
  \forall \mbdelta_j \in U \quad \left|\left\langle \mbdelta_j , \sum_{i\in V} \mbdelta_i\right\rangle\right| \leq \epsilon \sqrt{T_V d},
\end{align}

By union bound, the three events above hold at once with a probability at least $1 - 2(2T_V + T_u + 1)\exp(-\epsilon^2\frac{c}{G^4})$.
\end{lemma}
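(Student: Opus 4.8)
\textbf{Proof plan for \cref{lemma:concentration}.}
The strategy is to establish each of the three displayed bounds, \cref{eq:lemma5-norm-bound}, \cref{eq:leftover-inner-product} and \cref{eq:shortcut-inner-product}, by reducing it to one application (per vector) of the single-vector concentration statements \cref{lem:inner-products} and \cref{lem:norm-conc}, and then to combine the three by a union bound. The one structural fact used repeatedly is that a normalized sum of independent standard Gaussian vectors is again a standard Gaussian vector: if $\{\mbdelta_i\}_{i\in W}$ are i.i.d.\ $\cN(0,\mathbf{I})$ in $\bbR^d$, then $\tfrac{1}{\sqrt{|W|}}\sum_{i\in W}\mbdelta_i$ is $\cN(0,\mathbf{I})$ in $\bbR^d$, and, being a measurable function of $\{\mbdelta_i\}_{i\in W}$ alone, it is independent of any $\mbdelta_j$ with $j\notin W$. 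The hypothesis $\epsilon\in(0,G^2\sqrt d)$ is exactly what is needed to invoke \cref{lem:inner-products}.

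For \cref{eq:lemma5-norm-bound}, apply the observation above with $W=V$: $\tfrac{1}{\sqrt{T_V}}\sum_{i\in V}\mbdelta_i$ is a standard Gaussian vector in $\bbR^d$, so \cref{lem:norm-conc} with $t=\epsilon$ gives $\big\|\tfrac{1}{\sqrt{T_V}}\sum_{i\in V}\mbdelta_i\big\|\le\sqrt d+\epsilon$ with probability at least $1-2\exp(-\epsilon^2 c/G^4)$. For \cref{eq:leftover-inner-product}, fix $\mbdelta_j\in V$ and decompose $\langle\mbdelta_j,\sum_{i\in V}\mbdelta_i\rangle=\|\mbdelta_j\|^2+\big\langle\mbdelta_j,\sum_{i\in V,\,i\ne j}\mbdelta_i\big\rangle$. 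For the first term, \cref{lem:norm-conc} with $t=\epsilon$ gives $\|\mbdelta_j\|\ge\sqrt d-\epsilon$ with probability $\ge 1-2\exp(-\epsilon^2 c/G^4)$, and a one-line case split on the sign of $\sqrt d-\epsilon$ shows that on that event $\|\mbdelta_j\|^2\ge d-2\epsilon\sqrt d\ge d-2\epsilon\sqrt{T_V d}$, using $T_V\ge 1$. For the second term, write $\sum_{i\ne j}\mbdelta_i=\sqrt{T_V-1}\,\mbg$ with $\mbg$ a standard Gaussian vector independent of $\mbdelta_j$; \cref{lem:inner-products} applied to the pair $(\mbdelta_j,\mbg)$ gives $|\langle\mbdelta_j,\mbg\rangle|\le\epsilon\sqrt d$ with probability $\ge 1-2\exp(-\epsilon^2 c/G^4)$, hence $\big|\big\langle\mbdelta_j,\sum_{i\ne j}\mbdelta_i\big\rangle\big|\le\sqrt{T_V-1}\,\epsilon\sqrt d\le\epsilon\sqrt{T_V d}$. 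Adding the two bounds yields $\langle\mbdelta_j,\sum_{i\in V}\mbdelta_i\rangle\ge d-3\epsilon\sqrt{T_V d}$ on the intersection of the two events (the case $T_V=1$ is immediate, the second sum being empty), and a union bound over the $T_V$ choices of $j$ and the two events per $j$ gives failure probability $4T_V\exp(-\epsilon^2 c/G^4)$.

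For \cref{eq:shortcut-inner-product}, use that the vectors in $U$ are independent of those in $V$: for each $\mbdelta_j\in U$, the vector $\mbg:=\tfrac{1}{\sqrt{T_V}}\sum_{i\in V}\mbdelta_i$ is a standard Gaussian vector independent of $\mbdelta_j$, so \cref{lem:inner-products} applied to $(\mbdelta_j,\mbg)$ gives $|\langle\mbdelta_j,\mbg\rangle|\le\epsilon\sqrt d$, i.e.\ $\big|\big\langle\mbdelta_j,\sum_{i\in V}\mbdelta_i\big\rangle\big|\le\epsilon\sqrt{T_V d}$, with probability $\ge 1-2\exp(-\epsilon^2 c/G^4)$; a union bound over the $T_U$ vectors in $U$ gives failure probability $2T_U\exp(-\epsilon^2 c/G^4)$. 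Finally, a union bound over the three events, with failure probabilities $2\exp(\cdot)$, $4T_V\exp(\cdot)$ and $2T_U\exp(\cdot)$, produces the claimed $1-2(2T_V+T_U+1)\exp(-\epsilon^2 c/G^4)$.

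There is no genuine obstacle here — the content is bookkeeping — but the point that needs care is the independence structure required to apply \cref{lem:inner-products}: that lemma treats the inner product of two Gaussian vectors as a sum of \emph{independent} coordinatewise products, which is legitimate here precisely because we peel $\mbdelta_j$ off the sum in \cref{eq:leftover-inner-product} and because $U$ and $V$ are disjoint in \cref{eq:shortcut-inner-product}. The only other thing to watch is keeping the factors $\sqrt{T_V}$ versus $\sqrt{T_V-1}$ and $\sqrt d$ versus $\sqrt{T_V d}$ aligned; the mild slack there is exactly why the constant $3$ (rather than $2$) appears in \cref{eq:leftover-inner-product}.
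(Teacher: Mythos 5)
Your proposal is correct and follows essentially the same route as the paper's proof: normalize sums to reduce to standard Gaussian vectors, peel $\mbdelta_j$ off the sum for the in-set inner product so that \cref{lem:norm-conc} and \cref{lem:inner-products} apply to independent pieces, and union-bound over the vectors and the three events. The only (harmless) difference is bookkeeping in how the slack is distributed to reach the constant $3$, and your explicit case split on the sign of $\sqrt{d}-\epsilon$ is if anything slightly more careful than the paper's.
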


\begin{proof}
We split the proof into three parts one each for \cref{eq:lemma5-norm-bound,eq:leftover-inner-product,eq:shortcut-inner-product}.
\paragraph{Proof of \cref{eq:lemma5-norm-bound}.}
As $\mbdelta$ is a vector of random i.i.d standard normal random variables, note that $\frac{1}{\sqrt{T_V}}\sum_i \mbdelta_i$ is also a vector of i.i.d standard normal random variables.
This follows from the fact that the sum of $T_V$ standard normal random variables is a mean-zero Gaussian random variable with standard deviation $\sqrt{T_V}$.
Thus dividing by the standard deviation makes the variance $1$, making it standard normal.

Then, applying \cref{lem:norm-conc} with $t=\epsilon$ gives us the following bound:
\[
p\left(
\left\|
	\frac{1}{\sqrt{T_V}}\sum_i \mbdelta_i
	\right\| > \sqrt{d} + \epsilon
\right)  
\leq 
	p
	\left(
	\left| \,\,
		\left\|
			\frac{1}{\sqrt{T_V}}\sum_i \mbdelta_i
	\right\| - \sqrt{d}
	\right| > \epsilon
\right) \leq 2 
	\exp(\nicefrac{-c\epsilon^2}{G^4})
\]

\paragraph{Proof of \cref{eq:leftover-inner-product} }
We split the inner product  into two cases: $T_V=1$ and $T_V \geq 2$.
\paragraph{Case $T_V=1$.}
First note that due to \cref{lem:norm-conc},
\[
\forall j \in V,  \qquad \quad p\left(
\|\mbdelta_j\| < \sqrt{d} - \epsilon
\right)  
\leq 
	p\left(
\left| \,\,
	\|\mbdelta_j\| - \sqrt{d}
\right| 
			> 		\epsilon
\right) 
	\leq 2 
	\exp(\nicefrac{-c\epsilon^2}{G^4}).
\]
Then, the following lower bound holds with probability at least $1- 2 
	\exp(\nicefrac{-c\epsilon^2}{G^4})$
\begin{align*}
\forall j \in V, \qquad \quad
        \left\langle \mbdelta_j, 
        \sum_{i\in V} \mbdelta_i\right\rangle & = \|\mbdelta_j\|^2
        \\
& \geq
(\sqrt{d} - \epsilon)^2 
        \\
&\geq  d - 2\epsilon\sqrt{d} 
        \\
&
\geq  d - 3\epsilon\sqrt{T_V d},
\end{align*}

To summarize this case, with the fact that  
$1- 2  \exp(\nicefrac{-c\epsilon^2}{G^4}) \geq 1 - 4 T_V \exp(\nicefrac{-c\epsilon^2}{G^4})$,
we have that 
\begin{align*}
\forall j \in V, \qquad \quad
        \left\langle \mbdelta_j, 
        \sum_{i\in V} \mbdelta_i\right\rangle
\geq  d - 3\epsilon\sqrt{T_V d},
\end{align*}
with probability at least $ 1 - 4 T_V \exp(\nicefrac{-c\epsilon^2}{G^4})$.

\paragraph{Case $T_V\geq 2$.}
First note that,
\begin{align*}
\forall j \in V \qquad \quad \left\langle \mbdelta_j, \sum_{i\in V} \mbdelta_i\right\rangle  = \|\mbdelta_j\|^2 +  \left\langle \mbdelta_j, \sum_{i\in V, i\not=j} \mbdelta_i\right\rangle 
\end{align*}

For each of the $T_V$ different 
$\mbdelta_j$'s, using \cref{lem:norm-conc} bounds  the probability of the norm $\|\mbdelta_j\|$ being larger than $\sqrt{d} - \epsilon$:
\[
	p\left(
\|\mbdelta_j\| < \sqrt{d} - \epsilon
\right)  
\leq 
	p\left(
\left| \,\,
	\|\mbdelta_j\| - \sqrt{d}
\right| 
			> 		\epsilon
\right) 
	\leq 2 
	\exp(\nicefrac{-c\epsilon^2}{G^4}).
\]

In the case where $T_V\geq 2$, we express the inner product of a vector and a sum of vectors as follows
\[\left\langle \mbdelta_j, \sum_{i\in V, i\not=j} \mbdelta_i\right\rangle  = \sqrt{T_V-1}\left\langle \mbdelta_j, \frac{1}{\sqrt{T_V-1}}\sum_{i\in V, i\not=j} \mbdelta_i\right\rangle ,\]
and noting that like above, $\frac{1}{\sqrt{T_V-1}}\sum_{i\in V, i\not=j} \mbdelta_i$ is a vector of standard normal random variables, 
we apply \cref{lem:inner-products} to get

\[
	\forall i\in V \qquad \quad 
p\left(
	\left|
		\left\langle \mbdelta_j, \sum_{i\in V, i\not=j} \mbdelta_i\right\rangle	\right|
		\geq \epsilon \sqrt{(T_V-1)d}
\right)
		\leq 2\exp\left(-c\frac{\epsilon^2}{G^4}\right).
		\]
		
Putting these together, by union bound over $V$
\begin{align*}
	p&\left[\forall j  \in V \qquad 
\Bigg(	\|\mbdelta_j\| < \sqrt{d} - \epsilon \Bigg) \,\,\text{ or } \,\,
	\left(
	\left|
		\left\langle \mbdelta_j, \sum_{i\in V, i\not=j} \mbdelta_i\right\rangle	\right|
		\geq \epsilon \sqrt{(T_V-1)d}
\right)
\right]
\\
	& \leq \sum_{j\in V}
	p
\Bigg(	\|\mbdelta_j\| < \sqrt{d} - \epsilon \Bigg)
+ 
	p\left(
	\left|
		\left\langle \mbdelta_j, \sum_{i\in V, i\not=j} \mbdelta_i\right\rangle	\right|
		\geq \epsilon \sqrt{(T_V-1)d}
\right)
\\
	& \leq 
\sum_{j\in V} 2\exp\left(-c\frac{\epsilon^2}{G^4}\right) + 2\exp\left(-c\frac{\epsilon^2}{G^4}\right)
\\
	& \leq 4T_V \exp\left(-c\frac{\epsilon^2}{G^4}\right).
\end{align*}

Thus, with probability at least $1 - 4T_V \exp\left(-c\frac{\epsilon^2}{G^4}\right),$ none of the events happen and
\begin{align*}
\forall j \in V  \qquad
	\left\langle \mbdelta_j, \sum_{i\in V} \mbdelta_i\right\rangle  
	 \quad & = \quad  \|\mbdelta_j\|^2 +  \left\langle \mbdelta_j, \sum_{i\in V, i\not=j} \mbdelta_i\right\rangle 
\\
	& \geq (\sqrt{d}-\epsilon)^2 - \epsilon\sqrt{(T_V-1)d}
\\
	& 
	= d - 2\epsilon\sqrt{d} + \epsilon^2 - \epsilon\sqrt{(T_V-1)d}
\\
	 &\geq d - 2\epsilon\sqrt{(T_V-1)d} - \epsilon\sqrt{(T_V-1)d} \qquad \quad \quad \quad
\\
	 &\geq d - 3\epsilon\sqrt{T_V d}
\end{align*}

Thus, putting the analysis in the two cases together, as long as $T_V \geq 1$
\[\forall j \in V  \qquad
	\left\langle \mbdelta_j, \sum_{i\in V} \mbdelta_i\right\rangle  \geq d - 3\epsilon\sqrt{T_V d}, \]
 with probability at least $1 - 4T_V \exp\left(-c\frac{\epsilon^2}{G^4}\right).$

\paragraph{Proof of \cref{eq:shortcut-inner-product}}
Next, we apply \cref{lem:inner-products} again to the inner product of two vectors of i.i.d standard normal random variables:

\begin{align*}
\forall j\in  U \qquad p\left(\quad \left|\left\langle \mbdelta_j , \frac{1}{\sqrt{T_V}}\sum_{i\in V} \mbdelta_i\right\rangle\right| \geq \epsilon \sqrt{{d}}\right) < 2\exp(\nicefrac{-c\epsilon^2}{G^4}).
\end{align*}

By union bound over $U$
\begin{align*}
\qquad p\left[\forall j\in  U  \qquad \left(\quad \left|\left\langle \mbdelta_j , \frac{1}{\sqrt{T_V}}\sum_{i\in V} \mbdelta_i\right\rangle\right| \geq \epsilon \sqrt{{d}}\right)\right] < 2 T_u \exp(\nicefrac{-c\epsilon^2}{G^4}).
\end{align*}

Thus, with probability at least $1 - 2 T_u \exp\left(-c\frac{\epsilon^2}{G^4}\right),$ the following holds, concluding the proof

\[
	\forall j\in  U  \qquad \quad \quad \left|\left\langle \mbdelta_j , \frac{1}{\sqrt{T_V}}\sum_{i\in V} \mbdelta_i\right\rangle\right| \leq \epsilon \sqrt{{d}}.
\]
\end{proof}

\begin{lemma}\label{lemma:dual}
Let $\{\mbx_i, \mby_i\}_{i\leq n}$ be a collection of $d$ dimensional covariates  $\mbx_i$ and label $\mby_i$ sampled according to $p_\rho$ in \cref{eq:sim-example}.
The covariates $\mbx_i=[\pm B \mby_i, \mby_i \mbdelta_i]$, where $+B$ in the middle coordinate for $i\in S_{\text{shortcut}}$ and $-B$ for $i\in S_{\text{leftover}}$. 
The dual formulation of the following norm-minimization problem
\begin{align*}
    \mbw_{\text{stable}} =  \argmin_{\mbw} & \quad  \mbw_y^2 + \mbw_z^2 + \|\mbw_e\|^2   \\
        \text{s.t. } & i\in S_{\text{shortcut}} \quad w_y + Bw_z + \mbw_e^\top \mby_i \mbdelta_i > 1 \\ 
    \text{s.t. } & i\in S_{\text{leftover}} \quad w_y - Bw_z + \mbw_e^\top \mby_i \mbdelta_i > 1 \\ 
        & \mbw_y \geq B \mbw_z
\end{align*}
is the following with $\zeta^\top= [-B, 1, \mathbf{0}^{d-2}]$,
\begin{align}
	  \max_{\lambda\geq 0, \nu \geq 0}- \frac{1}{4}\|\zeta \nu + X^\top\lambda\|^2  +    \ind^\top \lambda,
\end{align}
where $X$ is a matrix with $\mby_i\mbx_i$ as its rows.
\end{lemma}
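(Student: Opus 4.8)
The plan is to recognize the primal as a convex quadratic program with affine inequality constraints and to compute its Lagrangian dual in the standard way. First I would rewrite the objective $\mbw_y^2 + \mbw_z^2 + \|\mbw_e\|^2$ as $\|\mbw\|^2$ with $\mbw = [\mbw_z, \mbw_y, \mbw_e] \in \bbR^d$, and collect the $n$ margin constraints into the single vector inequality $X\mbw \geq \ind$, where $X$ has rows $\mby_i\mbx_i$: for $i \in S_{\text{shortcut}}$ the $i$th row is $[B,1,\mby_i\mbdelta_i]$, so the constraint reads $B\mbw_z + \mbw_y + \mbw_e^\top\mby_i\mbdelta_i \geq 1$, and for $i \in S_{\text{leftover}}$ the row is $[-B,1,\mby_i\mbdelta_i]$, giving $-B\mbw_z + \mbw_y + \mbw_e^\top\mby_i\mbdelta_i \geq 1$; these match the stated constraints. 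The ordering constraint $\mbw_y \geq B\mbw_z$ becomes $\zeta^\top\mbw \geq 0$ with $\zeta = [-B,1,\mathbf{0}^{d-2}]$, since $\zeta^\top\mbw = \mbw_y - B\mbw_z$. I would also note that replacing the strict inequalities $X\mbw > \ind$ by $X\mbw \geq \ind$ leaves both the optimal value and (on the closed problem) the minimizer unchanged, so that passing to the closed convex QP is legitimate for the purpose of computing the dual.

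Next I would form the Lagrangian with multipliers $\lambda \geq 0$ (componentwise, of length $n$) for $\ind - X\mbw \leq 0$ and $\nu \geq 0$ for $-\zeta^\top\mbw \leq 0$:
\[
L(\mbw,\lambda,\nu) = \|\mbw\|^2 + \ind^\top\lambda - \mbw^\top\!\left(X^\top\lambda + \nu\zeta\right).
\]
For fixed $(\lambda,\nu)$ this is a strictly convex quadratic in $\mbw$, so its unique minimizer is found from $\nabla_{\mbw}L = 2\mbw - (X^\top\lambda + \nu\zeta) = 0$, i.e. $\mbw^\star = \tfrac12(X^\top\lambda + \nu\zeta)$. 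Substituting $\mbw^\star$ back and simplifying $\tfrac14\|v\|^2 - \tfrac12\|v\|^2$ with $v = X^\top\lambda + \nu\zeta$ yields the dual function $g(\lambda,\nu) = -\tfrac14\|X^\top\lambda + \nu\zeta\|^2 + \ind^\top\lambda$, and hence the dual problem is $\max_{\lambda\geq 0,\ \nu\geq 0}\, g(\lambda,\nu)$, which is exactly the claimed expression once one uses $\nu\zeta = \zeta\nu$.

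Finally, I would record strong duality, both to justify calling this the dual and to make the identity useful downstream: the primal is a feasible convex QP with only affine constraints — feasibility holds because the perfect stable feature makes the data linearly separable (e.g. $\mbw = [0,C,\mathbf{0}^{d-2}]$ with $C>1$ is strictly feasible and satisfies $\zeta^\top\mbw = C \geq 0$) — so Slater's condition holds, there is no duality gap, and the dual optimizer $(\lambda^\star,\nu^\star)$ recovers $\mbw_{\text{stable}} = \tfrac12(X^\top\lambda^\star + \nu^\star\zeta)$. The only genuinely delicate point, and the one I would be most careful about, is the strict-versus-non-strict inequality bookkeeping: one checks that moving along the direction $[0,1,\mathbf{0}^{d-2}]$ strictly increases every margin while preserving $\zeta^\top\mbw \geq 0$, so the minimizer of the closed problem is an infimum of strictly feasible points and the open and closed problems share the same optimal value; everything else is a routine Lagrangian computation.
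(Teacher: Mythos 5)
Your proposal is correct and follows essentially the same route as the paper's proof: introduce multipliers $\lambda \geq 0$ for the margin constraints $X\mbw \geq \ind$ and $\nu \geq 0$ for $\zeta^\top \mbw \geq 0$, minimize the Lagrangian over $\mbw$ in closed form to get $\mbw = \tfrac{1}{2}(X^\top\lambda + \nu\zeta)$, and substitute back to obtain $-\tfrac{1}{4}\|\zeta\nu + X^\top\lambda\|^2 + \ind^\top\lambda$. Your additional care about strict-versus-non-strict inequalities and the Slater's-condition justification of strong duality are points the paper leaves implicit, but the argument is the same.
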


\begin{proof}
	
We use Lagrange multipliers $\lambda \in \mathbb{R}^n, \nu \in \mathbb{R}$ to absorb the constraints and then use strong duality.
Letting $\zeta^\top= [-B, 1, \mathbf{0}^{d-2}]$, $X$ be a matrix where the $i$th row is $\mbx_{i}\mby_i$,
\begin{align}
\min_{\mbw} \quad & \|\mbw\|^2   \qquad \text{s.t.}  \qquad X \mbw - \ind \geq 0 \qquad \zeta^\top\mbw \geq 0  \nonumber
\\
& \text{has the same solution as } \nonumber
\\
\max_{\lambda \geq 0, \nu \geq 0} \min_{\mbw} \quad & \|\mbw \|^2 - (X \mbw - \ind)^\top \lambda - \nu \zeta^\top\mbw  \label{eq:full-eq}
\end{align}

Now, we solve the inner minimization to write the dual problem only in terms of $\lambda,\nu $. Solving the inner minimization involves solving a quadratic program, which is done by setting its gradient to zero, 
\begin{align*}
      & \nabla_{\mbw} \left(\|\mbw\|^2 - (X \mbw - \ind)^\top \lambda - \nu \zeta^\top\mbw \right) = 2\mbw - X^\top\lambda - \nu \zeta = 0
     \\
\implies & \mbw = \frac{1}{2}(\zeta \nu + X^\top \lambda)
\end{align*}

Substituting $\mbw =\frac{1}{2}(\zeta \nu + X^\top \lambda)$ in \cref{eq:full-eq}
\begin{align*}
\|\mbw \|^2 - & (X \mbw - \ind)^\top \lambda - \nu \zeta^\top\mbw 
=
\\
& \frac{1}{4}\|\zeta \nu + X^\top\lambda\|^2 - ( \frac{1}{2}(X(\zeta \nu + X^\top \lambda) - \ind)^\top \lambda -  \frac{1}{2}\nu \zeta^\top(\zeta \nu + X^\top \lambda)
\\
& = \frac{1}{4}\|\zeta \nu + X^\top\lambda\|^2 
    - 
(\frac{1}{2}(X(\zeta \nu + X^\top \lambda) - \ind)^\top \lambda 
    - 
         \frac{1}{2}\nu^2 \|\zeta\|^2
    - 
         \frac{1}{2}\nu \zeta^\top X^\top \lambda
\\
& =  
    \frac{1}{4}\|\zeta \nu + X^\top\lambda\|^2 
    - 
        \frac{1}{2} (X(X^\top \lambda))^\top \lambda 
    -
        \frac{1}{2} (X(\zeta \nu))^\top \lambda 
    + 
         \ind^\top \lambda 
    -
    \frac{1}{2} \nu^2 \|\zeta \|^2 -  \frac{1}{2}\nu \zeta^\top X^\top \lambda
\\
& =  
    \frac{1}{4}\|\zeta \nu + X^\top\lambda\|^2 
    -
    \left(
       \frac{1}{2}  (X(X^\top \lambda))^\top \lambda 
    +
        \frac{1}{2} \nu^2\|\zeta \|^2 
    + 
        \nu \zeta^\top X^\top \lambda
\right)
        + 
       \ind^\top \lambda 
\\
& =  
    \frac{1}{4}\|\zeta \nu + X^\top\lambda\|^2 
    -
    \left(
       \frac{1}{2}  (X^\top \lambda)^\top X^\top \lambda 
    +
        \frac{1}{2} \nu^2\|\zeta \|^2 
    + 
        \nu \zeta^\top X^\top \lambda
\right)
        + 
       \ind^\top \lambda 
\\
& =  
    \frac{1}{4}\|\zeta \nu + X^\top\lambda\|^2 
    -
      \frac{1}{2} \left(
      \|X^\top \lambda\|^2
    +
            \|\nu \zeta\|^2 
    + 
        2\nu \zeta^\top X^\top \lambda
\right)
        + 
       \ind^\top \lambda 
\\
& =  
    \frac{1}{4}\|\zeta \nu + X^\top\lambda\|^2 
    -
       \frac{1}{2}  \|\zeta \nu + X^\top\lambda\|^2 
    + 
          \ind^\top \lambda 
\\
& =  
    - \frac{1}{4}\|\zeta \nu + X^\top\lambda\|^2  +    \ind^\top \lambda 
\end{align*}
\end{proof}

\subsection{Shortcut learning in max-margin classification}
\setcounter{thm}{0}
\setcounter{corr}{0}

We repeat the \gls{dgp} from the linear perception task in \cref{eq:sim-example} here.
\begin{align}
\label{eq:sim-example-rebuttal}
    \mby \sim \textrm{Rad}, 
    \quad
    \mbz 
    \sim \begin{cases}
            p_\rho(\mbz = y \g \mby = y) = \rho \\
            p_\rho(\mbz = -y \g \mby = y) = (1 - \rho) \\
            \end{cases},
\quad \mbdelta  \sim \cN(0, \mathbf{I}^{d-2}),
    \quad 
    \mbx 
    = \left[B*\mbz,\mby,\mbdelta \right].
\end{align}

\newcommand{\scalingpropformal}{
Let $\mbw^*$ be the max-margin predictor
 on $n$ training samples from \cref{eq:sim-example-rebuttal} 
 with a leftover group of size $k$.
There exist constants $C_1, C_2, N_0 > 0$ such that
\begin{align}
      \forall & \,\,\, \text{integers} \,\,\, k\in\left(0,\frac{n}{10}\right) 
      \label{eq:k-upper-bound} 
\\
 \forall &\,\, d \geq C_1 k \log (3n), \label{eq:d-lower-bound}
	\\
\forall &\,\, B > C_2 \sqrt{\nicefrac{d}{k}}, \label{eq:B-lower-bound}
\end{align}
with probability at least $1-\frac{1}{3n}\,\,$ over draws of the training data, it holds that {$\,\,\, {B \mbw_z^*} > {\mbw_y^*}$}.}

\begin{thm}
\scalingpropformal{}
\end{thm}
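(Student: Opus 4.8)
The plan is to use the program of \cref{lemma:dual} as a foil. Write $\mathcal F=\{\mbw:\mby_i\mbw^\top\mbx_i\ge 1\ \forall i\}$ for the max-margin feasible set, so that $\mbw^*=\argmin_{\mbw\in\mathcal F}\|\mbw\|$ (this is the classifier $\derm$ converges to in direction, by \citet{soudry2018implicit}), and let $\mbw_{\text{stable}}=\argmin_{\mbw\in\mathcal F\cap\{\mbw_y\ge B\mbw_z\}}\|\mbw\|$ be the minimizer of the program in \cref{lemma:dual}. Both are unique by strong convexity, and both sets are nonempty (e.g. $[0,1,0^{d-2}]$ lies in both). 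Since the constrained set is contained in $\mathcal F$, $\|\mbw^*\|\le\|\mbw_{\text{stable}}\|$. The key observation is: if I can exhibit any $\tilde\mbw\in\mathcal F$ with $\|\tilde\mbw\|^2<\|\mbw_{\text{stable}}\|^2$, then $\|\mbw^*\|\le\|\tilde\mbw\|<\|\mbw_{\text{stable}}\|$, so $\mbw^*$ cannot satisfy $\mbw_y^*\ge B\mbw_z^*$ (else it would be feasible for the constrained program with smaller norm than its minimizer), i.e. $B\mbw_z^*>\mbw_y^*$, which is the claim. So the whole proof reduces to: (i) upper bound $\|\mbw^*\|^2$ by a construction, (ii) lower bound $\|\mbw_{\text{stable}}\|^2$ via the dual of \cref{lemma:dual}, (iii) compare.

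\textbf{Upper bound: a shortcut-plus-noise construction.} Following the intuition quoted in the text, I take $\tilde\mbw=[\,c_z/B,\ 0,\ \gamma S_l\,]$ with $S_l=\sum_{i\in S_{\text{leftover}}}\mby_i\mbdelta_i$ and scalars $c_z,\gamma>0$ to be chosen. The margin of $\tilde\mbw$ on sample $i$ is $\pm c_z+\gamma\langle \mby_i\mbdelta_i,\,S_l\rangle$, with a $+$ on the shortcut group and a $-$ on the leftover group. Apply \cref{lemma:concentration} with $V=\{\mby_i\mbdelta_i:i\in S_{\text{leftover}}\}$ ($T_V=k$) and $U=\{\mby_i\mbdelta_i:i\in S_{\text{shortcut}}\}$ ($T_U=n-k$), for a deviation parameter $\epsilon$ to be fixed last: by \cref{eq:leftover-inner-product} the leftover margins are $\ge -c_z+\gamma(d-3\epsilon\sqrt{kd})$, by \cref{eq:shortcut-inner-product} the shortcut margins are $\ge c_z-\gamma\epsilon\sqrt{kd}$, and by \cref{eq:lemma5-norm-bound}, $\|\gamma S_l\|^2\le\gamma^2 k(\sqrt d+\epsilon)^2$. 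Choosing $\gamma d$ just above $2/(1-3\epsilon\sqrt{k/d}-\epsilon\sqrt{k/d})$ and $c_z=1+\gamma d\,\epsilon\sqrt{k/d}$ makes every margin $\ge 1$, so $\tilde\mbw\in\mathcal F$, and
\[
\|\mbw^*\|^2\le\|\tilde\mbw\|^2=\frac{c_z^2}{B^2}+\gamma^2k(\sqrt d+\epsilon)^2
=\frac{4k}{d}\bigl(1+O(\epsilon\sqrt{k/d})\bigr)+\frac{2}{B^2}.
\]

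\textbf{Lower bound via the dual.} By \cref{lemma:dual}, $\|\mbw_{\text{stable}}\|^2=\max_{\lambda\ge 0,\nu\ge 0}\bigl[-\tfrac14\|\zeta\nu+X^\top\lambda\|^2+\ind^\top\lambda\bigr]$; any dual-feasible point gives a lower bound. I plug in the point supported only on the shortcut group: $\lambda_i=\Lambda/(n-k)$ for $i\in S_{\text{shortcut}}$, $\lambda_i=0$ for $i\in S_{\text{leftover}}$, and $\nu=\Lambda$. With this choice the $\pm B$ terms cancel so the first coordinate of $\zeta\nu+X^\top\lambda$ vanishes, the stable coordinate equals $2\Lambda$, the noise block equals $\tfrac{\Lambda}{n-k}\sum_{i\in S_{\text{shortcut}}}\mby_i\mbdelta_i$, and $\ind^\top\lambda=\Lambda$. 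Bounding the noise block by \cref{eq:lemma5-norm-bound} with $T_V=n-k$ gives the concave one-variable function $-\Lambda^2\bigl(1+\tfrac{(\sqrt d+\epsilon)^2}{4(n-k)}\bigr)+\Lambda$, whose maximum over $\Lambda\ge 0$ is $\tfrac{n-k}{4(n-k)+(\sqrt d+\epsilon)^2}$. Hence $\|\mbw_{\text{stable}}\|^2\ge\tfrac{n-k}{4(n-k)+(\sqrt d+\epsilon)^2}$, and since $k<n/10$ we have $n-k>\tfrac9{10}n$.

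\textbf{Comparison, parameter choice, and the hard part.} It remains to check $\tfrac{4k}{d}(1+O(\epsilon\sqrt{k/d}))+\tfrac{2}{B^2}<\tfrac{n-k}{4(n-k)+(\sqrt d+\epsilon)^2}$. Use $B>C_2\sqrt{d/k}$ to replace $2/B^2$ by $2k/(C_2^2 d)$, and use $k\le\min\{n/10,\ d/(C_1\log 3n)\}$ from the hypotheses; then the left side is at most $\tfrac kd\bigl(4+O(1/\sqrt{C_1})+2/C_2^2\bigr)$ and also at most $\tfrac{4+O(1)}{C_1\log 3n}$. Splitting into the regimes $d\gtrsim nC_1\log 3n$ (where the right side is $\asymp \tfrac{9n}{10d}$ and one needs only $4+o(1)<9$) and $d\lesssim nC_1\log 3n$ (where both sides are $\asymp 1/(C_1\log 3n)$ with leading constants $4$ versus $9$), the inequality holds once $C_1,C_2$ are chosen large and $n>N_0$; this is exactly where the $k<n/10$ hypothesis is used, as it supplies the factor-$\ge 9$ headroom against the constant $4$. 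Finally set $\epsilon=\Theta(\sqrt{\log n})$ large enough that the $O(1)$ invocations of \cref{lemma:concentration} (each over vector sets of size $\le n$) hold simultaneously with probability $\ge 1-\tfrac1{3n}$; the bound $d\ge C_1k\log 3n$ simultaneously guarantees $\epsilon\le G^2\sqrt d$ (so the lemma applies) and forces every $\epsilon/\sqrt d$- and $\epsilon\sqrt{k/d}$-type correction above to be $O(1/\sqrt{C_1})$. (\cref{corr:noverparam} then follows by specializing $d=\tau n$.) The main obstacle is this final comparison: when $d\gg n$ both $\|\mbw^*\|^2$ and $\|\mbw_{\text{stable}}\|^2$ are $\Theta(k/d)$, so the argument is genuinely a fight over constants — the construction must be tuned so its leading constant ($4k$) provably beats $\|\mbw_{\text{stable}}\|$'s leading constant ($\ge\tfrac9{10}n$) even after the multiplicative $(1+o(1))$ losses, and getting a single consistent choice of $C_1$, $C_2$, $\epsilon$ and $N_0$ that does this in both regimes of $d$ versus $n$ is the delicate step.
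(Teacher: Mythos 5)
Your proposal is correct and follows essentially the same route as the paper's proof: the identical primal candidate $[\beta/B,\,0,\,\gamma\sum_{i\in S_{\text{leftover}}}\mby_i\mbdelta_i]$ for the upper bound, the identical weak-duality lower bound via \cref{lemma:dual}, the same concentration lemma, and the same final constant-chasing with $\epsilon=\Theta(\sqrt{\log n})$. The only deviation is your dual certificate --- mass spread uniformly over the whole shortcut group with $\nu=\Lambda$, giving $\|\mbw_{\text{stable}}\|^2\ge \frac{n-k}{4(n-k)+(\sqrt d+\epsilon)^2}$ --- whereas the paper supports $\lambda$ on the leftover group plus a matched subset of $(2M-1)k$ shortcut samples (ultimately $M=5$), yielding $\frac{1}{4+(\sqrt d+\epsilon)^2/(10k)}$; both certificates are of the same order and either one closes the separation inequality.
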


\vspace{10pt}
Before giving the proof of \cref{thm:bad-case}, we first give the corollary showing overparameterization is not necessary for \cref{thm:bad-case} to hold.\\

\begin{corr}
For all $n > N_0$ --- where the constant $N_0$ is from \cref{thm:bad-case} --- with scalar $\tau\in (0,1)$ such that the dimension $d=\tau n < n$, \cref{thm:bad-case} holds. 
\[\forall k \leq n \times \min\left\{\frac{1}{10}, \frac{\tau}{C_1 \log 3n}\right\},\]
a linear model trained via  \derm{} yields a predictor $\mbw^*$ such that
 $ {B \mbw_z^*} > {\mbw_y^*}$.
 \end{corr}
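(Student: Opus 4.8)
The plan is to obtain \Cref{corr:noverparam} as a direct specialization of \Cref{thm:bad-case} to the regime in which the covariate dimension is a fixed fraction of the sample size, $d = \tau n$ with $\tau \in (0,1)$. The only extra ingredient needed is the observation, already used in the proof of \Cref{thm:bad-case}, that \derm{} on this \gls{dgp} produces the max-margin predictor: because the stable coordinate $\mbx_2 = \mby \in \{+1,-1\}$ gives the classifier $[0,1,0^{d-2}]$ a margin of $1$ on every training point, the $n$ samples are linearly separable, and the implicit-bias characterization of gradient-based optimization of log-loss \citep{soudry2018implicit} guarantees that the iterates converge in direction to the minimum-$\ell_2$-norm unit-margin solution $\mbw^*$. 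So it suffices to show that, under the stated constraint on $k$, the hypotheses of \Cref{thm:bad-case} hold, which then delivers $B\mbw_z^* > \mbw_y^*$ with probability at least $1-\nicefrac{1}{3n}$.

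The verification amounts to substituting $d = \tau n$ into the three conditions \eqref{eq:k-upper-bound}--\eqref{eq:B-lower-bound} of \Cref{thm:bad-case}. First, $k < n/10$ is immediate since $k < n\cdot\min\{\nicefrac{1}{10},\, \nicefrac{\tau}{C_1 \log 3n}\} \le n/10$. Second, the dimension condition $d \ge C_1 k \log(3n)$ rearranges, after plugging in $d=\tau n$, to $k \le \nicefrac{\tau n}{C_1 \log 3n}$, which again follows from $k < n\cdot\min\{\nicefrac{1}{10},\, \nicefrac{\tau}{C_1 \log 3n}\}$. Third, the scaling condition $B > C_2\sqrt{\nicefrac{d}{k}}$ is inherited verbatim (equivalently, $B > C_2\sqrt{\nicefrac{\tau n}{k}}$ after the substitution). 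With all three in force, \Cref{thm:bad-case} applies and yields the conclusion. The concluding remark is simply that $d = \tau n < n$ means the linear model has strictly fewer parameters than training samples, so the shortcut-dependence conclusion holds without overparameterization.

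I do not expect a genuine obstacle here, since the mathematical content is carried entirely by \Cref{thm:bad-case} and the argument is bookkeeping. The only points that deserve a sentence of care are (i) that the admissible set of integers $k$ is non-empty, i.e.\ $n\cdot\min\{\nicefrac{1}{10},\, \nicefrac{\tau}{C_1\log 3n}\} \ge 1$, which holds once $n$ is large enough given $\tau$ (for smaller $n$ the statement is vacuous, so no harm is done), and (ii) that the cited directional-convergence result indeed covers the loss and optimizer used by \derm{} on separable data, exactly as already invoked in \Cref{sec:method}. Beyond these, the corollary is an immediate reparametrization of the theorem.
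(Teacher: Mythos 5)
Your proposal is correct and follows essentially the same route as the paper: substitute $d=\tau n$ into the hypotheses of \Cref{thm:bad-case}, observe that the dimension condition $d\ge C_1 k\log(3n)$ becomes the second term in the $\min$ over $k$, and conclude. Your added remarks (that the $B$ condition is still inherited, and that the range of admissible integers $k$ must be non-empty) are points the paper glosses over, but they do not change the argument.
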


\begin{proof}
We show that for a range of $k$, for all $n\geq N_0$ \cref{thm:bad-case} holds for some $d < n$.
Note that \cref{thm:bad-case} holds for $n\geq N_0, d = C_1 k \log (3n)$ and 
\[\forall k < \frac{n}{10}.\]

Setting $d \leq \tau n $ for some $\tau \in (0,1)$ such that $d < n$ means that \cref{thm:bad-case} holds if
\[C_1 k \log (3n) = d 
\leq \tau n \implies k\leq \frac{\tau n }{C_1\log (3n)}.\]

Absorbing this new upper bound into the requirements on $k$ for \cref{thm:bad-case} to hold, we get that for any scalar $n>N_0, \tau \in (0,1), d=\tau n$, \cref{thm:bad-case} holds for
\[\forall k < n\times \min \left\{\frac{1}{10}, \frac{\tau}{C_1 \log(3n)}  \right\}.\]
In turn, even though $d<n$, a linear model trained via \derm{} converges in direction to a max-margin classifier such that $\mbw^*$ with
 ${B \mbw_z^*} > {\mbw_y^*}$.
\end{proof}

\newcommand{\goodcaselemma}{Consider the following optimization problem from \cref{eq:good-case}  where $n$ samples of $\mbx_i, \mby_i$  come from \cref{eq:sim-example} where $\mbx_i \in \mathbf{R}^d$:
  \begin{align}
  \begin{split}
    \mbw_{\text{stable}} =  & \argmin_{\mbw} \quad  w_y^2 + w_z^2 + \|\mbw_e\|^2   \\
        \text{s.t. } & i\in S_{\text{shortcut}} \quad w_y + Bw_z + \mbw_e^\top \mby_i \mbdelta_i > 1 \\ 
    \text{s.t. } & i\in S_{\text{leftover}} \quad w_y - Bw_z + \mbw_e^\top \mby_i \mbdelta_i > 1 \\ 
        & \mbw_y \geq B \mbw_z
    \end{split}
  \end{align}
Let $k=|S_{\text{leftover}}| > 1$.
 Then,  for a fixed constant $c=\frac{1}{(2e)^2}$, with any scalar $\epsilon < \sqrt{d}$, with probability at least $1 - 2\exp(-\nicefrac{c\epsilon^2}{G^4})$ and $\forall \text{ integers }  M \in \left[1, \lfloor\frac{n}{2k}\rfloor\right]$,
 \[ \|\mbw_{\text{stable}}\|^2 \geq W_{\text{stable}} = \frac{1}{4 + \frac{\left(\sqrt{d} + \epsilon\right)^2}{2Mk}}.\]
 }

\newcommand{\badcaselemma}{	Consider the following optimization problem from \cref{eq:good-case} where $n$ samples of $\mbx_i, \mby_i$ come from \cref{eq:sim-example} where $\mbx_i \in \mathbf{R}^d$:
	  \begin{align}
  \begin{split}
    \mbw_{\text{shortcut}} = & \argmin_{\mbw} \quad  w_y^2 + w_z^2 + \|\mbw_e\|^2   \\
        \text{s.t. } & i\in S_{\text{shortcut}} \quad w_y + Bw_z + \mbw_e^\top \mby_i \mbdelta_i > 1 \\
         & i\in S_{\text{leftover}} \quad w_y - Bw_z + \mbw_e^\top \mby_i \mbdelta_i > 1 \\ 
        & \mbw_y < B \mbw_z
    \end{split}
  \end{align}
Let $k=|S_{\text{leftover}}| \geq 1 $.
 Then,  for a fixed constant $c=\frac{1}{(2e)^2}$, with any scalar $\epsilon <\frac{1}{3}\sqrt{\frac{d}{k}} < \sqrt{d}$, with probability at least $1 - 2(2k + (n-k) + 1)
\exp(-c\frac{\epsilon^2}{G^4})$, for  $\gamma = \frac{2}{d - 4\epsilon\sqrt{kd}}$,
\[ \|\mbw_{\text{shortcut}}\|^2 \leq W_{\text{shortcut}}= \gamma^2 k (\sqrt{d} + \epsilon)^2 +  \frac{\left(1 + \gamma\epsilon\sqrt{dk}\right)^2}{B^2}\]
}

\vspace{10pt}
\begin{proof} (of \cref{thm:bad-case})
 We consider two norm-minimization problems over $\mbw$, one under constraint $\mbw_y \geq B \mbw_z$ and another under $\mbw_y < B \mbw_z$.
We show that the latter achieves lower norm and therefore, max-margin will achieve solutions $\mbw_y < B \mbw_z$.
The two minimization problems are as follows:

\noindent\begin{minipage}{0.49\linewidth}
  \begin{align}
  \begin{split}
    \mbw_{\text{stable}} =  & \argmin_{\mbw} \quad  w_y^2 + w_z^2 + \|\mbw_e\|^2   \\
        \text{s.t. } & i\in S_{\text{shortcut}} \quad w_y + Bw_z + \mbw_e^\top \mby_i \mbdelta_i > 1 \\ 
    \text{s.t. } & i\in S_{\text{leftover}} \quad w_y - Bw_z + \mbw_e^\top \mby_i \mbdelta_i > 1 \\ 
        & \mbw_y \geq B \mbw_z
    \end{split}
    \label{eq:good-case}
  \end{align}
\end{minipage}
\hspace{10pt}
\begin{minipage}{0.48\linewidth}
  \begin{align}
  \begin{split}
    \mbw_{\text{shortcut}} = & \argmin_{\mbw} \quad  w_y^2 + w_z^2 + \|\mbw_e\|^2   \\
        \text{s.t. } & i\in S_{\text{shortcut}} \quad w_y + Bw_z + \mbw_e^\top \mby_i \mbdelta_i > 1 \\
         & i\in S_{\text{leftover}} \quad w_y - Bw_z + \mbw_e^\top \mby_i \mbdelta_i > 1 \\ 
        & \mbw_y < B \mbw_z
    \end{split}
    \label{eq:bad-case}
  \end{align}
  
\end{minipage}

From \cref{eq:good-case}, any $\mbw$ that satisfy the constraints of the dual maximization problem will lower bound the value of the optimum of the primal, $\|\mbw_{\text{stable}}\|^2 \geq W_{\text{stable}}$.
From the \cref{eq:bad-case}, substituting a guess in $\mbw_{\text{shortcut}}$ that satisfies the constraints yields an upper bound, $\|\mbw_{\text{shortcut}}\|^2 \leq W_{\text{shortcut}}$.
The actual computation of the bounds $W_{\text{shortcut}}, W_{\text{stable}}$ is in \cref{lemma:bad-case,lemma:good-case} which are proved in \cref{appsec:good-case-lemma} and \cref{appsec:bad-case-lemma} respectively.
We reproduce the lemmas here for convenience.\\

\begin{lemma*}$\boldsymbol{(7)}$ 
\goodcaselemma{}	
\end{lemma*}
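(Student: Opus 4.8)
The plan is to lower‑bound $\|\mbw_{\text{stable}}\|^2$, the optimal value of the convex quadratic program in \cref{eq:good-case}, by evaluating its Lagrangian dual at a single well‑chosen feasible point. By weak duality together with the dual form computed in \cref{lemma:dual}, for every $\lambda\in\bbR^{n}$ with $\lambda\geq 0$ and every $\nu\geq 0$,
\[
\|\mbw_{\text{stable}}\|^2 \;\geq\; -\tfrac14\,\big\|\zeta\nu + X^\top\lambda\big\|^2 + \ind^\top\lambda ,
\qquad \zeta = [\,-B,\;1,\;\mathbf{0}^{d-2}\,],
\]
where $X$ is the matrix whose $i$th row is $\mby_i\mbx_i$. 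So it is enough to exhibit one $(\lambda,\nu)$ and compute the right‑hand side.

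First I would fix an integer $M\in[1,\lfloor n/(2k)\rfloor]$ and a subset $T$ of the shortcut group of size $|T|=2Mk$ (if $2Mk$ happens to exceed the size of the shortcut group, one fills $T$ out with leftover samples; one checks this only strengthens the bound below, so I will write the computation for $T\subseteq S_{\text{shortcut}}$). For a scalar $\alpha\geq 0$ to be optimized, set $\lambda_i=\alpha$ for $i\in T$, $\lambda_i=0$ otherwise, and $\nu=\alpha|T|$. Since $\mby_i\mbx_i=[\,B,\,1,\,\mby_i\mbdelta_i\,]$ for $i\in S_{\text{shortcut}}$, a one‑line computation gives $\zeta\nu + X^\top\lambda = [\,0,\;2\alpha|T|,\;\alpha v\,]$ with $v:=\sum_{i\in T}\mby_i\mbdelta_i$ — the value of $\nu$ is exactly what annihilates the $B$‑scaled coordinate. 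The dual objective then reduces to the concave quadratic $-\alpha^2\big(|T|^2+\tfrac14\|v\|^2\big)+\alpha|T|$, whose maximum over $\alpha\geq 0$ equals $\dfrac{|T|^2}{4|T|^2+\|v\|^2}=\dfrac{1}{4+\|v\|^2/|T|^2}$.

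The only remaining quantity to control is $\|v\|$. Since the $\mbdelta_i\sim\cN(0,\mathbf{I}^{d-2})$ and $\mby_i\in\{\pm1\}$, the $\{\mby_i\mbdelta_i\}_{i\in T}$ are i.i.d.\ standard Gaussian vectors, so \cref{eq:lemma5-norm-bound} of \cref{lemma:concentration}, applied with $T_V=|T|=2Mk$ and the given $\epsilon<\sqrt d$, yields $\|v\|\leq\sqrt{|T|}\,(\sqrt{d-2}+\epsilon)\leq\sqrt{|T|}\,(\sqrt d+\epsilon)$ with probability at least $1-2\exp(-c\epsilon^2/G^4)$. On that event $\|v\|^2/|T|^2\leq(\sqrt d+\epsilon)^2/(2Mk)$, hence $\|\mbw_{\text{stable}}\|^2\geq\big(4+(\sqrt d+\epsilon)^2/(2Mk)\big)^{-1}=W_{\text{stable}}$. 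Taking $M=\lfloor n/(2k)\rfloor$ and noting that $W_{\text{stable}}$ is increasing in $M$, the bound for the largest admissible $M$ implies the bound for every admissible $M$ on this single event, which is the claim.

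The individual calculations are routine; the only genuine idea is the construction of the dual point — placing uniform mass on a pure‑shortcut subset and tuning $\nu$ so that the dual norm collapses onto the two ``cheap'' coordinates. (Equivalently one can argue directly in the primal: sum the shortcut‑group margin constraints over $T$, use $\mbw_y\geq B\mbw_z$ to replace $\mbw_y+B\mbw_z$ by $2\mbw_y$, apply Cauchy--Schwarz to $\mbw_e^\top v$, and invoke the elementary fact that the squared distance from the origin to the half‑plane $\{2a+\beta b\geq1\}$ is $(4+\beta^2)^{-1}$; this gives the same estimate. I would nonetheless present the dual version for uniformity with the proof of \cref{thm:bad-case}.)
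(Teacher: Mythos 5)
Your proof is correct and follows essentially the same route as the paper: weak duality applied to the dual program of \cref{lemma:dual}, a feasible dual point with uniform mass on a set of $2Mk$ samples and $\nu$ tuned to annihilate the $B$-coordinate, optimization of the resulting scalar quadratic, and \cref{lemma:concentration} to control the Gaussian norm. The only (immaterial) difference is your choice of support set --- the paper takes all of $S_{\text{leftover}}$ plus $(2M-1)k$ shortcut samples with $\nu=\alpha\tfrac{2(M-1)}{2M}$, while you take a pure-shortcut subset with $\nu=\alpha|T|$ --- and both yield the same coefficient bound of $4$ on the stable coordinate and the same $W_{\text{stable}}$.
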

\begin{lemma*}$\boldsymbol{(8)}$ 
\badcaselemma{}
\end{lemma*}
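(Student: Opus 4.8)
This lemma supplies the \emph{upper} bound $W_{\text{shortcut}}$ on the squared norm of the minimum-$\ell_2$-norm separator constrained to $\mbw_y < B\mbw_z$; together with the lower bound $W_{\text{stable}}$ of \cref{lemma:good-case} it is what lets the proof of \cref{thm:bad-case} conclude $B\mbw_z^* > \mbw_y^*$ in the regime of $k,d,B$ where $W_{\text{shortcut}} < W_{\text{stable}}$. Since it bounds a minimum from above, and unlike the lower bound (which goes through the convex dual of \cref{lemma:dual}), the plan needs only one feasible primal point: exhibit $\mbw$ satisfying the constraints of \cref{eq:bad-case} with squared norm at most $W_{\text{shortcut}}$. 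Guided by the margin trade-off intuition of the main text, I would take $w_y = 0$ (zero weight on the perfect stable coordinate), $w_z = (1+\gamma\epsilon\sqrt{dk})/B$ (a slightly inflated weight on the shortcut coordinate), and $\mbw_e = \gamma\sum_{i\in S_{\text{leftover}}}\mby_i\mbdelta_i$ with $\gamma = 2/(d-4\epsilon\sqrt{dk})$ (noise weights aligned with the sum of the label-signed leftover noise vectors), so that $\norm{\mbw}^2 = w_z^2 + \norm{\mbw_e}^2$.

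The first step is a symmetry reduction: since $\cN(0,\mathbf{I})$ is sign-symmetric, the vectors $\{\mby_i\mbdelta_i\}$ are i.i.d.\ standard Gaussian, so writing $\tilde\mbdelta_i = \mby_i\mbdelta_i$ gives $\mbw_e = \gamma\sum_{i\in S_{\text{leftover}}}\tilde\mbdelta_i$ and makes the noise contribution to the signed margin of sample $j$ equal to $\mby_j\mbw_e^\top\mbdelta_j = \gamma\langle\tilde\mbdelta_j, \sum_{i\in S_{\text{leftover}}}\tilde\mbdelta_i\rangle$, with $\tilde\mbdelta_j$ independent of the leftover sum whenever $j$ is in the shortcut group. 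I would then invoke \cref{lemma:concentration} once, with $V = S_{\text{leftover}}$ ($T_V = k$), $U = S_{\text{shortcut}}$ ($T_U = n-k$) and the prescribed $\epsilon$: \cref{eq:lemma5-norm-bound} gives $\norm{\mbw_e}^2 \le \gamma^2 k(\sqrt d+\epsilon)^2$ (the first term of $W_{\text{shortcut}}$); \cref{eq:leftover-inner-product} gives $\mby_j\mbw_e^\top\mbdelta_j \ge \gamma(d-3\epsilon\sqrt{dk})$ for every leftover $j$; and \cref{eq:shortcut-inner-product} gives $|\mby_j\mbw_e^\top\mbdelta_j| \le \gamma\epsilon\sqrt{dk}$ for every shortcut $j$. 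The union bound already packaged in \cref{lemma:concentration} makes all three hold at once with probability at least $1 - 2(2k + (n-k) + 1)\exp(-c\epsilon^2/G^4)$, matching the claim.

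On that event, feasibility is a one-line check per constraint. Shortcut: $w_y + Bw_z + \mby_j\mbw_e^\top\mbdelta_j \ge (1+\gamma\epsilon\sqrt{dk}) - \gamma\epsilon\sqrt{dk} = 1$. Leftover: $w_y - Bw_z + \mby_j\mbw_e^\top\mbdelta_j \ge -(1+\gamma\epsilon\sqrt{dk}) + \gamma(d-3\epsilon\sqrt{dk}) = -1 + \gamma(d - 4\epsilon\sqrt{dk}) = 1$, the last step being exactly why $\gamma = 2/(d-4\epsilon\sqrt{dk})$. And $w_y = 0 < 1+\gamma\epsilon\sqrt{dk} = Bw_z$. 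The inequalities come out non-strict rather than strict, which is harmless (pass to the closure, where the infimum is unchanged, or inflate $w_z$ infinitesimally). The hypothesis $\epsilon < \frac{1}{3}\sqrt{d/k} < \sqrt d$ is what keeps the concentration error terms below the leading terms — in particular keeping $d - 4\epsilon\sqrt{dk}$ positive so $\gamma$ is well defined and positive, and keeping $\epsilon < G^2\sqrt d$ so the inner-product bounds in \cref{lemma:concentration} apply. Summing, $\norm{\mbw_{\text{shortcut}}}^2 \le w_z^2 + \norm{\mbw_e}^2 \le (1+\gamma\epsilon\sqrt{dk})^2/B^2 + \gamma^2 k(\sqrt d+\epsilon)^2 = W_{\text{shortcut}}$.

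I do not expect a deep obstacle: once the ansatz is fixed, the rest is bookkeeping on top of \cref{lemma:concentration}. The one genuinely creative step — where the margin trade-off enters — is the choice of $\mbw_e$. Aligning it with $\sum_{i\in S_{\text{leftover}}}\mby_i\mbdelta_i$ forces $\mby_j\mbw_e^\top\mbdelta_j$ to contain the term $\gamma\norm{\mbdelta_j}^2 \approx \gamma d$ on each leftover sample, which is exactly what lifts the leftover margin from its \emph{negative} base $w_y - Bw_z < 0$ past $1$; on the shortcut group, by contrast, $\mby_j\mbw_e^\top\mbdelta_j$ is only a mean-zero fluctuation of magnitude $\le \gamma\epsilon\sqrt{dk}$, which the slightly inflated $w_z$ exactly absorbs; and the classifier stays small in $\ell_2$ norm because the shortcut coordinate needs only weight about $1/B$ with $B$ large. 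The only real care in writing this out is tracking the error terms tightly enough that the two feasibility inequalities land on exactly $1$ for the stated $\gamma$, and assembling the three failure events of \cref{lemma:concentration} into the precise claimed probability.
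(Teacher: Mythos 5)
Your proposal is correct and follows essentially the same route as the paper: the paper's candidate is $\mbw=[\nicefrac{\beta}{B},0,\gamma\sum_{i\in S_{\text{leftover}}}\mby_i\mbdelta_i]$ with $\beta=1+\gamma\epsilon\sqrt{dk}$ and $\gamma=\nicefrac{2}{(d-4\epsilon\sqrt{kd})}$, which is exactly your ansatz, and it verifies feasibility and bounds the norm by the same single application of \cref{lemma:concentration} with $V=S_{\text{leftover}}$, $U=S_{\text{shortcut}}$. The only difference is cosmetic bookkeeping (your explicit remark about strict versus non-strict margin inequalities, which the paper glosses over).
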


Together, the lemmas say that for any $\forall \text{ integers }  M \in \left[1, \lfloor\frac{n}{2k}\rfloor\right]$ and $\epsilon <\frac{1}{3}\sqrt{\frac{d}{k}}$, with probability $\geq 1 -2 \exp(\nicefrac{-c\epsilon^2}{G^4})$
 \[ \|\mbw_{\text{stable}}\|^2 \geq W_{\text{stable}} =  \frac{1}{4 + \frac{\left(\sqrt{d} + \epsilon\right)^2}{2Mk}}.\]
and with probability at least $1 - 2(2k + (n-k) + 1)
\exp(-c\frac{\epsilon^2}{G^4})$, for  $\gamma = \frac{2}{d - 4\epsilon\sqrt{kd}} > 0$,
\[ \|\mbw_{\text{shortcut}}\|^2 \leq W_{\text{shortcut}} =  \gamma^2 k (\sqrt{d} + \epsilon)^2 +  \frac{\left(1 + \gamma\epsilon\sqrt{dk}\right)^2}{B^2}\]
 
First, we choose $\epsilon^2 = 2\frac{G^4}{c}\log(3n)$.
This gives us the probability with which these bounds hold: as $k<0.1n$ we have $k+2<\frac{n}{2}$ and
\begin{align*}
	1 - 2(2k + (n-k) + 2)
\exp(-c\frac{\epsilon^2}{G^4}) & = 1 - 2(n + k + 2)\exp(-2\log(3n))
\\
&
	\geq 	1 - 2(\frac{3n}{2})\exp(-2\log(3n))
\\
&
	= 	1 - \exp(-2\log(3n)  + \log(3n))
\\
& =	1 - \exp(-\log(3n))
\\
&	= 1 - \frac{1}{3n}.
\end{align*}

Next, we will instantiate the parameter $M$ and set the constants $C_1,C_2$ and the upper bound on $k$ in \cref{thm:bad-case} to guarantee the following \cref{eq:separation}:
\begin{align}
 \tag{{separation inequality}}
\label{eq:separation}
W_{\text{shortcut}} = \gamma^2 k (\sqrt{d} + \epsilon)^2 +  \frac{\left(1 + \gamma\epsilon\sqrt{dk}\right)^2}{B^2} \quad <  \quad\frac{1}{4 + \frac{\left(\sqrt{d} + \epsilon\right)^2}{2Mk}} = W_{\text{stable}},
\end{align}
which then implies that $\|\mbw_{\text{shortcut}}\|^2 < \|\mbw_{\text{stable}}\|^2$, concluding the proof.

\paragraph{Invoking the conditions in \cref{thm:bad-case} and setting the upper bound on $k$.}
We will keep the $\epsilon$ as is for simplicity of reading but invoke the inequalities satisfied by $\log(3n)$ from \cref{thm:bad-case}: 
\[\exists \text{ constant } C_1, \qquad  d \geq C_1 k \log (3n).\]
Now we let $C_1 = 2\frac{G^4}{c C^2}$ for a constant $C \in \left(0, \frac{1}{3}\right)$\footnote{The $\frac{1}{3}$ comes from requiring that $\epsilon < \frac{1}{3}\sqrt{\frac{d}{k}}$ from \cref{lemma:bad-case}. \vspace{-15pt}}, such that
\begin{align}\label{eq:eps-upper-bound}
	\epsilon^2  = 2\frac{G^4}{c}\log(3n) < C^2 \frac{d}{k} \implies   
\epsilon < C \sqrt{\frac{d}{k}} 
\text{ and } 
	\epsilon \sqrt{kd} < C d.
\end{align}
We next find a $C \in \left(0, \frac{1}{3}\right)$ such that \cref{eq:separation} holds with $M=5$, which upper bounds $k$:
\[M < \frac{n}{2k} \implies \frac{k}{n} < \frac{1}{2M} = \frac{1}{10} \implies k < \frac{n}{10}.\]

\paragraph{Simplifying $W_{\text{shortcut}}$ and $W_{\text{stable}}$.}
To actually show $W_{\text{shortcut}}<W_{\text{stable}}$ in \cref{eq:separation}, we compare a simplified strict upper bound on the LHS $W_{\text{shortcut}}$ and a simplified strict lower bound on the RHS $W_{\text{stable}}$

For the simplification of the RHS $W_{\text{stable}}$ of \cref{eq:separation}, we will use the fact that $d \geq 2\frac{G^4}{c C^2}\log(3n) k $.
Given the assumption $n > N_0$, choosing $N_0$ to be an integer such that $\log(3N_0) \geq  \frac{40 c C^2}{G^4}$ means that $\log(3n) > \frac{40 c C^2}{G^4}$ and we have
\begin{align}\label{eq:upper-bound-on-4}
	\frac{d}{k} > 80 \implies \frac{d}{10 k}  > 8 \implies \frac{1}{2}\frac{d}{10 k} > 4
\end{align}
which gives us, for $M=5$,
\begin{align}
	W_{\text{stable}} & = \frac{1}{4 + \frac{\left(\sqrt{d} + \epsilon\right)^2}{2Mk}} 
\\
& = 
\frac{1}{4 + \frac{\left(\sqrt{d} + \epsilon\right)^2}{10k}}  
\\
	& \geq  \quad\frac{1}{\frac{3}{2}\frac{\left(\sqrt{d} + \epsilon\right)^2}{10k}} 
	 \qquad \qquad \{4 < \frac{1}{2}\frac{d}{10k} < \frac{1}{2}\frac{(\sqrt{d} + \epsilon)^2}{10k} \text{ from \cref{eq:upper-bound-on-4} } \}
\\
	& = \frac{20k}{3(\sqrt{d} + \epsilon)^2} 
\\
	& \geq \frac{20k}{3(\sqrt{d} + C\frac{\sqrt{d}}{\sqrt{k}})^2} 
	\qquad \qquad 
\{\epsilon < \frac{C\sqrt{d}}{\sqrt{k}} \text{ from \cref{eq:eps-upper-bound} } \}
\\
	& = \frac{20k}{3(1+\frac{C}{\sqrt{k}})^2d} 
\\
	& > \frac{20k}{3(1+C)^2d} \qquad \qquad 
    \{  k \geq 1 \}
\end{align}

Now, we produce a simpler upper bound on the first part of the LHS of \cref{eq:separation}:
recalling that $\gamma = \frac{2}{d - 4\epsilon\sqrt{kd}}$, and substituting in the upper bounds on $\epsilon$,
\begin{align}
\gamma^2 k (\sqrt{d} + \epsilon)^2  & =  \left( \frac{2(\sqrt{d} + \epsilon)}{d - 4\epsilon\sqrt{kd}}\right)^2 k \nonumber
\\
	& 
<
4  \left( \frac{(\sqrt{d} + C\sqrt{\frac{d}{k}})}{d - 4C d}\right)^2 k \nonumber
\qquad \qquad 
\{\epsilon < \frac{C\sqrt{d}}{\sqrt{k}} \text{ from \cref{eq:eps-upper-bound} } \}
\\
&	= \frac{4k}{d} \left( \frac{(1 + \frac{C}{\sqrt{k}})}{1 - 4C}\right)^2 \nonumber
\\
&	\leq \frac{4k}{d} \left( \frac{1 + C}{1 - 4C}\right)^2 \label{eq:up-lhs-first}, 
\qquad \qquad \{ k \geq 1\}
\end{align}

Next is a simpler upper bound on the second part of the LHS of \cref{eq:separation}.
Again with $\gamma = \frac{2}{d - 4\epsilon\sqrt{kd}}$, 
\begin{align*}
	\frac{\left(1 + \gamma\epsilon\sqrt{dk}\right)^2}{B^2}&  =   \frac{\left(1 + \frac{2\epsilon\sqrt{dk}}{{d - 4\epsilon\sqrt{kd}}}\right)^2}{B^2}
\\
	& \leq  \frac{\left(1 + \frac{2C d}{{d - 4Cd}}\right)^2}{B^2}
\\
	& = \frac{\left(1 + \frac{2C}{1 - 4C}\right)^2}{B^2}
\end{align*}

Now setting
\[B > \sqrt{2} \frac{{\left(1 + \frac{2C}{1 - 4C}\right)}{}}{\sqrt{\frac{4k}{d}}\left( \frac{1 + C}{1 - 4C}\right)}\]
gives the lower bound on $B$ from \cref{thm:bad-case}:
\[B > C_2 \sqrt{\frac{d}{k}}, \qquad \text{ where } \quad 
C_2 = \frac{{\left(1 + \frac{2C}{1 - 4C}\right)}{}}{\sqrt{2}\left( \frac{1 + C}{1 - 4C}\right)} = \frac{(1 - 2C)}{\sqrt{2}(1 + C)}.\]
Formally, 
\begin{align}\label{eq:upper-bound-b}
	B > C_2 \sqrt{\frac{d}{k}} \implies   \frac{\left(1 + \frac{2C}{1 - 4C}\right)^2}{B^2} <  \frac{1}{2}\left(\sqrt{\frac{4k}{d}}\left( \frac{1 + C}{1 - 4C}\right)\right)^2 
= \frac{1}{2} \frac{4k}{d} \left( \frac{1 + C}{1 - 4C}\right)^2.
\end{align}
By combining the upper bound  from \cref{eq:upper-bound-b} and the upper bound from \cref{eq:up-lhs-first},
we get an upper bound on the whole of the LHS of \cref{eq:separation}, which in turn provides an upper bound on $W_{\text{shortcut}}$:
\[ W_{\text{shortcut}} = \gamma^2 k (\sqrt{d} + \epsilon)^2 +  \frac{\left(1 + \gamma\epsilon\sqrt{dk}\right)^2}{B^2} \quad  < \frac{3}{2} \frac{4k}{d} \left( \frac{(1 + {C}{})}{1 - 4C}\right)^2 \leq \frac{3}{2} \frac{4k}{d} \left( \frac{(1 + C)}{1 - 4C}\right)^2,\]
because $k\geq 1$.
Note the upper bound is strict.

\paragraph{Concluding the proof.}

Now, we show that a $C$ exists such that the following holds, which implies $W_{\text{shortcut}} < W_{\text{stable}}$, which in turn implies \cref{eq:separation} and the proof concludes:
\[ W_{\text{shortcut}} < \frac{3}{2} \frac{4k}{d} \left( \frac{(1 + C)}{1 - 4C}\right)^2 \leq   \frac{20k}{3(1+C)^2d} < W_{\text{stable}}. \]

The above inequality holds when
\[ 6 \left( \frac{(1 + C)}{1 - 4C}\right)^2 \leq \frac{20}{3(1+C)^2} \quad \Longleftrightarrow \quad  
\left({1 + C}\right)^2 -  \sqrt{\frac{10}{9}}({1 - 4C}) \leq 0.\]

The right hand side holds when the quadratic equation $\left({1 + C}\right)^2 -  \sqrt{\frac{10}{9}}({1 - 4C})$ is non-positive, which holds between the roots of the equation.
The equation's positive solution is 
\[C = \frac{-3 + \sqrt{10}}{{3 + 2\sqrt{10} + \sqrt{5(8 + 3\sqrt{10})}}} \approx 0.008.\]
Setting $C$ to this quantity satisfies the requirement that $C\in(0,\frac{1}{3})$.\\

Thus, a $C$ exists such that \cref{eq:separation} holds which concludes the proof of \cref{thm:bad-case} for the following constants and constraints implied by $C$ and $M=5$:
\[C_2 = \frac{(1 - 2 C)}{\sqrt{2}(1 + C)} \qquad \qquad  C_1 = 2\frac{G^4}{c C^2} \qquad \qquad k < \frac{n}{10},\]
where $G$ is the $\psi_2$-norm of a standard normal r.v. and $c$ is the absolute constant from the Bernstein inequality in \cref{lem:Bernstein}.
\end{proof}

\subsection{Lower bounding the norm of solutions that rely more on the stable feature}\label{appsec:good-case-lemma}

\begin{lemma}\label{lemma:good-case}
\goodcaselemma{}
\end{lemma}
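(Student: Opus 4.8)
The plan is to lower-bound $\|\mbw_{\text{stable}}\|^2$ by evaluating the dual program from \cref{lemma:dual} at a well-chosen feasible point. That lemma identifies $\|\mbw_{\text{stable}}\|^2$ with $\max_{\lambda\geq 0,\,\nu\geq 0}\, -\tfrac14\|\zeta\nu + X^\top\lambda\|^2 + \ind^\top\lambda$, where $\zeta^\top = [-B,\,1,\,\mathbf{0}^{d-2}]$ and $X$ has rows $\mby_i\mbx_i$; hence \emph{any} dual-feasible $(\lambda,\nu)$ already gives a valid lower bound, and it suffices to exhibit one and then optimize over its single free scale. I would build this point from a noise-independent choice of $2Mk$ training samples (this is possible since $M\leq\lfloor n/(2k)\rfloor$ forces $2Mk\leq n$), weighted uniformly.

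Concretely, pick any $m=2Mk$ sample indices selected without reference to the noise vectors $\{\mbdelta_i\}$; say $q$ of them fall in the leftover group and $p=m-q$ in the shortcut group, and set $\lambda_i = a\geq 0$ on these indices and $0$ elsewhere, with $\nu = a(p-q)$. Dual feasibility holds because $\nu\geq 0$: indeed $p \geq m-k = (2M-1)k \geq k \geq q$ (using $M\geq 1$ and $q\leq k$). The reason for this particular $(\lambda,\nu)$ is that the large, $B$-scaled coordinate cancels: since $\mby_i\mbx_i = [\pm B,\,1,\,\mby_i\mbdelta_i]$ with sign $+$ on the shortcut group and $-$ on the leftover group, one computes $\zeta\nu + X^\top\lambda = [\,0,\; 2ap,\; a\sum_i \mby_i\mbdelta_i\,]$. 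The vectors $\{\mby_i\mbdelta_i\}$ are i.i.d.\ standard Gaussian in $\bbR^{d-2}$ and independent of the index selection, so \cref{eq:lemma5-norm-bound} of \cref{lemma:concentration} applies (the hypothesis $\epsilon<\sqrt d$ keeps $\epsilon$ in its admissible range) and yields $\|\sum_i \mby_i\mbdelta_i\| \leq \sqrt{m}\,(\sqrt{d}+\epsilon)$ with probability at least $1-2\exp(-c\epsilon^2/G^4)$ --- exactly the probability claimed in the lemma.

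On that event, and using $p\leq m=2Mk$, we get $\|\zeta\nu+X^\top\lambda\|^2 \leq 4a^2(2Mk)^2 + a^2\,2Mk\,(\sqrt d+\epsilon)^2$, while $\ind^\top\lambda = a\,2Mk$. The dual objective is therefore at least $-a^2 Mk\big(4Mk + \tfrac12(\sqrt d+\epsilon)^2\big) + 2aMk$, which is concave in $a$ and maximized at $a^\star = \big(4Mk + \tfrac12(\sqrt d+\epsilon)^2\big)^{-1}$ with value $\tfrac{Mk}{4Mk + \tfrac12(\sqrt d+\epsilon)^2} = \big(4 + \tfrac{(\sqrt d+\epsilon)^2}{2Mk}\big)^{-1} = W_{\text{stable}}$. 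Since $W_{\text{stable}}$ is increasing in $M$, carrying this out at $M=\lfloor n/(2k)\rfloor$ delivers the bound for every smaller $M$ on the same event, so a single invocation of \cref{lemma:concentration} suffices and the ``for all integers $M$'' claim follows without a union bound.

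There is no deep obstacle --- the argument amounts to guessing the correct dual point and turning the crank --- so the work is bookkeeping: (i) verifying dual feasibility, chiefly $\nu\geq 0$, which dictates that the chosen samples not contain too many leftover points; (ii) keeping the sample selection independent of $\{\mbdelta_i\}$ so that \cref{eq:lemma5-norm-bound} is legitimately applicable to the resulting sum; and (iii) observing that the final bound carries no dependence on $B$, precisely because $\nu$ was tuned to annihilate the $B$-scaled coordinate of $\zeta\nu + X^\top\lambda$. The monotonicity-in-$M$ remark is what keeps the failure probability at $2\exp(-c\epsilon^2/G^4)$ rather than incurring a union bound over $M$.
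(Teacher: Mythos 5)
Your proposal is correct and follows essentially the same route as the paper's proof: exhibit a dual-feasible point supported on $2Mk$ noise-independent samples with $\nu$ tuned to annihilate the $B$-scaled coordinate, bound $\|\sum_i \mby_i\mbdelta_i\|$ via the norm-concentration bound, and optimize the remaining quadratic in the scale parameter to obtain $W_{\text{stable}}$. The only (harmless) differences are that the paper fixes the support to be all of $S_{\text{leftover}}$ plus $(2M-1)k$ shortcut samples rather than an arbitrary noise-independent selection, and your monotonicity-in-$M$ remark is a clean way to get the uniform-in-$M$ statement on a single event.
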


\begin{proof}
By \cref{lemma:dual}, the dual of \cref{eq:good-case} is the following for $\zeta=[-B, 1, \mathbf{0}^{d-2}]$ and $X$ is an $ n\times d $ matrix with rows $\mby_i\mbx_i$:
\begin{align}
\label{eq:dual}
	  \max_{\lambda\geq 0, \nu \geq 0}- \frac{1}{4}\|\zeta \nu + X^\top\lambda\|^2  +    \ind^\top \lambda 
\end{align}
Now by duality
\[\|\mbw_{\text{stable}}\|^2  \geq 	  \max_{\lambda\geq 0, \nu \geq 0}- \frac{1}{4}\|\zeta \nu + X^\top\lambda\|^2  +    \ind^\top \lambda,\]
which means any feasible candidate to \cref{eq:dual} gives a lower bound on $\|\mbw_{\text{stable}}\|^2$.

\paragraph{Feasible Candidates for $\lambda, \nu$.}
We now define a set $U\subset [n]$, and let $\lambda_i=\frac{\alpha}{|U|} > 0$ for $i\in U$ and $0$ otherwise. 
For $M \in (1, \lfloor\frac{n}{2k}\rfloor]$, we take $2Mk$ samples from the training data to be included in $U$. Formally, 
\[U = S_{\text{leftover}} \cup (2M-1)k \text{ a random samples from } S_{\text{shortcut}},\]
which gives the size $|U|=2Mk$. Then, we let ${\nu=\alpha\frac{2(M-1)}{2M}}>0$.\\

Note that for the above choice of $\lambda$, $X^\top \lambda$  is a sum of the rows from $U$ scaled by $\frac{\alpha}{|U|}$.
Adding up $k$ rows from $S_{\text{leftover}}$ and $k$ rows from $S_{\text{shortcut}}$ cancels out the $B$s and, so in the $B$ is accumulated $|U| - 2k = 2(M-1)k$ times, and so
\[X^\top \lambda = \left[ \alpha * \frac{|U|-2k}{|U|} B ,  \alpha, \frac{\alpha}{|U|} \sum_{i\in U}\mbdelta_{i} \right]= \left[\alpha B\frac{2(M-1)}{2M},  \alpha, \frac{\alpha}{|U|} \sum_{i\in U}\mbdelta_{i}\right].\]

As $\lambda$ has $\frac{\alpha}{|U|}$ on $|U|$ elements and $0$ otherwise, $\lambda^\top \ind  = {\alpha}$

As we set
$\nu=\alpha\frac{2(M-1)}{2M}$,
\begin{align}
\nu \zeta + X^\top \lambda & = \left[- \alpha B \frac{2(M-1)}{2M} + \alpha\frac{2(M-1)}{2M} B, \alpha\frac{2(M-1)}{2M} + \alpha , 0 +  \frac{\alpha}{|U|}\sum_{i}\mbdelta_i\right] 
\\
& = 
\left[0\quad ,\alpha \left(1+\frac{2(M-1)}{2M}\right),\quad \frac{\alpha}{|U|}\sum_{i}\mbdelta_i\right]
\\
\implies 
   \|\zeta \nu + X^\top\lambda\|^2  
    & = 
\left\| \left[ 0,\alpha\left(1+\frac{2(M-1)}{2M}\right),  \frac{\alpha}{|U|} \sum_{i\in U}\mbdelta_{i}\right] \right\|^2
\\
    & = \alpha^2\left\| \left[0,\left(1+\frac{2(M-1)}{2M}\right),  \frac{1}{|U|} \sum_{i\in U}\mbdelta_{i}\right]\right\|
    \label{eq:lb-unopt}
\end{align}

For the chosen values of $\nu, \lambda$ the value of the objective in \cref{eq:dual} is
\begin{align}
    \frac{-\alpha^2}{4} \left\| \left[0,   \left(1+\frac{2(M-1)}{2M}\right),  \frac{1}{|U|} \sum_{i\in U}\mbdelta_{i}\right]\right\|^2 + \alpha
\end{align}

Letting \[\Gamma =  \left\| \left[0,  \left(1+\frac{2(M-1)}{2M}\right), \frac{1}{|U|} \sum_{i\in U}\mbdelta_{i}\right]\right\|^2 ,\] the objective is of the form $\alpha - \frac{\alpha^2 \Gamma}{4}$.
To maximize with respect to $\alpha$, setting the derivative of the objective w.r.t $\alpha$ to $0$ gives:
\[1 - \frac{2\alpha \Gamma}{4}  = 0 \implies \alpha = \frac{2}{\Gamma} \implies \alpha - \frac{\alpha^2 \Gamma}{4} = \frac{2}{\Gamma} - \frac{4}{\Gamma^2}\frac{\Gamma}{4} = \frac{1}{\Gamma}.\]
This immediately gives us
\[\|\mbw_{\text{stable}}\|^2 \geq \frac{1}{\Gamma},\]
and we lower bound this quantity by upper bounding $\Gamma$.

By concentration of gaussian norm as in \cref{lem:norm-conc}, with probability at least $1-2\exp(-c \frac{\epsilon^2}{G^4})$
\[\left\|\frac{1}{|U|}\sum_{i\in U}\mbdelta_i\right\| = \frac{1}{\sqrt{|U|}}\left\|\frac{1}{\sqrt{|U|}}\sum_{i\in U}\mbdelta_i\right\|\leq \frac{1}{\sqrt{|U|}}(\sqrt{d} + \epsilon).\]

In turn, recalling that $|U| = 2Mk$
\[\Gamma \leq \left(\frac{(2(M-1) + 2M)}{2M}\right)^2 + \left(\frac{\sqrt{d} + \epsilon}{\sqrt{|U|}}\right)^2
<  4 + \left(\frac{\sqrt{d} + \epsilon}{\sqrt{|U|}}\right)^2
\leq 4 + \frac{\left(\sqrt{d} + \epsilon\right)^2}{2Mk}
\]

The upper bound on $\Gamma$ gives the following lower bound  on $\|\mbw_{\text{stable}}\|^2$:
\[
\|\mbw_{\text{stable}}\|^2 \geq \frac{1}{\Gamma} \geq \frac{1}{4 + \frac{\left(\sqrt{d} + \epsilon\right)^2}{2Mk}}
\]
\end{proof}

\subsection{Upper bounding the norm of solutions that rely more on the shortcut.}\label{appsec:bad-case-lemma}

\begin{lemma}\label{lemma:bad-case}
\badcaselemma{}
\end{lemma}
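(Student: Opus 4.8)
The claim is an \emph{upper} bound on the optimal value of the convex norm-minimization problem defining $\mbw_{\text{shortcut}}$ in \cref{eq:bad-case}, so — in contrast to \cref{lemma:good-case}, which used Lagrangian duality to \emph{lower}-bound the value — here it suffices to exhibit one explicit feasible point $\mbw=[\mbw_z,\mbw_y,\mbw_e]$ and bound $\|\mbw\|^2$; since $\mbw_{\text{shortcut}}$ is the minimizer, this gives $\|\mbw_{\text{shortcut}}\|^2\le\|\mbw\|^2$. The candidate is dictated by the margin-trading picture described after \cref{thm:bad-case}: read the label off the shortcut, put nothing on the stable coordinate, and use the leftover group's own noise to inflate that group's margins. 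Concretely, I would set $\mbw_y=0$, $\mbw_e=\gamma\sum_{i\in S_{\text{leftover}}}\mby_i\mbdelta_i$ for a scalar $\gamma>0$ chosen below, and $\mbw_z=(1+\gamma\epsilon\sqrt{dk})/B$. A preliminary observation to record is that, since $\mby_i\in\{\pm1\}$ is independent of $\mbdelta_i\sim\cN(0,\mathbf{I})$ and the standard Gaussian is symmetric, conditioning on the group memberships (which makes $k$ and $n-k$ deterministic) the vectors $\{\mby_i\mbdelta_i\}_i$ are i.i.d.\ standard Gaussian; this is what lets me apply \cref{lemma:concentration} with $V=\{\mby_i\mbdelta_i:i\in S_{\text{leftover}}\}$ (size $T_V=k$) and $U=\{\mby_i\mbdelta_i:i\in S_{\text{shortcut}}\}$ (size $T_U=n-k$).

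Next I would verify that $\mbw$ is feasible on the high-probability event of \cref{lemma:concentration}. The side constraint $\mbw_y<B\mbw_z$ is immediate because $\mbw_z>0$. For $i\in S_{\text{leftover}}$, $\mbw_e^\top(\mby_i\mbdelta_i)=\gamma\big\langle\mby_i\mbdelta_i,\sum_{j\in S_{\text{leftover}}}\mby_j\mbdelta_j\big\rangle\ge\gamma(d-3\epsilon\sqrt{kd})$ by \cref{eq:leftover-inner-product}, so the leftover margin is at least $-B\mbw_z+\gamma(d-3\epsilon\sqrt{kd})$; for $i\in S_{\text{shortcut}}$, $|\mbw_e^\top(\mby_i\mbdelta_i)|\le\gamma\epsilon\sqrt{kd}$ by \cref{eq:shortcut-inner-product}, so the shortcut margin is at least $B\mbw_z-\gamma\epsilon\sqrt{kd}$. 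Substituting $\mbw_z=(1+\gamma\epsilon\sqrt{dk})/B$ makes the shortcut margin at least $1$ for free, and turns the leftover-margin lower bound into $-1+\gamma(d-4\epsilon\sqrt{kd})$, which is $\ge1$ exactly when $\gamma(d-4\epsilon\sqrt{kd})\ge 2$. So I take $\gamma=\frac{2}{d-4\epsilon\sqrt{kd}}$, which is well-defined and positive because $\epsilon$ is small enough that $d>4\epsilon\sqrt{kd}$. This is precisely where the ``$d$ versus $\sqrt d$'' trade-off is cashed in: aligning $\mbw_e$ with the leftover noise makes the in-group inner products of order $d$ while the out-of-group ones stay of order $\epsilon\sqrt{kd}$. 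Since the constraints of \cref{eq:bad-case} are strict, I would finally replace $\mbw$ by $(1+\eta)\mbw$ and send $\eta\downarrow0$, which still yields $\|\mbw_{\text{shortcut}}\|^2\le\|\mbw\|^2$.

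It then remains to evaluate $\|\mbw\|^2=\mbw_y^2+\mbw_z^2+\|\mbw_e\|^2$. The first term is $0$; the second is $(1+\gamma\epsilon\sqrt{dk})^2/B^2$ by the choice of $\mbw_z$; and the third is $\gamma^2\big\|\sum_{j\in S_{\text{leftover}}}\mby_j\mbdelta_j\big\|^2\le\gamma^2 k(\sqrt d+\epsilon)^2$ by \cref{eq:lemma5-norm-bound} (which bounds $T_V^{-1/2}\|\sum_{j\in V}\mby_j\mbdelta_j\|$ by $\sqrt d+\epsilon$ with $T_V=k$). Adding these is exactly $W_{\text{shortcut}}=\gamma^2 k(\sqrt d+\epsilon)^2+(1+\gamma\epsilon\sqrt{dk})^2/B^2$. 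Finally I would account for the probability: the three inequalities used are the three conclusions of \cref{lemma:concentration} for $T_V=k$ and $T_U=n-k$, so by its stated union bound they hold simultaneously with probability at least $1-2(2k+(n-k)+1)\exp(-c\epsilon^2/G^4)$, matching the claim.

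The only step that is not routine is picking the candidate $\mbw$ — in particular, noticing that one scalar $\gamma$ and one value of $\mbw_z$ must simultaneously satisfy the two opposing families of margin constraints, and that this is possible precisely because the estimates of \cref{lemma:concentration} separate in-group from out-of-group inner products by a factor $\sqrt d$. Once $\mbw$ is fixed, everything is a short computation plus one union bound. The subtlety worth stating carefully is the conditioning/symmetrization that lets $\{\mby_i\mbdelta_i\}$ play the role of i.i.d.\ standard Gaussians split across the two groups, together with the (routine) passage from the strict inequalities defining the feasible set to the non-strict bound on the minimizer.
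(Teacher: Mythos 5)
Your proposal is correct and follows essentially the same route as the paper's proof: the same feasible candidate (zero weight on the stable coordinate, $\mbw_z=\beta/B$ with $\beta=1+\gamma\epsilon\sqrt{dk}$, and $\mbw_e=\gamma\sum_{i\in S_{\text{leftover}}}\mby_i\mbdelta_i$ with $\gamma=\frac{2}{d-4\epsilon\sqrt{kd}}$), the same application of \cref{lemma:concentration} with $V=S_{\text{leftover}}$, $U=S_{\text{shortcut}}$, and the same norm computation and union bound. Your extra remarks on symmetrizing $\mby_i\mbdelta_i$ and on passing from strict to non-strict constraints via $(1+\eta)\mbw$ are consistent with (and slightly tidier than) the paper's treatment.
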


\begin{proof}
Let $k=|S_{\text{leftover}}|$.
The candidate we will evaluate the objective for is
\begin{align}
\label{eq:primal-candidate}
	\mbw = \left[\frac{\beta}{B}, 0 , \gamma\sum_{j\in S_{\text{leftover}}} \mby_j \mbdelta_j\right].
\end{align}

\paragraph{High-probability bounds on the margin achieved by the candidate and norm of $\mbw$}

The margins on the shortcut group and the leftover group along with the constraints are as follows:
\begin{align}
\label{eq:margins}
	\begin{split}
		\forall j \in S_{\text{shortcut}} \quad m_j = 0 + B*\frac{\beta}{B} + \left \langle \mby_j \mbdelta_j, \gamma\sum_{i\in S_{\text{leftover}}}  \mby_i \mbdelta_i \right\rangle \geq 1
		\\
	\forall j \in S_{\text{leftover}} \quad m_j = 0 - B*\frac{\beta}{B} + \left \langle \mby_j \mbdelta_j, \gamma\sum_{i\in S_{\text{leftover}}} \mby_i \mbdelta_i \right\rangle \geq 1.
	\end{split}
\end{align}

Due to the standard normal distribution being isotropic,  and $\mby_j\in\{-1,1\}$, $\mby_j\mbdelta_j$ has the same distribution as $\mbdelta_j$.
Then, we apply \cref{lemma:concentration} with $V=S_{\text{leftover}}, U= S_{\text{shortcut}}$ --- which means $T_v=k$ and $T_u = (n-k)$ --- to bound the margin terms in \cref{eq:margins} and $\|\mbw\|^2$ with probability at least 
\[1 - 2(2k + (n-k) + 2)
\exp(-c\frac{\epsilon^2}{G^4}).\]

Applying the bound in \cref{eq:shortcut-inner-product} in \cref{lemma:concentration} between a sum of vectors and a different i.i.d vector,
\begin{align}
\label{eq:shortcut-margin}
	\forall j \in S_{\text{shortcut}} \qquad \left|\left \langle \mby_j \mbdelta_j, \gamma\sum_{i\in S_{\text{leftover}}} \mby_i \mbdelta_i \right\rangle\right|
\leq  
\gamma \epsilon\sqrt{k d} 
\end{align}
Applying the bound in \cref{eq:leftover-inner-product} from \cref{lemma:concentration}
\begin{align}\label{eq:leftover-margin}
	\forall j \in S_{\text{leftover}} \qquad \left \langle \mby_j \mbdelta_j, \gamma\sum_{i\in S_{\text{leftover}}}  \mby_i \mbdelta_i \right\rangle  
	& \geq 
			\gamma\left(d - 3\epsilon\sqrt{k d}\right)
\end{align}


The margin constraints on the shortcut and leftover from \cref{eq:margins} respectively imply
\[\beta - 		
		\gamma\epsilon\sqrt{dk}
	\geq 1 
\qquad \qquad 
-\beta +		
	\gamma 
		\left(
			d - 3\epsilon\sqrt{k d}
		\right)
 \geq 1 \]


We choose $\beta = 1 + \gamma\epsilon\sqrt{dk}$, which implies an inequality that $\gamma$ has to satisfy the following, which is due to $d - 3\epsilon\sqrt{kd} > 0$,
\[
- (1 + \gamma\epsilon\sqrt{dk}) + \gamma 
		\left(
			d - 3\epsilon\sqrt{k d}
		\right)
 \geq 1 \implies \gamma \geq \frac{2}{d - 4\epsilon\sqrt{kd}}
\]

Now, we choose
\[ \gamma = \frac{2}{d - 4\epsilon\sqrt{kd}}.\]

\paragraph{Computing the upper bound on the value of the objective in the primal problem in \cref{eq:bad-case}}
%
The feasible candidate's norm $\|\mbw\|^2$ is an upper bound on the solution's norm $\|\mbw_{\text{shortcut}}\|^2$ and so 
\[\|\mbw_{\text{shortcut}}\|^2 \leq \|\mbw\|^2 = \frac{1}{B^2}\beta^2 + \left\|\gamma \sum_{j\in S_{\text{leftover}}} \mby_j \mbdelta_j \right\|^2 = \gamma^2 k \left\|\frac{1}{\sqrt{k}}\sum_{j\in S_{\text{leftover}}} \mbdelta_j \right\|^2 + \frac{\beta^2}{B^2}\]

By \cref{lemma:concentration} which we invoked,
\[\left\|\frac{1}{\sqrt{k}}\sum_{j\in S_{\text{leftover}}} \mbdelta_j \right\|^2
 \leq (\sqrt{d} + \epsilon)^2.
 \]
 
To conclude the proof, substitute $\beta=1 + \gamma\epsilon\sqrt{dk}$ and get the following upper bound with $\gamma = \frac{2}{d - 3\epsilon\sqrt{kd}}$:
\[\|\mbw_{\text{shortcut}}\|^2 \leq \gamma^2 k (\sqrt{d} + \epsilon)^2 + \frac{\beta^2}{B^2} = \gamma^2 k (\sqrt{d} + \epsilon)^2 +  \frac{\left(1 + \gamma\epsilon\sqrt{dk}\right)^2}{B^2}.\]
\end{proof}

\subsection{Concentration of $k$ and intuition behind \cref{thm:bad-case}}\label{appsec:intuition}

\paragraph{Concentration of $k$ around $(1-\rho)n$.}

Denote the event that the $i$th sample lies in the leftover group as $I_i$: then $E[I_i] = 1-\rho$ and the leftover group size is $k=\sum_i I_i$.
Hoeffding's inequality (Theorem 2.2.6 in \citep{vershynin2018high}) shows that for any $t>0$, $k$ is at most $(1-\rho)n + t\sqrt{n}$ with probability at least $1 - \exp(-2t^2)$:
\[p\left(k - (1-\rho)n > t \sqrt{n}\right) = p\left(\sum_i \left( I_i -  (1-\rho)\right)  > t \sqrt{n} \right) = p\left(\sum_i \left( I_i -  E[I_i]\right)  > t \sqrt{n} \right) \leq \exp(-2t^2).\]

Letting $\rho = 0.9 + \sqrt{\frac{\log 3n}{n}}$ and $t=\sqrt{\log 3n }$, gives us
\begin{align*}
p\left(k - (1-\rho)n > t \sqrt{n}\right) 
& = 
p\left(k - 0.1n + \sqrt{n\log 3n } > \sqrt{\log 3n} \sqrt{n}\right)
\\
& = p\left(k - 0.1n > 0 \right)
\\
& \leq \exp(-2t^2)
\\
&  = \exp(-2\log 3n ).
\\
&  = \left(\frac{1}{3n}\right)^2
\\
&  < \frac{1}{3n}
\end{align*}

To connect $\rho$ to shortcut learning due to max-margin classification, we take a union bound of the event that $k< 0.1n$, which occurs with probability at least $1-\frac{1}{3n}$ and \cref{thm:bad-case} which occurs with probability at least $1- \frac{1}{3n}$.
This union bound guarantees that with probability at least $1 - \frac{2}{3n}$ over sampling the training data, max-margin classification on $n$ training samples from \cref{eq:sim-example} relies more on the shortcut feature if $\rho$ is above a threshold; and this threshold converges to $0.9$ at the rate of $\sqrt{\nicefrac{\log 3n }{n}}$.

\subsection{Bumpy losses improve ERM in  the under-parameterized setting}\label{appsec:bumpy-loss}

\begin{thm}
\bumpylossthm{}
\end{thm}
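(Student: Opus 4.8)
Write $\mbw = [\mbw_z, \mbw_y, \mbw_e]$ with $\mbw_z, \mbw_y \in \bbR$ and $\mbw_e \in \bbR^{d-2}$, and recall $\mbx_i = [B\mbz_i, \mby_i, \mbdelta_i]$ with $\mbz_i = \mby_i$ on the shortcut group and $\mbz_i = -\mby_i$ on the leftover group. Expanding, $\mby_i \mbw^\top \mbx_i = \epsilon_i B \mbw_z + \mbw_y + \mby_i \mbw_e^\top \mbdelta_i$, where $\epsilon_i = +1$ on the shortcut group and $\epsilon_i = -1$ on the leftover group. Set $\tilde\mbdelta_i := \mby_i \mbdelta_i$; since $\mbdelta_i$ is independent of the labels and rotationally symmetric, the $\tilde\mbdelta_i$ are again i.i.d.\ $\cN(0,\mathbf{I}_{d-2})$ and jointly independent of the group memberships. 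Thus the hypothesis ``$\mby_i \mbw^\top \mbx_i = b$ for all $i$'' is exactly the linear system $A\mbw = b\,\ind_n$, where $A \in \bbR^{n\times d}$ has $i$-th row $[\epsilon_i B,\; 1,\; \tilde\mbdelta_i^\top]$. One checks immediately that $\mbw^\star := [0, b, 0^{d-2}]$ is a solution, since $\mby_i (\mbw^\star)^\top\mbx_i = b\mby_i^2 = b$. So the theorem reduces to showing this solution is the unique one, i.e.\ that $\ker A = \{0\}$, equivalently $\mathrm{rank}(A) = d$.

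\textbf{Main step: $A$ has full column rank almost surely.} Condition on the discrete data (labels and shortcut values). With probability $1 - \rho^n - (1-\rho)^n$ both the shortcut and leftover groups are non-empty; assume this (the statement should be read on this event, whose probability tends to $1$; when one group is empty the shortcut and stable columns of $A$ collapse to scalar multiples of each other and uniqueness genuinely fails). Then the first column $c_1 = (\epsilon_i B)_i \in \{\pm B\}^n$ is non-constant, hence not a multiple of the all-ones column $c_2 = \ind_n$, so $\mathrm{span}(c_1, c_2)$ is a fixed $2$-dimensional subspace $V_0 \subseteq \bbR^n$. The remaining $d-2$ columns of $A$ are precisely the columns of the $n\times(d-2)$ matrix with i.i.d.\ standard-normal entries formed by the $\tilde\mbdelta_i$. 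Since $n \ge d$, the orthogonal complement $V_0^\perp$ has dimension $n-2 \ge d-2$; projecting those columns onto $V_0^\perp$ yields $d-2$ i.i.d.\ draws from a non-degenerate Gaussian supported on $V_0^\perp$, which are a.s.\ linearly independent (inductively: a non-degenerate Gaussian gives measure zero to any proper subspace, so each new column a.s.\ escapes the span of the previous ones). Hence $\mathrm{rank}(A) = 2 + (d-2) = d$ with probability $1$ over the Gaussian draws. Equivalently, one can exhibit a single $d\times d$ submatrix of $A$ — one shortcut row and one leftover row with their noise blocks set to $0$, plus $d-2$ further rows with noise blocks set to the standard basis vectors — which is block-triangular with determinant the nonzero constant $2B$; since this determinant is a polynomial in the Gaussian entries that is not identically zero, it is nonzero almost surely.

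\textbf{Conclusion and the hard part.} Combining the two steps: almost surely $A$ is injective, so the only weight vector producing the uniform margin $b$ on all training samples is $\mbw^\star = [0, b, 0^{d-2}]$, which depends only on the stable feature. The only real obstacle is the rank computation, and within it the one genuinely non-routine point is checking that the \emph{structured} columns of $A$ — the $\pm B$ shortcut column and the all-ones stable column — are a.s.\ not absorbed into the span of the Gaussian noise columns; this is also where the non-degeneracy assumption (both groups non-empty, which forces $c_1$ non-constant) is essential. Everything else — spotting the candidate $\mbw^\star$, and the measure-zero facts about i.i.d.\ Gaussians — is standard.
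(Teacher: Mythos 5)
Your proof is correct, and it takes a somewhat different route from the paper's. The paper isolates the noise block: it rewrites the uniform-margin condition as the overdetermined system $X_\delta \mbw_\delta = (b - w_y)\ind - w_z\, Y\cdot Z$ in the $d-2$ noise weights alone, and then argues by cases on the residual $r$ on the right-hand side --- if $r$ has a nonzero entry, a $(d-2)\times(d-2)$ Gaussian subsystem pins down $\mbw_\delta$ uniquely and the remaining equations are then each a non-degenerate Gaussian forced to equal a constant, which fails almost surely; if $r$ vanishes, its two possible entry values $b - w_y \pm w_z$ force $w_z = 0$ and $w_y = b$, and full rank of $X_\delta$ forces $\mbw_\delta = 0$. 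You instead treat the whole condition as one system $A\mbw = b\ind$, exhibit the solution $[0,b,0^{d-2}]$, and show $A$ has full column rank almost surely, so that solution is unique. Both arguments rest on the same two ingredients (sign-flipped isotropic noise is still i.i.d.\ standard Gaussian and independent of the group structure, and $n > d$ puts the Gaussian columns in general position), but your injectivity argument is arguably cleaner: it sidesteps the paper's slightly delicate step of choosing an index set $U$ that depends on where the residual $r$ is nonzero, and hence on the very solution whose existence is being refuted. You also make explicit a caveat the paper leaves implicit: the argument needs both the shortcut and leftover groups to be non-empty (the paper simply asserts that the entries of $Y\cdot Z$ ``are not all the same sign''), and you correctly observe that on the complementary event, of probability $\rho^n + (1-\rho)^n$, uniqueness genuinely fails because the shortcut and stable columns become proportional.
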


\begin{proof}
Letting $X$ be the matrix where each row is $\mby_i \mbx_i$, the theorem statement says the solution $\mbw^*$
\begin{align}
\label{eq:bumpy-minima}
    X\mbw^* = b \ind 
\end{align}

First, split $\mbw^* = [w_z^*, w_y^*, \mbw_{-{y}}^*]$. \Cref{eq:bumpy-minima} says that the margin of the model on any sample satisfies
\[\mby (\mbw^*)^\top \mbx = w_y^* \mby^2 +  w_z^*  \mby \mbz +  \mby ( \mbw_{-{y}}^*)^\top \mbdelta = b \qquad \implies \qquad  \mby (\mbw_{-{y}}^*)^\top \mbdelta = b- w_y^* \mby^2 -  w_z^*  \mby \mbz\]
We collect these equations for the whole training data by splitting $X$ into columns: denoting $Y, Z$ as vectors of $\mby_i$ and $\mbz_i$ and using $\cdot$ to denote element wise operation, split $X$ into columns that correspond to $\mby, \mbz$ and $\mbdelta$ respectively as $X = [ Y\cdot Y  \mid Y\cdot Z  \mid X_\delta ]$.
Rearranging terms gives us
\[w_z^* Y\cdot Z + w_y^*\ind  + X_\delta\mbw^*_\delta = b\ind  \qquad  \implies \qquad X_\delta\mbw^*_\delta = (b - w_y^*)\ind - w_z^* Y\cdot Z.\]
The elements of $ Y\cdot Z$ lie in $\{-1, 1\}$ and, as the shortcut feature does not always equal the label, the elements of $Y \cdot Z$  are not all the same sign.

\paragraph{Solutions do not exist when one non-zero element exists in $(b - w_y^*)\ind - w_z^* Y\cdot Z$}

By definition of $\mbw^*$
\[X_\delta\mbw^*_\delta = (b - w_y^*)\ind - w_z^* Y\cdot Z.\]

Denote $r =  (b - w_y^*)\ind- w_z^* Y\cdot Z.$ and  $A = X_\delta$.
Now we show that w.p. 1 solutions do not exist for the following system of linear equations:
\[Aw = r.\]
First, note that $A=X_\delta$ has $\mby_i\mbdelta_i$ for rows and as $\mby_i\indep \mbdelta_i$ and $\mby_i\in\{-1, 1\}$, each vector $\mby_i\mbdelta_i$ is distributed identically to a vector of independent standard Gaussian random variables.
Thus, $A$ is a matrix of IID standard Gaussian random variables.

Let $U$ denote $D-2$ indices such that the corresponding rows of $A$ form a matrix $D-2\times D-2$ matrix and $r_U$ has at least one non-zero element; let $A_U$ denote the resulting matrix.
Now $A_U$ is a ${D-2}\times {D-2}$ sized matrix
where each element is a standard Gaussian random variable.
Such matrices have rank $D-2$ with probability 1 because square singular matrices form a measure zero set under the Lebesgue measure over $\mathbf{R}^{D - 2\times D-2}$\citep{feng2007rank}.

We use subscript $\cdot_{-U}$ to denote all but the indices in $U$.
The equation $Aw = r$ implies the following two equations: 
\[A_Uw = r_U \qquad \qquad A_{-U}w = r_{-U}.\]
As $A_U$ is has full rank ($D-2$), $A_U w = r_U$ admits a unique solution $\mbw^*_U \not=0$ --- because $r_U$ has at least one non-zero element by construction.
Then, it must hold that
\begin{align}\label{eq:hyperplane}
	A_{-U}\mbw^*_U = r_{-U}.
\end{align}

For any row $v^\top \in A_{-U}$, \cref{eq:hyperplane} implies that $v^\top \mbw^*$ equals a fixed constant.
As $v$ is a vector of i.i.d standard normal random variables, $v^\top \mbw^*$ is a gaussian random variable with mean $\sum(\mbw^*_i)$ and variance $\|\mbw^*\|^2$.
Then with probability $1$, $v^\top \mbw^*$ will not equal a constant.
Thus, w.p.1 $A_{-U}\mbw^*_U = r_{-U}$ is not satisfied, which means w.p.1 there are no solutions to $A \mbw = r$.

\paragraph{Case where $(b - w_y^*)\ind - w_z^* Y\cdot Z$ is zero element-wise}

As $X$ has rank $D-2$, $X_\delta \mbw^*_\delta= 0$ only when $\mbw^*_\delta=0.$

Each element in $ (b - w_y^*)\ind- w_z^* Y\cdot Z$ is either $b - w_y^* + w_z^*$ or $b - w_y^* - w_z^*$.
Thus, 
\begin{align}
   (b - w_y^*)\ind - w_z^* Y\cdot Z =0  \quad \implies 
   \begin{cases}
 & b - w_y^* + w_z^* = 0, \\
& b- w_y^* - w_z^*=0
 \end{cases}
\end{align}
Adding and subtracting the two equations on the right gives
\[ 2(b - w_y^*) =0 \qquad \text{and} \qquad 2w_z^* = 0.\]
Thus, $\mbw^*_\delta=0, w^*_ z=0, b=w^*_y$.
\end{proof}

\section{Appendix: further experimental details and results}\label{sec:app}

\subsection{\Derm{} with $\ell_2$-regularization.}\label{appsec:l2-reg}

In \cref{sec:vdm_validate}, we show \derm{} achieves zero training loss by using the shortcut to classify the shortcut group and noise to classify the leftover group, meaning the leftover group is overfit.
The usual way to mitigate overfitting is via $\ell_2$-regularization, which, one can posit, may encourage models to rely on the perfect stable feature instead of the imperfect shortcut and noise.

We train the linear model from \cref{sec:vdm_validate} with \derm{} and $\ell_2$-regularization --- implemented as weight decay in the AdamW optimizer \citep{loshchilov2017decoupled} --- on data from \cref{eq:sim-example} with $d=800, B=10, n=1000$.
\Cref{fig:wd10} plots accuracy and losses for the $\ell_2$-regularized \derm{} with the penalty coefficient set to $10^{-8}$; it shows that $\ell_2$-regularization leads \derm{} to build models that only achieve $\approx 50\%$ test accuracy.

For smaller penalty coefficients, \derm{} performs similar to how it does without regularization, and for larger ones, the test accuracy gets worse than \derm{} without regularization.
We give an intuitive reason for why larger $\ell_2$ penalties may lead to larger reliance on the shortcut feature. 
Due to the scaling factor $B=10$ in the synthetic experiment, for a fixed norm budget, the model achieves lower loss when using the shortcut and noise compared to using the stable feature.
In turn, heavy $\ell_2$-regularization forces the model to rely more on the shortcut to avoid the cost of larger weight needed by the model to rely on the stable feature and the noise.

\begin{figure}[t]
    \centering
    \includegraphics[width=0.9\textwidth]{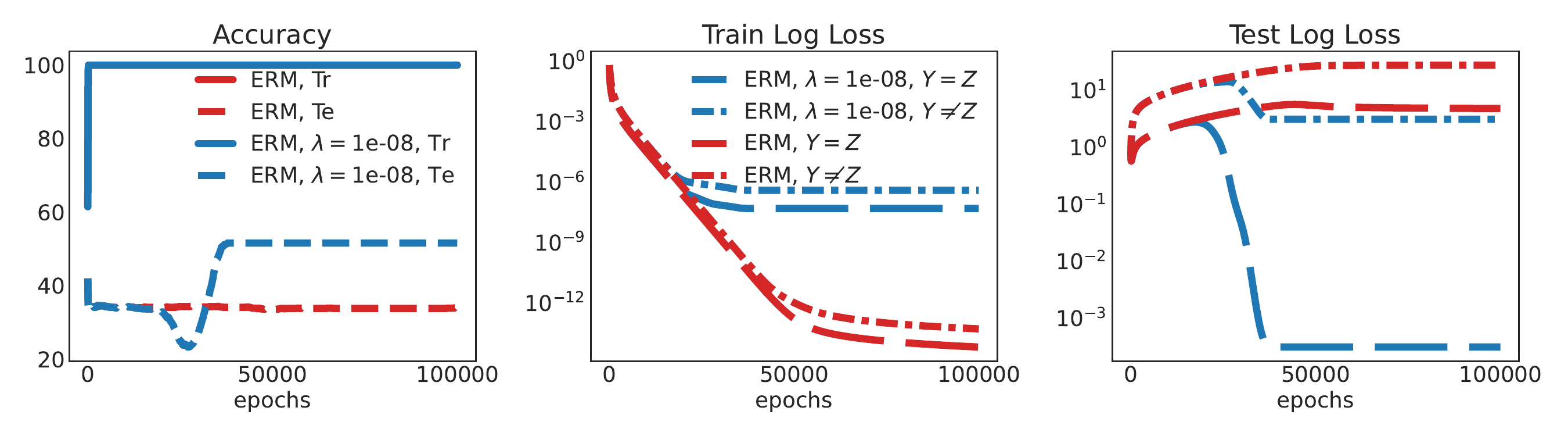}
    \caption{
\Derm{} with $\ell_2$-regularization with a penalty coefficient of $\lambda=10^{-8}$ achieves a test accuracy of $\approx 50\%$ , outperforming \derm{}.
The right panel shows that $\ell_2$-regularization leads to lower test loss on the minority group, meaning that the regularization does mitigate some overfitting.
However, the difference between the shortcut and leftover test losses shows that the model still relies on the shortcut.
    }
    \label{fig:wd10}
\end{figure}

\begin{figure}[t]
    \centering
    \includegraphics[width=0.7\textwidth]{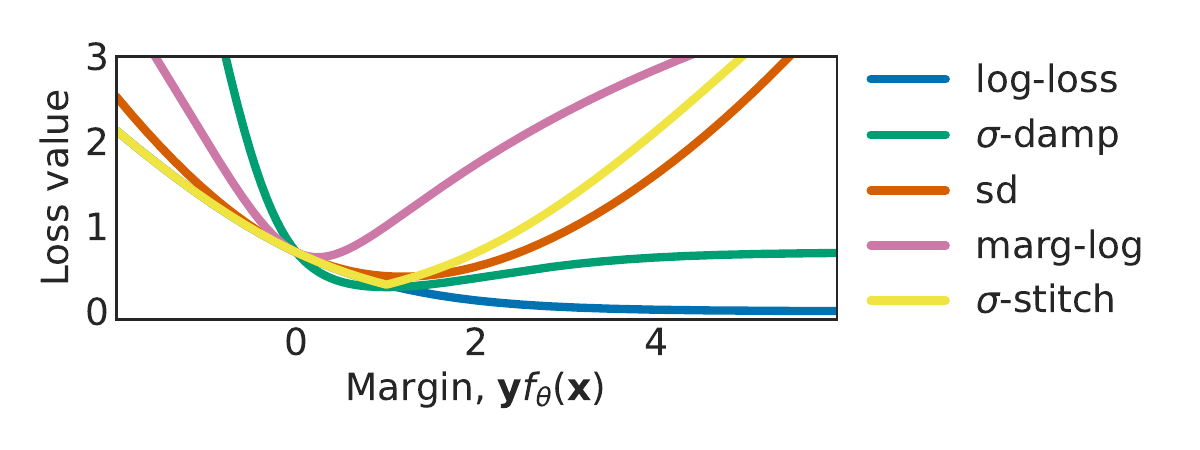}
    \caption{Comparing log-loss with \gls{cc} as functions of the margin. 
Each \gls{cc} loss has a "bump" which characterizes the loss function's transition from a decreasing function of the margin to an increasing one.
These bumps push models to have uniform margins because the loss function's derivative after the bump is negative which discourages large margins.
The hyperparameters (temperature in \sigdamp{} or function output target in \textsc{marg-log}.) affect the location of the bump and the slopes of the function on either side of the bump.
 }
    \label{fig:losses_plot}
\end{figure}

\subsection{\glsreset{cc}\Gls{cc}}\label{appsec:exps-synthetic-losses}

In \cref{fig:losses_plot}, we plot the different \gls{cc} losses along with log-loss.
Each \gls{cc} loss has a "bump" which characterizes the loss function's transition from a decreasing function of the margin to an increasing one.
These bumps push models to have uniform margins because the loss function's derivative after the bump is negative which discourages large margins.
The hyperparameters --- like temperature in \sigdamp{} or function output target in \textsc{marg-log} --- affect the location of the bump and the slopes of the function on either side of the bump.

\subsection{\Gls{cc} on a linear model}

In \cref{fig:lin-sigstitch}, we compare \derm{} to \stitch{}.
In \cref{fig:lin-sd} and \cref{fig:lin-sd-log}, compare \gls{sd} and \textsc{marg-log} respectively to \derm{}.
The left panel of all figures shows that \gls{cc} achieves better test accuracy than \derm{}, while the right most panel shows that the test loss is better on the leftover group using \gls{cc}. 
Finally, the middle panel shows the effect of controlling margins in training; namely, the margins on the training data do not go to $\infty$, evidenced by the training loss being bounded away from $0$.
Depending on the shortcut feature leads to different margins and therefore test losses between the shortcut and leftover groups; the right panel in each plot shows that the the test losses on both groups reach similar values, meaning \gls{cc} mitigates dependence on the shortcut.
While \derm{} fails to perform better than chance ($50\%$) even after $100,000$ epochs (see \cref{fig:lin-erm-failure}), \gls{cc} mitigates  shortcut learning within $5000$ epochs and achieves $100\%$ test accuracy.

\begin{figure}[t]
    \centering
    \includegraphics[width=0.9\textwidth]{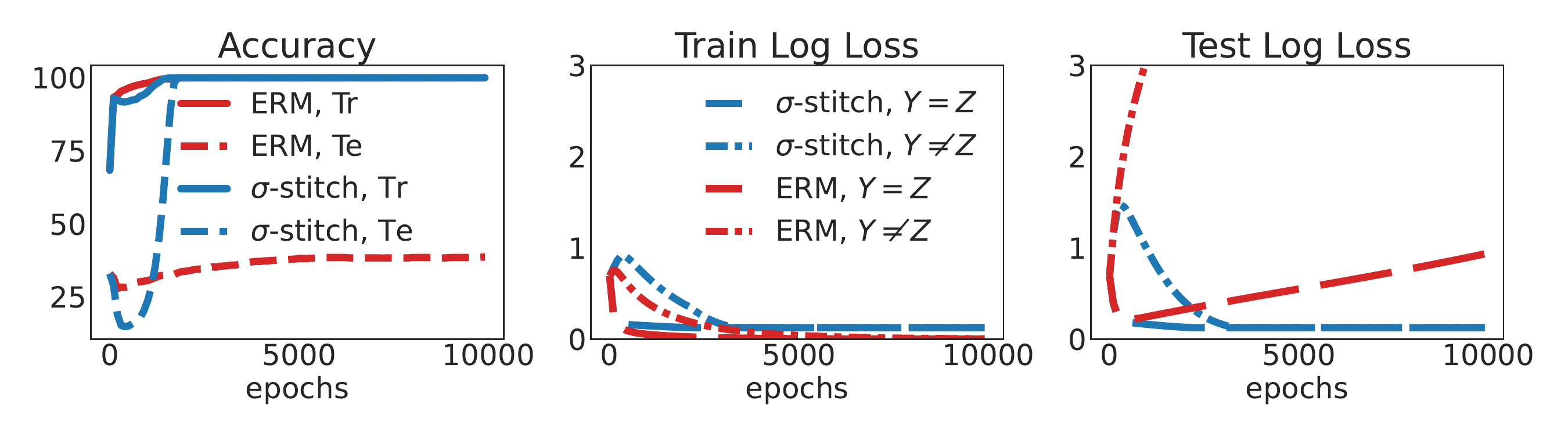}
    \caption{
    A linear trained with \stitch{} depend on the perfect stable feature to achieve perfect test accuracy, unlike \derm{}.
The middle panel shows that \stitch{} does not let the loss on the training shortcut group to go to zero, unlike \derm{}, and the right panel shows the test leftover group loss is better.
    }
    \label{fig:lin-sigstitch}
       \vspace{-10pt}
\end{figure}

\begin{figure}[t]
    \centering
    \includegraphics[width=0.9\textwidth]{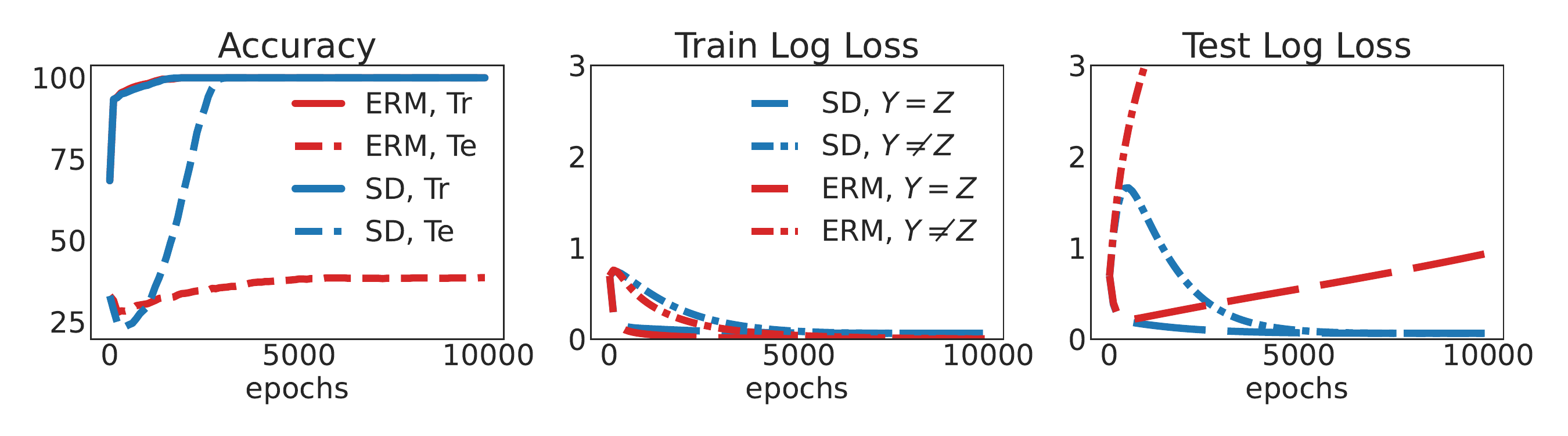}
    \caption{
A linear model trained with \gls{sd} depend on the perfect stable feature to achieve perfect test accuracy whereas \derm{} performs worse than random chance.
The middle panel shows that \gls{sd} does not let the loss on the training shortcut group to go to zero, unlike vanilla \derm{}, and the right panel shows the test-loss is better for the leftover group.
    }
    \label{fig:lin-sd}
\end{figure}

\begin{figure}[b]
\centering
    \includegraphics[width=0.9\textwidth]{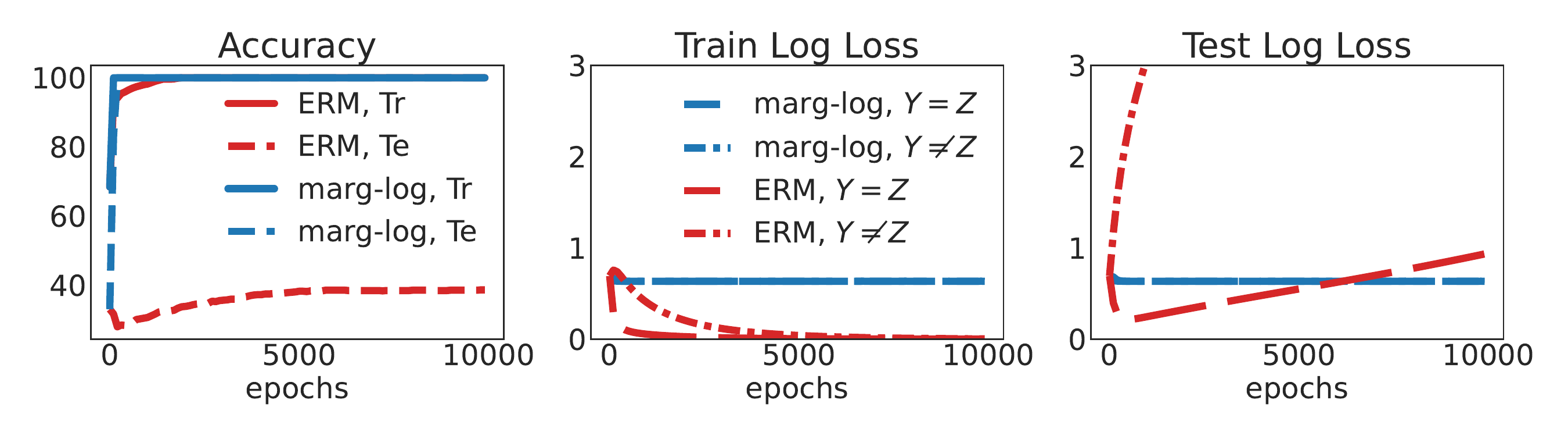}
    \caption{
A linear model trained with \textsc{marg-log} depend on the perfect stable feature to achieve perfect test accuracy whereas \derm{} performs worse than random chance.
The middle panel shows that \textsc{marg-log} does not let the loss on the training shortcut group to go to zero, unlike \derm{}, and the right panel shows the test-loss is better for the leftover group.
    }
    \label{fig:lin-sd-log}
\end{figure}

\subsection{\gls{cc} vs. \derm{} with a neural network}\label{cc-on-nn}

With $d=100$ and $B=10$ in \cref{eq:sim-example}, we train a two layer neural network on $3000$ samples from the training distribution.
The two layer neural network has a $200$ unit hidden layer that outputs a scalar.
\Cref{fig:nonlin-erm-failure} shows that a neural network trained via \derm{} fails to cross $50\%$ test accuracy even after $40,000$ epochs, while achieving less than $10^{-10}$ in training loss.

\begin{figure}[t]
\small
\centering
\begin{subfigure}[b]{0.6\textwidth}
\centering
\includegraphics[width=1.05\textwidth]{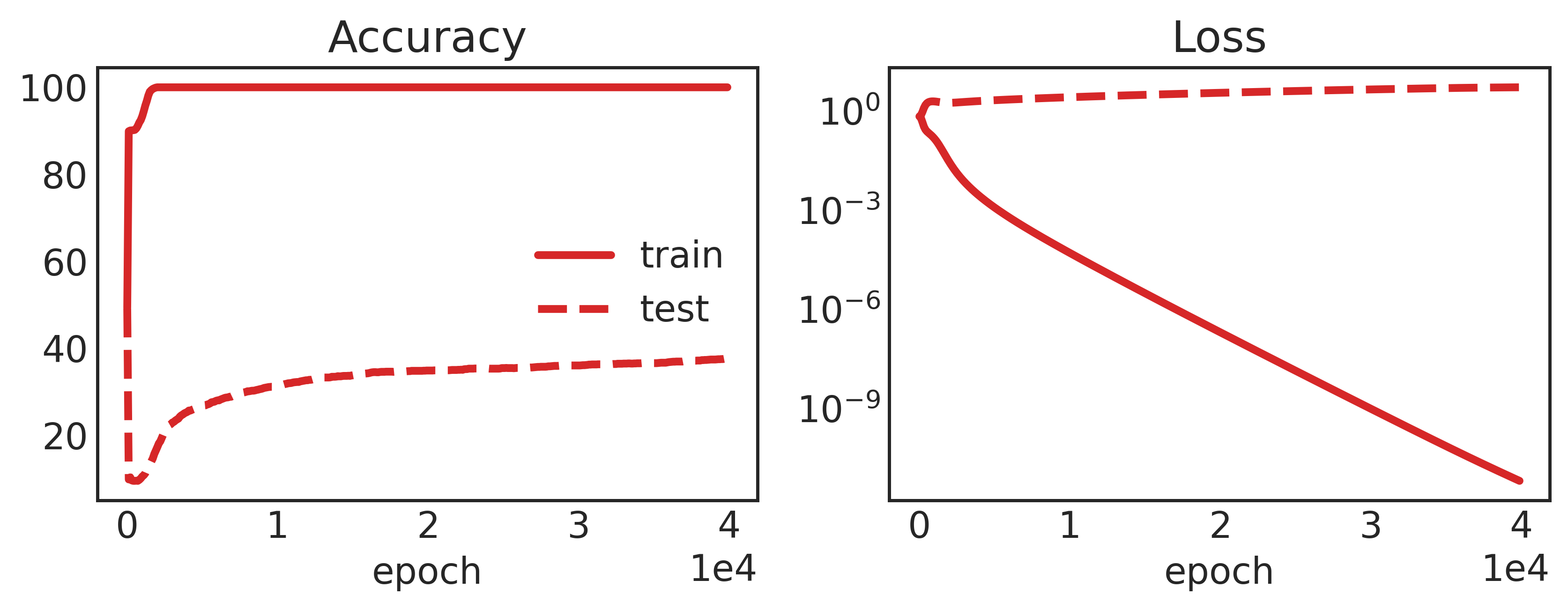}
         \caption{Average accuracy and loss curves.}
         \label{fig:nonlin-sim-example}
     \end{subfigure}
\begin{subfigure}[b]{0.6\textwidth}
\centering
\includegraphics[width=1.06\textwidth]{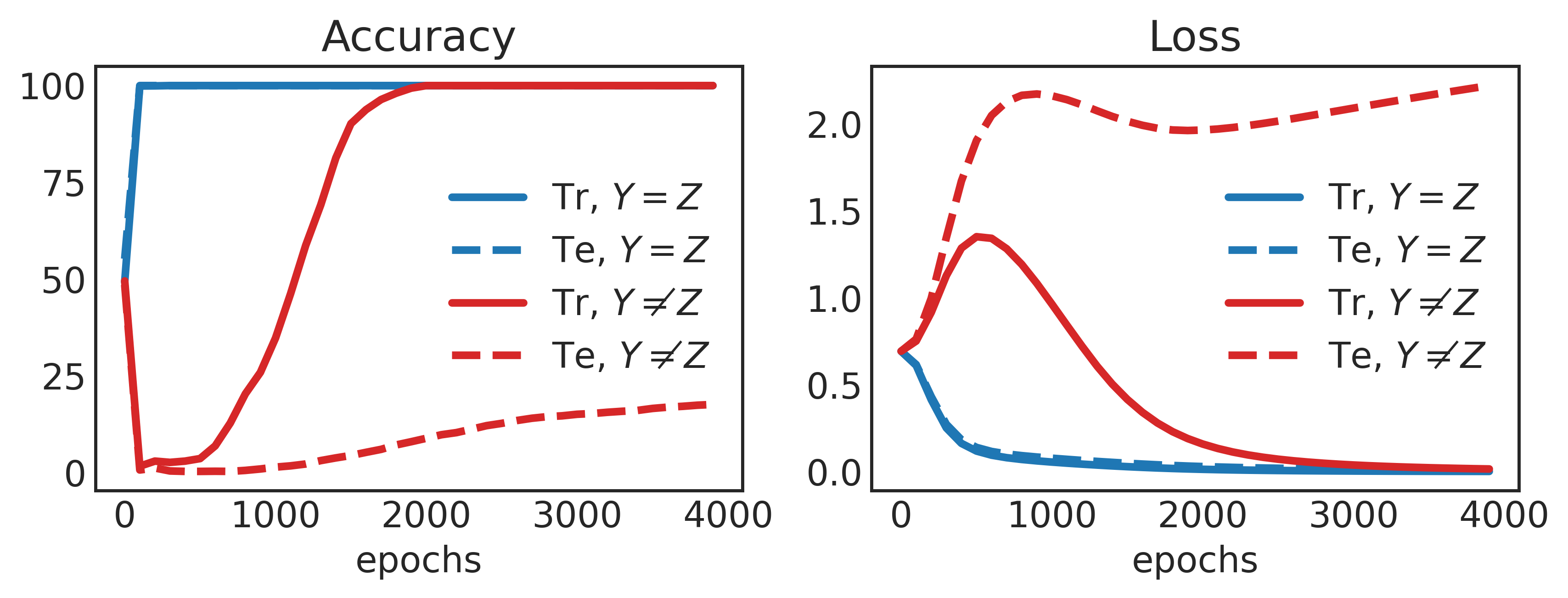}
     \caption{Accuracy and loss on shortcut and leftover groups.}
\label{fig:nonlin-sim-example-pergroup}
 \end{subfigure}
\caption{
\small
Training a two-layer neural network with \derm{} on data from \cref{eq:sim-example}.
The model achieves $100\%$ train accuracy but $<40\%$ test accuracy even after $40,000$ epochs.
The plot below zooms in on the first $4000$ epochs and shows that the model drives down loss on the test shortcut  groups but not on the test leftover group. This shows that the model uses the shortcut to classify the shortcut group and noise for the leftover.
}
	\label{fig:nonlin-erm-failure}
 \vspace{-10pt}
\end{figure}

In \cref{fig:sigstitch}, we compare \derm{} to \stitch{}.
In \cref{fig:sd} and \cref{fig:sd-log}, compare \gls{sd} and \textsc{marg-log} respectively to \derm{}.
The left panel of all figures shows that \gls{cc} achieves better test accuracy than \derm{}, while the right most panel shows that the test loss is better on the leftover group using \gls{cc}. 
Finally, the middle panel shows the effect of controlling margins in training; namely, the margins on the training data do not go to $\infty$, evidenced by the training loss being bounded away from $0$.

\begin{figure}[ht]
    \centering
    \includegraphics[width=0.9\textwidth]{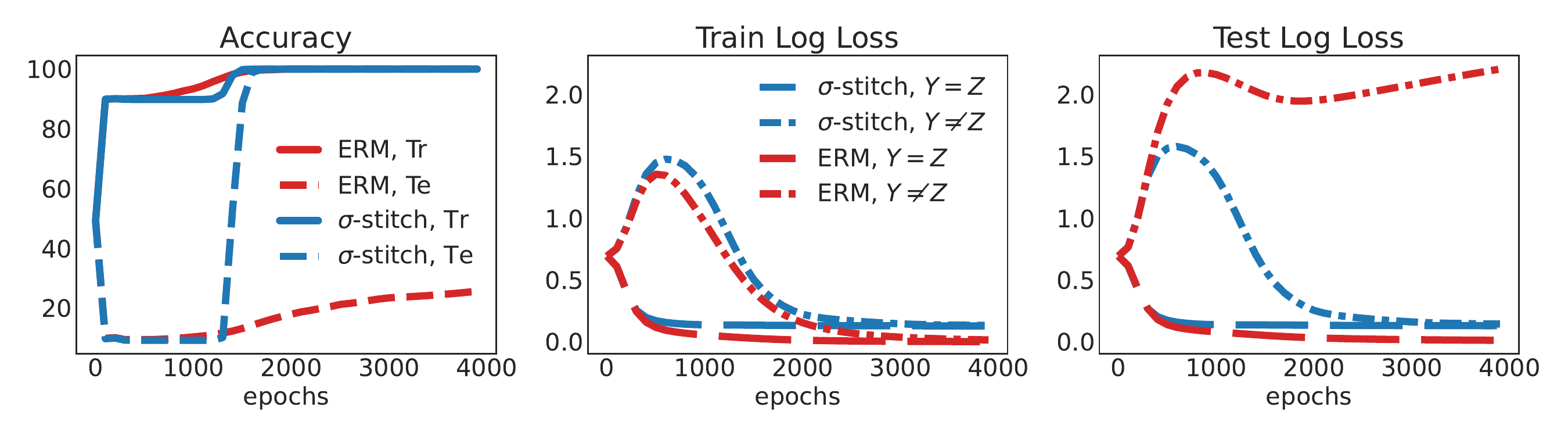}
    \caption{
    A neural network trained with \stitch{} depend on the perfect stable feature to achieve perfect test accuracy, unlike \derm{}.
The middle panel shows that \stitch{} does not let the loss on the training shortcut group to go to zero, unlike \derm{}, and the right panel shows the test leftover group loss is better.
    }
    \label{fig:sigstitch}
\end{figure}

\begin{figure}[ht]
    \centering
        \includegraphics[width=0.9\textwidth]{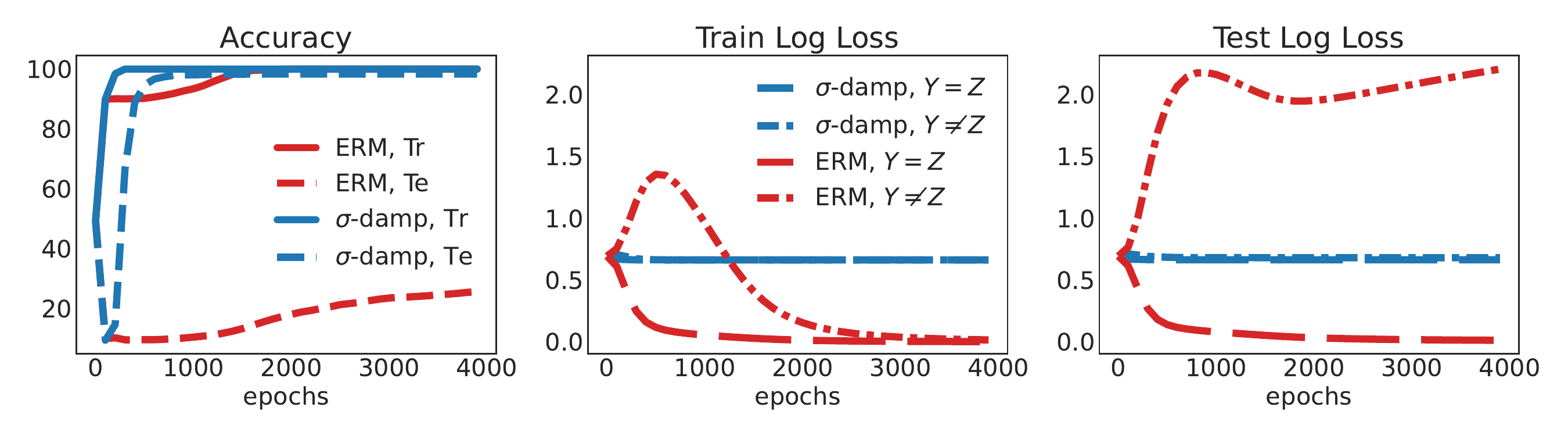}
    \caption{
A neural network trained with \sigdamp{} depend on the perfect stable feature to achieve perfect test accuracy whereas \derm{} performs worse than random chance.
The middle panel shows that \sigdamp{} does not let the loss on the training shortcut group to go to zero, unlike vanilla \derm{}, and the right panel shows the test-loss is better for the leftover group.
    }
    \label{fig:sigdamp}
\end{figure}

\begin{figure}[ht]
    \centering
        \includegraphics[width=0.9\textwidth]{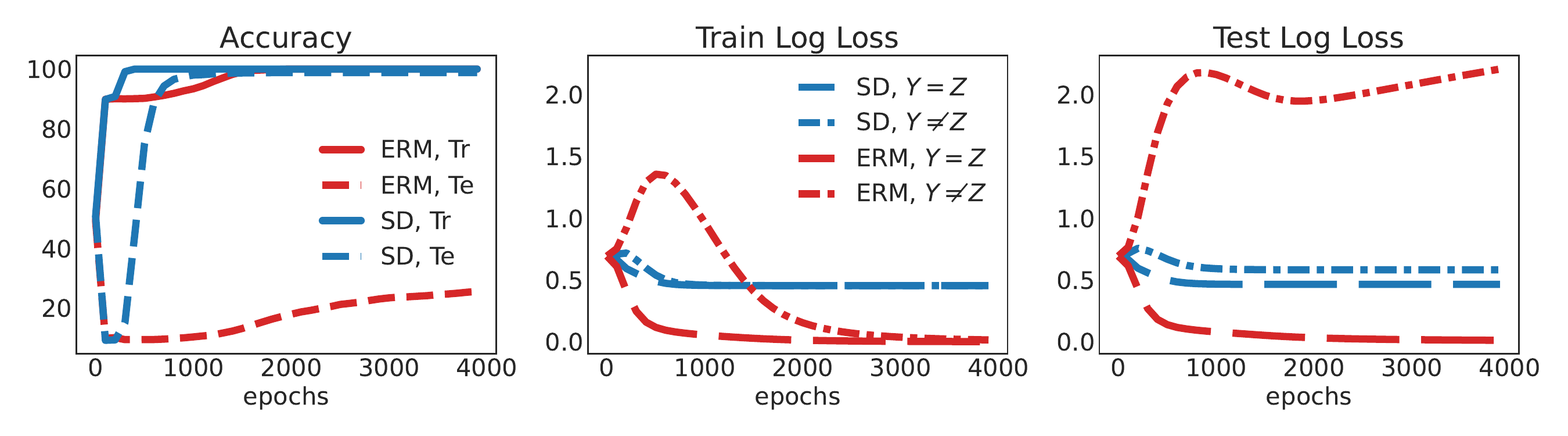}
    \caption{
A neural network trained with \gls{sd} depend on the perfect stable feature to achieve perfect test accuracy whereas \derm{} performs worse than random chance.
The middle panel shows that \gls{sd} does not let the loss on the training shortcut group to go to zero, unlike vanilla \derm{}, and the right panel shows the test-loss is better for the leftover group.
    }
    \label{fig:sd}
    \vspace{-10pt}
\end{figure}

\begin{figure}[ht]
\centering
    \includegraphics[width=0.9\textwidth]{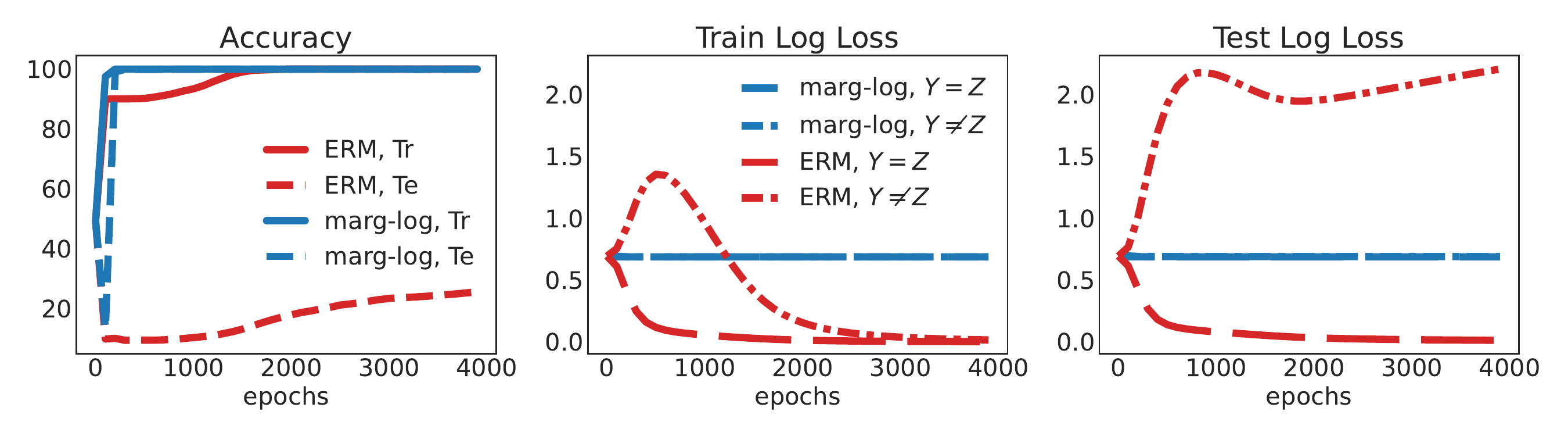}
    \caption{
A neural network trained with \textsc{marg-log} depend on the perfect stable feature to achieve perfect test accuracy whereas \derm{} performs worse than random chance.
The middle panel shows that \textsc{marg-log} does not let the loss on the training shortcut group to go to zero, unlike \derm{}, and the right panel shows the test-loss is better for the leftover group.
    }
    \label{fig:sd-log}
\end{figure}

\newacronym{gs}{GS}{gradient starvation}

\subsection{Spectral decoupling for a linear model on the linear \gls{dgp} in \cref{eq:sim-example}.}\label{appsec:spectral-decoupling}

We first show that a linear classifier trained with \gls{sd} achieves $100\%$ test accuracy while \derm{} performs worse than chance on the test data; so, \gls{sd} builds models with more dependence on the stable perfect feature, compared to \gls{erm}.
Next, we outline the assumptions for the \gls{gs} regime from \citet{pezeshki2021gradient} and then instantiate it for a linear model under the data generating process in 
\cref{eq:sim-example}, showing that the assumptions for the GS-regime are violated.

\Cref{fig:lin-sd} shows the results of training a linear model with \gls{sd} on training data of size $1000$ sampled as per \cref{eq:sim-example} from $p_{\rho=0.9}$ with $d=300$; the test data also has a $1000$ samples but comes from $p_{\rho=0.1}$.
\Cref{fig:lin-sd} shows that \gls{sd} builds models with improved dependence on the perfect stable feature, as compared to \gls{erm}, to achieve $100\%$ test accuracy.

\newacronym{ntk}{ntk}{neural-tangent-kernel}
\newacronym{ntrf}{ntrf}{neural-tangent-random-feature}
\newacronym{svd}{svd}{singular value decomposition}

\subsubsection{The linear example in \Cref{eq:sim-example} violates the gradient starvation regime.}

\paragraph{Background on \cite{pezeshki2021gradient}.}
With the aim of explaining why \gls{erm}-trained neural networks depend more on one feature over a more informative one, 
\citet{pezeshki2021gradient} derive solutions to $\ell_2$-regularized logistic regression in the \gls{ntk}; they let the regularization coefficient be small enough for the regularized solution to be similar in direction to the unregularized solution.
Given $n$ samples $\mby^i, \mbx^i$, let $\mbY$ be a diagonal matrix with the labels on its diagonal, $\mbX$ be a matrix with $\mbx^i$ as its rows, and $\hat{\mby}(\mbX, \theta) = \ftheta(\mbX)$ be the $n$-dimensional vector of function outputs where each element is $\hat{\mby}^i = \ftheta(\mbx^i)$.
In gradient-based training in the \gls{ntk} regime, the vector of function outputs of the network with parameters $\theta$ can be approximated as $\hat{\mby} = \Phi_0 \theta$, where $\Phi_0$ is the \gls{ntrf} matrix at initialization: \[\Phi_0 = \frac{\partial \hat{\mby}(\mbX, \theta_0)}{\partial \theta_0}\]

To define the features, the strength (margin) of each feature, and how features appear in each sample, \cite{pezeshki2021gradient} compute the \gls{svd} of the \gls{ntrf} $\Phi_0$ multiplied by the diagonal-label matrix $\mbY$:
\begin{align}
	\mbY \Phi_0 = \mbU \mbS \mbV^\top.
\end{align}
The rows of $\mbV$ are features, the diagonal elements of $\mbS$ are the strengths of each feature and the $i$th row of $\mbU$ denotes how each feature appears in the \gls{ntrf} representation of the $i$th sample.
\glsreset{gs}

To study issues with the solution to $\ell_2$-regularized logistic regression, \citet{pezeshki2021gradient} define the \gls{gs} regime.
Under the \gls{gs} regime, they assume $\mbU$ is a perturbed identity matrix that is also unitary: for a small constant $\delta <<1$, such a matrix has all diagonal elements $\sqrt{1-\delta^2}$ and the rest of the elements are of the order $\delta$ such that the rows have unit $\ell_2$-norm.

\allowdisplaybreaks

\paragraph{The \gls{gs} regime is violated in \cref{eq:sim-example}.}

When $\ftheta$ is linear, $\ftheta(\mbx) = \theta^\top \mbx$, the \gls{ntrf} matrix is
\[\frac{\partial \hat{\mby}(\mbX, \theta_0)}{\partial \theta_0} = \frac{\partial\mbX \theta_0}{\partial \theta_0} = \mbX.\]
In this case, let us look at an implication of $\mbU$ being a perturbed identity matrix that is also unitary, as \citet{pezeshki2021gradient} assume.
With $(\mbu^i)^\top$ as the $i$th row of $\mbU$, the transpose of $i$th sample can be written as $(\mbx^i)^\top=(\mbu^i)^\top\mbS\mbV$.
\cite{pezeshki2021gradient} assume that $\delta<<1$ in that the off-diagonal terms of $\mbU$ are small  perturbations such that off-diagonal terms of $\mbU (\mbS^2 + \lambda\mbI ) \mbU^\top$  have magnitude much smaller than $1$, meaning that the terms $|(\mbu^i)^\top \mbS^2 (\mbu^j) + \lambda| << 1$ for $i\not=j$ and positive and small $\lambda<<1$.

Then, 
\begin{align}
	|\mby^i\mby^j(\mbx^i)^\top\mbx^j| & = 	|(\mbx^i)^\top\mbx^j| 
\\
		& = |(\mbu^i)^\top\mbS\mbV^\top \mbV \mbS \mbu^j|
\\
	 & = |(\mbu^i)^\top\mbS^2\mbu^j|
\\
	& << 1
\end{align}
In words, this means that any two samples $\mbx^i, \mbx^j$ are nearly orthogonal.
Now, for samples from \cref{eq:sim-example}, for any $i,j$ such that $\mbz^j = \mbz^i$ and $\mby^i= \mby^j$,
\begin{align}
	\left|(\mbx^i)^\top\mbx^j\right| & = \left|B^2 \mbz^i\mbz^j + \mby^i\mby^j +  (\mbdelta^i)^\top\mbdelta^j\right| 
 \geq  |100 + 1 + (\mbdelta^i)^\top\mbdelta^j|
\end{align}
As $\mbdelta$ are isotropic Gaussian vectors, around half the pairs $i,j$ will have $(\mbdelta^i)^\top\mbdelta^j > 0$ meaning $\left|(\mbx^i)^\top\mbx^j\right| > 101$.
This lower bound implies that $\mbU$ is not a perturbed identity matrix for samples from \cref{eq:sim-example}.
This violates the setup of the gradient starvation regime from \citep{pezeshki2021gradient}.

Thus, the linear \gls{dgp} in \cref{eq:sim-example} does not satisfy the conditions for the \gls{gs} regime that is proposed in \citep{pezeshki2021gradient}.
The \gls{gs} regime blames the coupled learning dynamics for the different features as the cause for \derm{}-trained models depending more on the less informative feature.
\citet{pezeshki2021gradient} derive \glsreset{sd}\gls{sd} to avoid coupling the training dynamics, which in turn can improve a model's dependence on the perfect feature.
\gls{sd} adds a penalty to the function outputs which \cite{pezeshki2021gradient} show decouples training dynamics for the different features as defined by the \gls{ntrf} matrix:
\[\ell_{\text{\gls{sd}}}(\mby, \ftheta(\mbx)) = \log(1 + \exp(\mby\ftheta)) + \lambda|\ftheta(\mbx)|^2\]
As \cref{eq:sim-example} lies outside the \gls{gs} regime, the success of \gls{sd} on data from \cref{eq:sim-example} cannot be explained as a consequence of avoiding the coupled training dynamics in the \gls{gs} regime \citet{pezeshki2021gradient}.
However, looking at \gls{sd} as \gls{cc}, the success of \gls{sd}, as in \cref{fig:lin-sd}, is explained as a consequence encouraging uniform margins.

\begin{figure}[t]
\centering
	\includegraphics[width=0.85\textwidth]{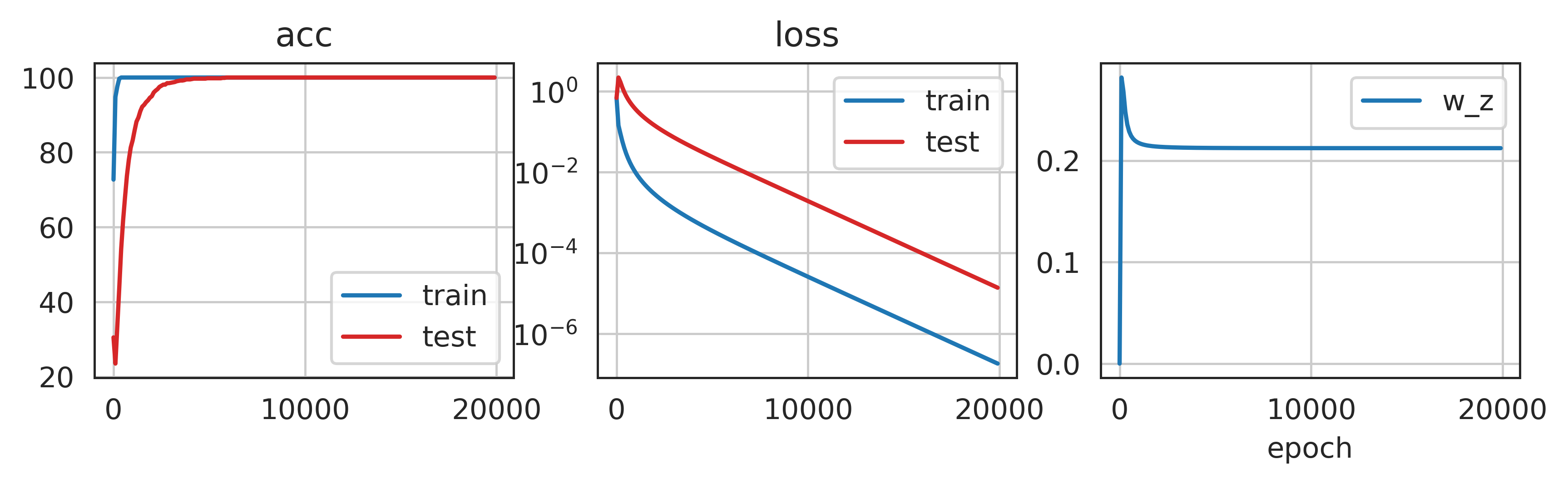}
\caption{
\small
With $d=200$ and $n=1000$, a linear classifier can still depend on the shortcut feature and achieve $100\%$ test accuracy.
\citet{nagarajan2020understanding} consider linearly separable data and formalize geometric properties of the data that make max-margin classifiers give non-zero weight to the shortcut feature $(\mbw_z > 0)$.
In their example, it is unclear when $\mbw_z > 0$ leads to poor accuracy in the leftover group because \citet{nagarajan2020understanding} do not separate the model’s dependence on the stable feature from the dependence on noise.
The example here gives an example where $\mbw_z > 0$ but test accuracy is $100\%$.
demonstrating that guarantees on test leftover group error require comparing $\mbw_y$ and $\mbw_z$; the condition $\mbw_z>0$ alone is insufficient. 
}
\label{fig:wz-pos-case}
\end{figure}

\paragraph{An example of perfect test accuracy even with dependence on the shortcut.}
In \cref{fig:wz-pos-case}, we train a linear model with \derm{} on data from \cref{eq:sim-example}, showing that even when shortcut dependence is non-zero, test leftover group accuracy can be $100\%$.
\citet{nagarajan2020understanding} consider linearly separable data and formalize geometric properties of the data that make max-margin classifiers give non-zero weight to the shortcut feature $(\mbw_z > 0)$.
In their example, it is unclear when $\mbw_z > 0$ leads to poor accuracy in the leftover group because \citet{nagarajan2020understanding} do not separate the model’s dependence on the stable feature from the dependence on noise.
The example in \cref{fig:wz-pos-case} gives an example where $\mbw_z > 0$ but test accuracy is $100\%$, demonstrating that guarantees on test leftover group error require comparing $\mbw_y$ and $\mbw_z$; the condition $\mbw_z>0$ alone is insufficient. 
In contrast, theorem 1 characterizes cases where leftover group accuracy is worse than random even without overparameterization.

\subsection{Experimental details}\label{appsec:exps}

\subsubsection{Background on \glsreset{jtt}\gls{jtt} and \glsreset{cnc}\gls{cnc}}\label{appsec:exps-twostage}

\paragraph{\gls{jtt}}
\citet{liu2021just} develop \gls{jtt} with the aim of building models robust to subgroup shift, where the mass of disjoint subgroups of the data changes between training and test times.
To work without training group annotations, \gls{jtt} assumes \gls{erm} builds models with high worst-group error.
With this assumption, \gls{jtt} first builds an "identification" model via \gls{erm} to pick out samples that are misclassified due to model's dependence on the shortcut.
Then, \gls{jtt} trains a second model again via \gls{erm} on the same training data with the loss for the misclassified samples upweighted (by constant $\lambda$).
As \citet{liu2021just} point out, the number of epochs to train the identification model and the upweighting constant are hyperparameters that require tuning using group annotations.
As \citet{liu2021just,pmlr-v162-zhang22z} show that \gls{jtt} and \gls{cnc} outperforms \gls{lff} and other two-stage \sms{} (\citep{pmlr-v162-zhang22z}), so we do not compare against them.

\paragraph{\glsreset{cnc}\gls{cnc}}
In a fashion similar to \gls{jtt}, the first stage of \gls{cnc} is to train a model with regularized \gls{erm} to predict based on spurious attributes, i.e. shortcut features.
\citet{pmlr-v162-zhang22z} develop a contrastive loss to force the model to have similar representations across samples that share a label but come from different groups (approximately inferred by the first-stage \gls{erm} model).
Formally, the first-stage model is used to approximate the spurious attributes in one of two ways: 1) predict the label with the model, 2) cluster the representations into as many clusters as there are classes, and then use the cluster identity.
The latter technique was first proposed in \citep{sohoni2020no}.
For an anchor sample $(\mby^i, \mbx^i)$ of label $\mby=y$, positive samples $P_i$ are those than have the same label but have the predicted spurious attribute is a different value: $\hat{z}\not=y$.
Negatives $N_i$ are those that have a different label but the spurious attribution is the same: $\hat{z} = y$.
For a temperature parameter $\tau$ and representation function $r_\theta$, the per-sample contrastive loss for \gls{cnc} is:
\[\ell_{cont}(r_\theta, i) = \E_{\mbx^p \sim P_i} \left[- \log  \frac{\exp(\nicefrac{r_\theta(\mbx^i)^\top r_\theta(\mbx^p)}{\tau} )}{\sum_{n \in N_i} \exp\left(\nicefrac{r_\theta(\mbx^i)^\top r_\theta(\mbx^n)}{\tau}\right) + \sum_{p\in P_i} \exp\left(\nicefrac{r_\theta(\mbx^i)^\top r_\theta(\mbx^p) }{\tau}\right)}\right].\]

The samples $i$ are called \textit{anchors}.
For a scalar $\lambda$ to trade off between contrastive and predictive loss, the overall per-sample loss in the second-stage in \gls{cnc} is 
\[\lambda \ell_{cont}(r_\theta, i)  + (1-\lambda) \ell_{log-loss}(\mby^iw^\top r_\theta(\mbx^i)).\]

\paragraph{\Gls{cnc} uses hyperparameters informed by dataset-specific empirical results from prior work.}
The original implementation of \gls{cnc} from \citet{pmlr-v162-zhang22z} uses specific values of first-stage hyperparameters like weight decay and early stopping epoch for each dataset by using empirical results from prior work \citep{sagawa2019distributionally,liu2021just}.
The prior work finds weight-decay and early stopping epoch which lead \derm{} models to achieve low test worst-group accuracy, implying that the model depends on the spurious attribute.
This means the first-stage models built in \gls{cnc} are pre-selected to pay attention to the spurious attributes.
For example, \citep{pmlr-v162-zhang22z} point out that the first-stage model they use for Waterbirds predicts the spurious feature with an accuracy of $94.7\%$.

Without using dataset-specific empirical results from prior work, choosing \gls{lr} and \gls{wd} requires validating through the whole \gls{cnc} procedure.
We let \gls{cnc} use the same \gls{lr} and \gls{wd} for both stages and then validate the choice using  validation performance of the second-stage model. 
This choice of hyperparameter validation leads to a similar number of validation queries for all methods that mitigate shortcuts.

\subsubsection{Training details}\label{appsec:training-details}

\paragraph{Variants of \gls{cc} to handle label imbalance.}

The three datasets that we use in our experiments --- Waterbirds, CelebA, and Civilcomments --- all have an imbalanced (non-uniform) marginal distribution over the label; for each dataset, 
\[\max_{\text{class}\in \{-1,1\}} p(\mby=\text{class})>0.75.\]
When there is sufficiently large imbalance, restricting the margins on all samples could bias the training to reduce loss on samples in the most-frequent class first and overfit on the rest of the samples.
This could force a model to predict the most frequent class for all samples, resulting in high worst-group error.

To prevent such a failure mode, we follow \citep{pezeshki2021gradient} and define variants of \sigdamp{}, \textsc{marg-log}, and \stitch{} that have either 1) different maximum margins for different classes or 2) different per-class loss values for the same margin value.
Mechanically, these variants encourage uniform margins within each class, thus encouraging the model to rely less on the shortcut feature. We give the variants here for labels taking values in $\{-1,1\}$:

\begin{enumerate}
	\item With per-class temperatures $T_{-1},T_1>0$ the variant of \sigdamp{} is 
\begin{align*}
\text{with } & \ftheta = {w_f}^\top {r_\theta(\mbx)},
\\
& \ell_{\text{\sigdamp{}}}(\mby,\ftheta) =
\ell_{log}
    \left[
       T_{\mby}*1.278\mby 
        \ftheta
        \left( 
        1 -\sigma\left(1.278*{\mby\ftheta}\right)\right)
    \right]
\end{align*}
The $1.278$ comes in to make sure the maximum input to log-loss occurs at $\ftheta=1$.
However, due to the different temperatures $T_1\not=T_{-1}$, achieving the same margin on all samples produces lower loss on the class with the larger temperature.

\item
    With per-class temperatures $T_{-1},T_1>0$ the variant of \stitch{} is 
        \begin{align*}
            \text{with } & 
                \ftheta = {w_f}^\top {r_\theta(\mbx)}, \qquad
            \\
            & 
                \ell_{\text{\stitch}}(\mby\ftheta) =
             \ell_{log}\left(T_{\mby}\left[\quad\mathbf{1}[\mby\ftheta(\mbx) < 1]\times \mby\ftheta(\mbx) \right. + \left.\mathbf{1}[\mby\ftheta(\mbx) > 1]\times (2-\mby\ftheta(\mbx))\,\,\right]\right)
        \end{align*}
\item
With per-class function output targets $\gamma_{-1},\gamma_1>0$ the variant of \textsc{marg-log} is
\begin{align*}
\text{with } & \ftheta = {w_f}^\top {r_\theta(\mbx)}, \qquad 
\\
&\ell_{\textsc{marg-log}}(\mby\ftheta) =
 \ell_{log}(\mby\ftheta) + \lambda \log(1 + |\ftheta - \gamma_{\mby}|^2).
\end{align*}

\end{enumerate}

These per-class variants are only for training; at test time, the predicted label is $\texttt{sign}(\ftheta)$.

\paragraph{Details of the vision and language experiments.}

We use the same datasets from \cite{liu2021just}, downloaded via the scripts in the code from \citep{idrissi2021simple};
see \citep{idrissi2021simple} for sample sizes and the group proportions.
For the vision datasets, we finetune a resnet50 from Imagenet-pretrained weights and for Civilcomments, we finetune a BERT model.\\

\glsreset{lr}\glsreset{wd}
\paragraph{Optimization details.}
For all methods and datasets, we tune over the following \gls{wd} parameters: $10^{-1},10^{-2},10^{-3},10^{-4}$
For the vision datasets, we tune \gls{lr} over $10^{-4},10^{-5}$ and for CivilComments, we tune over $10^{-5},10^{-6}$.
For CivilComments, we use the AdamW optimizer while for the vision datasets, we use the Adam optimizer; these are the standard optimizers for their respective tasks \citep{puli2022outofdistribution,gulrajani2020search}.
We use a batch size of $128$ for both CelebA and Waterbirds, and train for $20$ and $100$ epochs respectively. 
For CivilComments we train for $10$ epochs with a batch size of $16$.\\

\begin{figure}[t]
\vspace{-20pt}
\centering
	\includegraphics[width=0.9\textwidth]{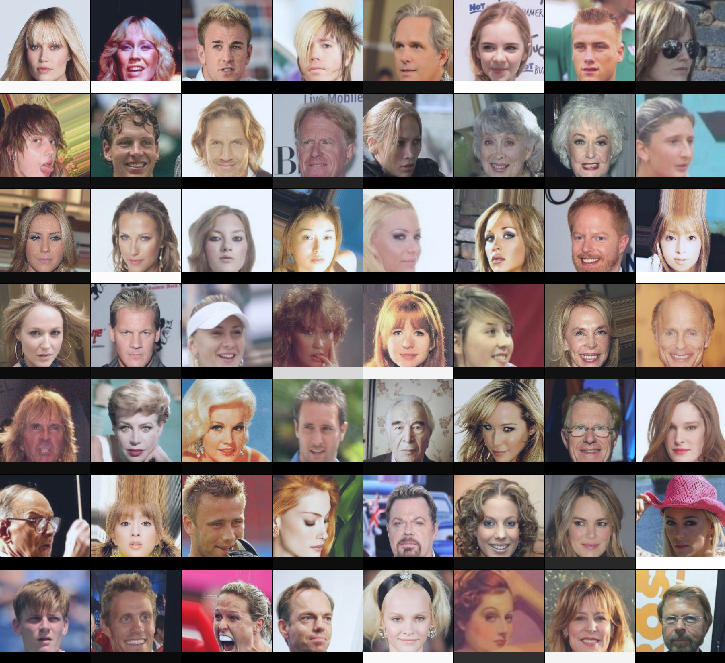}
	\caption{Images mis-classified by a model trained on CelebA data with equal group sizes, i.e. without a shortcut. Samples with blonde as the true label have a white strip at the bottom while samples with non-blonde as the true label have a black strip at the bottom. The figure demonstrates that many images with blonde people in the image have the non-blonde label, thus demonstrating label noise. 
 For example, see a blonde man in the first row that is labelled non-blonde and a non-blonde lady in the third row that is lablled blonde.
 Yet, \gls{cc} improves over \gls{erm} for many \gls{lr} and \gls{wd} combinations; see \cref{eq:sensitivity}.}
	\label{fig:label-noise}
\end{figure}

\paragraph{Per-method Hyperparameters.}

Like in \citep{pezeshki2021gradient}, the per-class temperatures $T_{-1}, T_1$ for \sigdamp{} and \stitch{}, 
and the function output targets $\gamma_{-1}, \gamma_{1}$ for \textsc{marg-log} are hyperparameters that we tune using the worst-group accuracy or label-balanced average accuracy computed on the validation dataset, averaged over $2$ seeds.
\begin{enumerate}
	\item For \stitch{}, we select from $T_{-1} \in \{1, 2\}$ and $T_1 \in \{2, 4, 8, 12\}$ such that $T_1 > T_{-1}$.
 
\item  For \sigdamp{}, we search over $T_{-1} \in \{1, 2\}$ and $T_1 \in \{2, 4\}$ such that $T_1 > T_{-1}$.

\item  For \gls{sd} and \textsc{marg-log}, we search over $\gamma_{-1} \in \{-1, 0, 1\}$ and $\gamma_1 \in \{1, 2, 2.5, 3\}$ for the image datasets and $\gamma_1 \in \{1, 2\}$ for the text dataset, and the penalty coefficient is set to be $\lambda=0.1$

\item  For \gls{jtt}, we search over the following parameters: the number of epochs $T\in \{1,2\}$ for CelebA and Civilcomments and $T\in\{10,20,30\}$ for Waterbirds, and the upweighting constant $\lambda\in\{20,50,100\}$ for the vision datasets and $\lambda\in\{4,5,6\}$ for Civilcomments.

\item For \gls{cnc}, we search over the same hyperparameter as \citep{pmlr-v162-zhang22z}  : the temperature in $\tau \in \{0.05, 0.1\}$, the contrastive weight $\lambda \in \{0.5, 0.75\}$, and the gradient accumulation steps $s\in\{32,64\}$.
For the language task in Civilcomments, we also try one additional $s=128$.

\end{enumerate}

\subsection{\Gls{cc} improves over \derm{} on CelebA even without the stable feature being perfect.}
CelebA is a perception task in that the stable feature is the color of the hair in the image.
But unlike the synthetic experiments, \gls{cc} does not achieve a $100\%$ test accuracy on CelebA.
We investigated this and found that CelebA in fact has some label noise.

We trained a model via the \gls{cc} method \sigdamp{} on CelebA data with no shortcut; this data is constructed by subsampling the groups to all equal size, ($5000$ samples).
This achieves a test worst-group accuracy of $89\%$.
We visualized the images that were misclassified by this model and found that many images with blond-haired people were classified as having non-blonde hair.
\Cref{fig:label-noise} shows $56$ misclassified images where samples with blonde as the true label have a white strip at the bottom while samples with non-blonde as the true label have a black strip at the bottom.
The figure shows that images with blonde people can have the non-blonde label, thus demonstrating label noise.
Thus, \gls{cc} improves over \gls{erm} even on datasets like CelebA where the stable features do not determine the label.

\begin{figure}[ht]
\centering
	\includegraphics[width=0.6\textwidth]{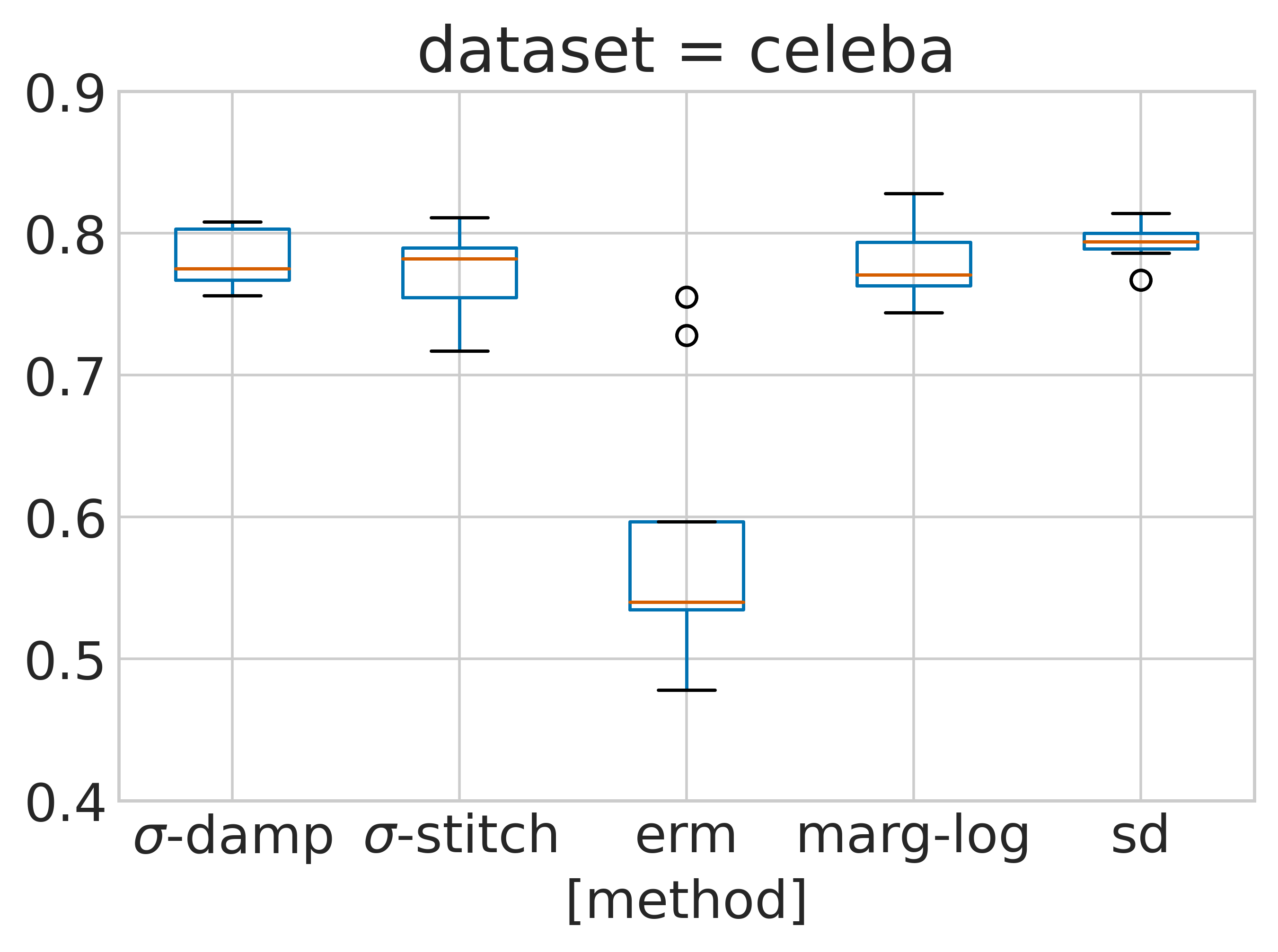}
\caption{Test worst-group accuracy on CelebA of \derm{} and \gls{cc} for different values of \gls{lr} and \gls{wd}. \Derm{}'s performance changes more with \gls{lr} and \gls{wd} than \gls{cc}, which shows that \derm{} is more sensitive than \gls{cc}.
Only 2 combinations of \gls{lr} and \gls{wd} improve  \gls{erm} beyond a test worst-group accuracy of $60\%$, while every \gls{cc} method achieves more than $70\%$ test worst-group accuracy for every combination of \gls{lr} and \gls{wd}.
}
\label{eq:sensitivity}
\end{figure}

\subsection{Sensitivity of \gls{erm} and \gls{cc} to varying \gls{lr} and \gls{wd}}\label{appsec:additional}
In \cref{eq:sensitivity}, we compare the test worst-group accuracy of \derm{} and \gls{cc} on CelebA, for different values of \gls{lr} and \gls{wd}.
There are $8$ combinations of \gls{lr} and \gls{wd} for which \gls{erm} is run.
For each combination of \gls{lr} and \gls{wd}, the hyperparameters of the \gls{cc} method (values of $\lambda, T, v$) are tuned using validation group annotations, and the test worst-group accuracy corresponds to the best method hyperparameters.
\Derm{}'s performance changes more with \gls{lr} and \gls{wd} than \gls{cc}, which shows that \derm{} is more sensitive than \gls{cc}.
Only 2 combinations of \gls{lr} and \gls{wd} improve  \gls{erm} beyond a test worst-group accuracy of $60\%$, while every \gls{cc} method achieves more than $70\%$ test worst-group accuracy for every combination of \gls{lr} and \gls{wd}.

\end{document}